\theoremstyle{plain}
\newtheorem{theorem}{Theorem}[section]
\newtheorem{lemma}[theorem]{Lemma}
\theoremstyle{definition}
\newtheorem{definition}[theorem]{Definition}
\newtheorem{assumption}[theorem]{Assumption}
\theoremstyle{remark}
\definecolor{mygreen}{HTML}{6a7045}
\icmltitlerunning{Doubly Robust Causal Effect Estimation under Networked Interference via Targeted Learning}
\begin{document}

\twocolumn[
\icmltitle{Doubly Robust Causal Effect Estimation under Networked Interference via Targeted Learning}




\begin{icmlauthorlist}
\icmlauthor{Weilin Chen}{GDUT}
\icmlauthor{Ruichu Cai}{GDUT,PZLab}
\icmlauthor{Zeqin Yang}{GDUT}
\icmlauthor{Jie Qiao}{GDUT}
\icmlauthor{Yuguang Yan}{GDUT}
\icmlauthor{Zijian Li}{MZUAI}
\icmlauthor{Zhifeng Hao}{STU}

\end{icmlauthorlist}

\icmlaffiliation{GDUT}{School of Computer Science, Guangdong University of Technology, Guangzhou, China}
\icmlaffiliation{PZLab}{Pazhou Laboratory (Huangpu), Guangzhou, China}
\icmlaffiliation{MZUAI}{Mohamed bin Zayed University of Artificial Intelligence, Abu Dhabi, UAE}
\icmlaffiliation{STU}{College of Science, Shantou University, Shantou, China}

\icmlcorrespondingauthor{Ruichu Cai}{cairuichu@gmail.com}

\icmlkeywords{Causal Effect, Interference, Doubly Robust, Targeted Learning}

\vskip 0.3in
]



\printAffiliationsAndNotice{}  

\begin{abstract}

Causal effect estimation under networked interference is an important but challenging problem.
Available parametric methods are limited in their model space, while previous semiparametric methods, e.g., leveraging neural networks to fit only one single nuisance function, may still encounter misspecification problems under networked interference without appropriate assumptions on the data generation process. 
To mitigate bias stemming from misspecification, we propose a novel doubly robust causal effect estimator under networked interference, by adapting the targeted learning technique to the training of neural networks. 
Specifically, we generalize the targeted learning technique into the networked interference setting and establish the condition under which an estimator achieves double robustness. 
Based on the condition, we devise an end-to-end causal effect estimator by transforming the identified theoretical condition into a targeted loss. 
Moreover, we provide a theoretical analysis of our designed estimator, revealing a faster convergence rate compared to a single nuisance model.
Extensive experimental results on two real-world networks with semisynthetic data demonstrate the effectiveness of our proposed estimators.

\end{abstract}

\section{Introduction}
\label{intro}

Estimating causal effects under networked interference has drawn increasing attention across various domains such as human ecology \cite{ferraro2019causal}, epidemiology \cite{barkley2020causal}, advertisement \cite{parshakov2020spillover}, and so on. Networked interference arises when interconnected units impact each other, leading to a violation of the Stable Unit Treatment Value Assumption (SUTVA). For example, as shown in Figure \ref{fig: intro example}, in epidemiology, preventive measures like vaccination can indirectly protect unvaccinated units due to the vaccinated individuals surrounding them. Consequently, the infection risk for a unit depends not only on its vaccination status but also on the vaccination statuses of neighboring units. 
Such networked interference breaks the SUTVA and leads to bias in traditional causal inference \cite{forastiere2021identification}, making traditional estimand no longer applicable.
To model the interference between units, different kinds of estimands can be defined, i.e., \textit{main effects} (effects of units' own treatments), \textit{spillover effects} (effects of units' treatments on other units), and \textit{total effects} (combined main and spillover effects).

\begin{figure}[!t]
    \centering
    \includegraphics[width=0.48\textwidth]{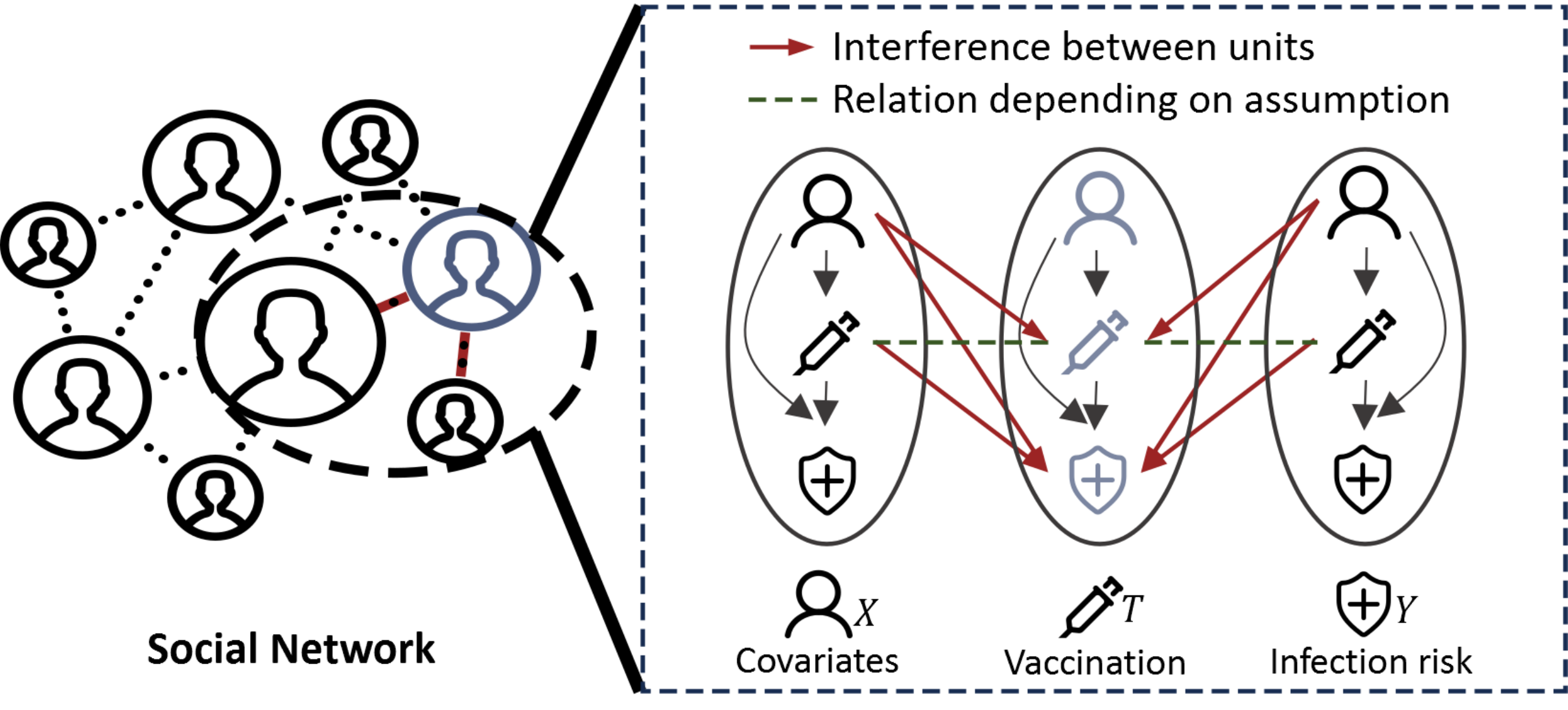}
    \caption{
    A toy example showing networked interference between units. The solid red and dashed green arrows, i.e., \textcolor{purple}{$\boldsymbol \rightarrow$} and \textcolor{mygreen}{$\boldsymbol \dashrightarrow$}, mean the interaction from one to another unit. Whether the dashed green arrow \textcolor{mygreen}{$\boldsymbol \dashrightarrow$} exists depends on the assumption on DGP.} 
    \label{fig: example of dr} \label{fig: intro example}
\end{figure}

To estimate causal effects from observational networked data, a series of works have been proposed to remove the complex confounding bias introduced by networked interference.
One standard way is utilizing the parametric regression including the neighbors' covariate and treatment to model the nuisance function. In particular, \citet{liu2016inverse} propose covariate-adjustment methods using parametric regression on propensity scores for causal effect estimation. 
However, the parametric models are fragile and would yield bias once the model is misspecified, i.e., the designed model mismatches the data generation process (DGP). 
One of the remedies is to leverage the semiparametric regressions while exploring different assumptions on DGP. 
Specifically, by assuming that the networked interference is transmitted only through the neighbors' statistic, \citet{chin2019regression, ma2021causal, cai2023generalization} utilize different semiparametric regression to construct conditional outcome estimators for effect estimation. By assuming the conditional independence between the unit's treatment and neighbors' treatment, \citet{forastiere2021identification} propose the joint generalized propensity score and devise a propensity score-based method for effect estimation under networked interference.

However, under networked interference, existing semiparametric estimators still encounter model misspecification due to inappropriate assumptions on the networked DGP, leading to biased effect estimation.
Take Figure \ref{fig: intro example} as an example, let $t,x,t_\mathcal N,x_\mathcal N$ be the treatment, the covariate, the neighbors' treatment, and the neighbors' covariate, respectively. 
Due to the interference between units, the generalized propensity model $p(t,t_\mathcal N|x,x_\mathcal N)$ could have different alternative decomposition forms for the sake of estimating, e.g., $p(t|x,x_\mathcal N)p(t_\mathcal N|x,x_\mathcal N)$ or $p(t|x,x_\mathcal N)p(t_\mathcal N|t,x,x_\mathcal N)$ depending on whether the assumption $t \Vbar t_\mathcal N | x, x_\mathcal N$ holds. 
However, if the chosen decomposition mismatches the ground true DGP, the estimators will become inconsistent, resulting in biased estimated effects. 
To reduce the bias caused by misspecification, one attractive solution involves designing a doubly robust (DR) estimator via targeted learning.
In this way, the consistent estimator can be achieved when one of the nuisance models is consistent.
An essential question is raised: How can we design such an estimator to achieve lower bias under networked interference?

To answer the above question, we propose a doubly robust estimator, called TNet, for estimating causal effects under networked interference via targeted learning. 
First, under interference, to achieve double robustness, the traditional targeted technique \cite{vanderLaanRubin+2006} can not be directly used since the data are no longer independent and identically distributed (i.i.d.).
Second, considering the traditional three-step targeted estimator, we aim to adapt the targeted technique to our end-to-end neural network-based estimator, making it achieve double robustness and lower bias.
Overall, we answer three specified questions: 
1. What are good estimators under networked interference (see Section \ref{sec: what})? 
2. How can we design targeted estimators with lower bias and double robustness property under networked interference (see Section \ref{sec: how})? 
3. How fast a convergence rate can our designed estimator achieve (see Section \ref{sec: Analysis})?
Our solution and contribution can be summarized as follows:
\begin{itemize}
    \item We develop an end-to-end effect estimator, by transforming the established theoretical condition into a targeted loss function, thus ensuring that our estimator maintains the attribute of double robustness in the presence of networked interference.
    \item We provide a theoretical analysis of the designed estimator, revealing its advantages in terms of convergence rate under mild assumptions.
    \item The extensive experimental results on two real-world networks demonstrate the correctness of our theory and the effectiveness of our model.
\end{itemize}

\section{{Related Works}}
\label{related works}
\textbf{Causal inference} has been studied in two languages: the graphical models \cite{pearl2009causality} and the potential outcome framework \cite{rubin1974estimating}. The most related method is the propensity score method in the potential outcome framework, e.g., IPW method \cite{IPWrosenbaum1983central, IPWrosenbaum1987model}, which is widely applied to many scenarios \cite{rosenbaum1985constructing, li2018balancing, CAI2024106336}. There are also many outcome regression models, including meta-learners \cite{kunzel2019metalearners}, neural networks-based works \cite{johansson2016learning, assaad2021counterfactual}. By incorporating them, one can construct a doubly robust estimator \cite{robins1994estimation}, i.e., the effect estimator is consistent as either the propensity model or the outcome repression model is consistent. Our work can be seen as an extension of DR estimators to the networked interference scenarios.

\textbf{Causal inference under networked interference} has drawn increasing attention recently. \citet{liu2016inverse} extend the traditional propensity score to account for neighbors’ treatments and features and propose a generalized Inverse Probability Weighting (IPW) estimator. \citet{forastiere2021identification} define the joint propensity score and then propose a subclassification-based method. Drawing upon previous works, \citet{lee2021estimating} consider two IPW estimators and derive a closed-form estimator for the asymptotic variance. 
Based on the representation learning, \citet{ma2021causal} add neighborhood exposure and neighbors' features as additional input variables and applies HSIC to learn balanced representations. \citet{jiang2022estimating} use adversarial learning to learn balanced representations for better effect estimation. \citet{ma2022learning} propose a framework to learn causal effects on a hypergraph. \cite{cai2023generalization} propose a reweighted representation learning method to learn balanced representations.
Under networked interference, \citet{mcnealis2023doubly, liu2023nonparametric} propose an estimator to achieve DR property. Different from them, we adapt the targeted learning into our loss function, which might be more stable with the finite sample, and result in an end-to-end doubly robust estimator for causal effects under networked interference.

\textbf{Targeted maximum likelihood estimation} (TMLE) is a general framework to construct doubly robust, efficient, and substitution estimators \cite{vanderLaanRubin+2006,van2011targeted}. This technique is widely used in different settings, e.g., multiple time point interventions \cite{vanderLaanGruber}, longitudinal data \cite{Kreif2017Longitudinal}, cluster-level exposure \cite{Laura2019clster}, survival analysis \cite{targetedSurvival}. Moreover, designing targeted regularization is the most related issue. \citet{shi2019adapting} propose targeted regularization for binary treatment effect estimation. \citet{nie2020vcnet} generalize the targeted regularization for continuous treatment estimation, which can be seen as a counterpart of our work without interference. Different from these works, we generalize the TMLE to a targeted loss that can be easily adapted into the training of nuisance functions under networked interference and correspondingly propose an end-to-end causal effect estimator under networked interference.

\section{Notations, Assumptions, Esitimands}
\label{notations, assumptions, estimands}

In this section, we start with the notations used in this work. Let $X \in \mathcal{X}$ be the covariate. 
Let $T \in \{0,1\}$ denote a binary treatment, where $T=1$ indicates a unit receives the treatment (treated) and $T=0$ indicates a unit receives no treatment (control). 
Let $Y \in \mathcal{Y}$ be the outcome. 
Let lowercase letters (e.g., $x,y,t$) denote the value of random variables. 
Let lowercase letters with subscript $i$ denote the value of the specified $i$-th unit.
Thus, a network dataset is denoted as $D=(\{x_i,t_i,y_i\}_{i=1}^n,E)$, where $E$ denotes the adjacency matrix of network and $n$ is the total number of units. 
We also denote $n_1$ and $n_2$ as the total number of treated units and control units, and thus $n=n_1+n_2$.
We denote the set of first-order neighbors of $i$ as $\mathcal{N}_i$. 
We denote the treatment and feature vectors received by unit $i$'s neighbors as $t_{\mathcal N _i}$ and $x_{\mathcal N _i}$. 
Due to the presence of networked interference, a unit's potential outcome is influenced not only by its treatment but also by its neighbors' treatments, and thus the potential outcome is denoted by $y_i(t_i,t_{\mathcal N _i})$.
The observed outcome $y_i$ is known as the factual outcome, and the remaining potential outcomes are known as counterfactual outcomes. 

Further, following \citet{forastiere2021identification}, we assume that the dependence between the potential outcome and the neighbors' treatments is through a specified summary function $agg$: $\{0,1\}^{|\mathcal{N}_i|}\rightarrow [0,1]$, and let $z_i$ be the neighborhood exposure given by the summary function, i.e., $z_i=agg(t_{\mathcal N _i})$. 
We aggregate the information of the neighbors' treatments to obtain the neighborhood exposure by $z_i=\frac{\sum_{j\in \mathcal{N}_i}t_j}{|\mathcal{N}_i|}$. 
Therefore, the potential outcome $y_i(t_i,t_{\mathcal N _i}) $ can be denoted as $y_i(t_i,z_i)$, which means that under networked interference, each unit is affected by two kinds of treatments: the binary individual treatment $t_i$ and the continuous neighborhood exposure $z_i$.

Moreover, we denote stochastic boundedness with $O_p$ and convergence in probability with $o_p$. Denote $\tau$ as Rademacher random variables, and denote Rademacher complexity of a function class $\mathcal F : \mathcal{X} \rightarrow \mathbb R$ as  $\text{Rad}_n(\mathcal F)= \mathbb E (\sup _ {f \in \mathcal F} |\frac{1}{n} \Sigma_{i=1}^n \tau_i f(X_i) |)$.  Given two functions $f_1,f_2: \mathcal{X} \rightarrow \mathbb R$, we define $\lVert f_1 -f_2 \rVert_\infty = \sup_{x\in \mathcal{X}} |f_1(x)-f_2(x)| $. For a function class $\mathcal{F}$, we denote $\lVert \mathcal{F} \rVert_{\infty} = \sup _{f\in \mathcal{F}} \lVert f \rVert _\infty$. Let $\mu,g$ be the conditional outcome function and generalized propensity score. denote $\hat{\circ}$ as the minimizer of loss function, i.e., $\hat{\mu}, \hat{g}, \hat{\epsilon}$ is the minimizer of $\mathcal{L}$ (see Section \ref{sec: how}). Denote $ \circ ^ {NN}$ as the designed estimator in TNet, e.g., $g^{NN}, \mu^{NN}$. Denote  $\overline \circ$ as a fixed function that $\hat{\circ}$ converges, e.g, $\hat{\mu}$ converges in the sense that $\lVert \hat \mu - \overline \mu \rVert _\infty = o_p(1)$. Denote $ \mathcal Q, \mathcal U$ as the functional space in which $g^{NN}, u^{NN}$ lie.

We also assume the following assumptions hold.

\begin{assumption}[Network Consistency] \label{asmp: consistency}
The potential outcome is the same as the observed outcome under the same individual treatment and neighborhood exposure, i.e., $y_i=y_i(t_i,z_i)$ if unit $i$ actually receives $t_i$ and $z_i$.
\end{assumption}

\begin{assumption}[Network Overlap] \label{asmp: Overlap}
    Given any individual and neighbors' features, any treatment pair $(t,z)$ has a non-zero probability of being observed in the data, i.e., $\forall x_i,x_{\mathcal N _i}, t_i, z_i, \quad 0<p(t_i,z_i|x_i, x_{\mathcal N _i})<1$.
\end{assumption}

\begin{assumption}[Neighborhood Interference] \label{asmp: Neighborhood interference}
    The potential outcome of a unit is only affected by their own and the first-order neighbors’ treatments, and the effect of the neighbors' treatments is through a summary function:
    $agg$, i.e., $\forall t_{\mathcal N _i}$,$t^{\prime}_{\mathcal N _i}$ which satisfy $agg(t_{\mathcal N _i})=agg(t^{\prime}_{\mathcal N _i})$, the following equation holds: $y_i(t_i, t_{\mathcal N _i})=y_i(t_i, t^{\prime}_{\mathcal N _i})$.
\end{assumption}

\begin{assumption}[Network Unconfoundedness] \label{asmp: Network unconfounderness}
    The individual treatment and neighborhood exposure are independent of the potential outcome given the individual and neighbors' features, i.e., $\forall t,z, \quad y_i(t,z) \Vbar t_i,z_i|x_i, x_{\mathcal N _i}$.
\end{assumption}

These assumptions are commonly assumed in existing causal inference methods such as \citet{forastiere2021identification, cai2023generalization, ma2022learning}. Specifically, Assumption \ref{asmp: consistency} states that there can not be multiple versions of a treatment. Assumption \ref{asmp: Overlap} requires that the treatment assignment is nondeterministic. Assumption \ref{asmp: Neighborhood interference} rules out the dependence of the outcome of unit $i$, $y_i$, from the treatment received by units outside its neighborhood, i.e., $t_j, j \notin \mathcal{N}_i$, but allows $y_i$ to depend on the treatment received by his neighbors, i.e., $t_k, k \in \mathcal{N}_i$. Also, Assumption \ref{asmp: Neighborhood interference} states the interaction dependence is assumed to be through a summary function $agg$. Assumption \ref{asmp: Network unconfounderness} is an extension of the traditional unconfoundedness assumption and indicates that there is no unmeasured confounder which is the common cause of $y_i$ and $t_i, z_i$. Note that Assumption \ref{asmp: Neighborhood interference} is reasonable in reality for some reason. First, in many applications units are affected by their first-order neighbors, and the affection of higher-order neighbors is also transported through the first-order neighbors. Second, it is also reasonable that a unit is affected by a specific function of other units' treatment, e.g., how much job-seeking pressure a unit has will depend on how many of its friends receive job training.

In this paper, our \textbf{goal} is to estimate the average dose-response function, as well as the conditional average dose-response function:
\begin{equation}
  \begin{aligned}
    & \psi(t,z) := \mathbb E [Y(t,z)], \\
    & \mu(t,z,x,x_{\mathcal N }) := \mathbb E [Y(t,z)|X=x,X_{\mathcal N }=x_{\mathcal N }],
    \end{aligned} 
\end{equation}
which can be identified:
\begin{equation}
\begin{aligned}
    \psi(t,z) 
    =&\mathbb E [ \mathbb E [Y(t,z)|X=x,X_{\mathcal N }=x_{\mathcal N }]] \\
    \overset{(a)}{=} &\mathbb E [\mathbb{E}[Y(t,z)|T=t,Z=z,X=x,X_{\mathcal N }=x_{\mathcal N }] ] \\
    \overset{(b)}{=} &\mathbb E [\mathbb{E}[Y|T=t,Z=z,X=x,X_{\mathcal N }=x_{\mathcal N}] ],
\end{aligned}   
\end{equation}
where equation (a) holds due to Assumption \ref{asmp: Network unconfounderness}, and equation (b) holds due to Assumption \ref{asmp: consistency}.

Based on the average dose-response function, existing works mostly focus on three kinds of causal effects:

\begin{definition} [Average Main Effects (AME) ] AME measures the difference in mean outcomes between units assigned to $T=t, Z=0$ and assigned $T=t^\prime, Z=0$:
    $\tau^{(t,0),(t^{\prime},0)} =\psi(t,0) -\psi(t^\prime,0) $. 
\end{definition}

\begin{definition} [Average Spillover Effects (ASE) ] ASE measures the difference in mean outcomes between units assigned to $T=0, Z=z$ and assigned $T=0, Z=z^\prime$:
    $\tau^{(0,z),(0,z^{\prime})} =\psi(0,z) -\psi(0,z^{\prime}) $. 
\end{definition}

\begin{definition} [Average Total Effects (ATE) ] ATE measures the difference in mean outcomes between units assigned to $T=t, Z=z$ and assigned $T=t^\prime, Z=z^\prime$:
$\tau^{(t,z),(t^\prime, z^\prime)} =\psi(t,z) -\psi(t^\prime, z^\prime) $. 
\end{definition}

Similarly, individual main effects (IME), individual spillover effects (ISE), and individual total effects (ITE) can be defined (see Appendix). The main effects reflect the effects of changing treatment $t$ to $t^\prime$. The spillover effects reflect the effects of changing neighborhood exposure $z$ to $z^\prime$. And the total effects represent the combined effect of both main effects and spillover effects.

\section{What are good Estimators under Interference?} \label{sec: what}

In this section, we answer the question of what are good estimators under networked interference. A good estimator can achieve lower bias and is robust to model misspecification, i.e., DR property. Fortunately, under i.i.d. data setting, TMLE \cite{vanderLaanRubin+2006, van2011targeted} enjoys these good properties. Therefore, we will first briefly review how to design TMLE that has the double robustness property and achieve low bias. Then we generalize TMLE to the networked interference setting and establish a condition that the doubly robust estimator should satisfy.

\subsection{Targeted Maximum Likelihood Estimation}

To estimate the average causal effects, the TMLE estimator solves the efficient influence curve (EIC) equation, which is defined as follows.

\begin{theorem} \cite{van2011targeted} \label{theo: original EIC}
    Under no interference assumption, denote the average causal effect as $ \psi:=\mathbb E[Y(1)-Y(0)]$. The efficient influence curve of $\psi$ is
    \begin{equation}
      \begin{aligned}
       \varphi(Y,T,X; \mu,g,\psi)
        = & \left( \frac{\mathds{1}_{ T}(1)}{g(1|X)} - \frac{\mathds{1}_{ T}(0)}{g(0|X)} \right) \left( y -\mu (T,X) \right)  \\ 
            &  +  \mu (1,X) - \mu(0,X) - \psi,     
        \end{aligned}
    \end{equation}
    where $\mu(T,X):=\mathbb E[Y|T,X]$ and $g(T|X):=\mathbb E[T|X]$.
\end{theorem}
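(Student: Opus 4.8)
The plan is to derive the efficient influence curve (EIC) for the parameter $\psi=\mathbb{E}[Y(1)-Y(0)]$ by working in the nonparametric model and computing the canonical gradient of the target functional. First I would write $\psi$ as a functional of the observed-data distribution $P$ via the $g$-formula, $\psi(P)=\mathbb{E}_P[\mu(1,X)-\mu(0,X)]$ with $\mu(t,x)=\mathbb{E}_P[Y\mid T=t,X=x]$, which is valid under the no-interference consistency, positivity, and unconfoundedness assumptions implicit in the i.i.d.\ setting. Then I would consider a one-dimensional parametric submodel $\{P_\eta\}$ passing through $P$ at $\eta=0$ with score $s$, and compute the pathwise derivative $\frac{d}{d\eta}\psi(P_\eta)\big|_{\eta=0}$; the EIC is the unique element $\varphi$ of the tangent space (here the whole Hilbert space $L^2_0(P)$, since the model is nonparametric) satisfying $\frac{d}{d\eta}\psi(P_\eta)|_{\eta=0}=\mathbb{E}_P[\varphi\, s]$ for all such submodels.

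The key computational steps: decompose the likelihood as $p(y,t,x)=p(y\mid t,x)\,p(t\mid x)\,p(x)$ and correspondingly split the score $s = s_{Y\mid T,X} + s_{T\mid X} + s_X$. Differentiating $\psi(P_\eta)$ term by term, the contribution from perturbing $p(x)$ gives $\mu(1,X)-\mu(0,X)-\psi$ (the centered outcome-regression contrast); the contribution from perturbing $p(y\mid t,x)$ gives a term that, after rewriting the conditional expectation over $T$ as an inverse-propensity-weighted expectation over the observed $T$, produces the ``clever covariate'' factor $\left(\frac{\mathds{1}_T(1)}{g(1\mid X)}-\frac{\mathds{1}_T(0)}{g(0\mid X)}\right)(Y-\mu(T,X))$; and the contribution from perturbing $p(t\mid x)$ vanishes because $\mathbb{E}_P[(Y-\mu(T,X))\mid T,X]=0$ makes that score orthogonal to the relevant direction. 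Collecting the three pieces yields exactly the stated $\varphi(Y,T,X;\mu,g,\psi)$. I would then note that since the statistical model is nonparametric, this gradient is automatically the \emph{efficient} influence curve (the tangent space is everything, so there is a unique gradient), and verify $\mathbb{E}_P[\varphi]=0$ as a sanity check, using iterated expectations and $\mathbb{E}[\mathds{1}_T(t)/g(t\mid X)\mid X]=1$.

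The main obstacle is the bookkeeping in the pathwise-derivative calculation, specifically handling the perturbation of $p(y\mid t,x)$ correctly: one must express $\mu_\eta(t,x)=\mathbb{E}_{P_\eta}[Y\mid T=t,X=x]$, differentiate under the integral, and then re-express the resulting $\sum_{t}(\cdot)$ or $\mathbb{E}[\cdot\mid X]$ in terms of an expectation against the observed data law so that the propensity score $g$ appears in the denominator — this is the step where the inverse-probability weights are ``born,'' and it requires care that the submodel is a valid density perturbation (mean-zero score, correct normalization). A secondary point worth stating explicitly is why this is the result we want for the subsequent development: the estimating equation $\mathbb{P}_n\varphi(\,\cdot\,;\hat\mu,\hat g,\hat\psi)=0$ is doubly robust, i.e., unbiased if \emph{either} $\hat\mu\to\mu$ or $\hat g\to g$, which motivates generalizing this curve to the networked-interference setting in the next subsection. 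Since the paper cites \citet{van2011targeted} for this theorem, I would keep the proof brief and mainly reproduce the submodel-score computation, referring to the standard semiparametric-efficiency literature for the tangent-space argument.
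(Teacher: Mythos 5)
Your derivation is correct, but it follows a different route from the one the paper actually relies on. The paper gives no proof of this result at all -- it is imported from \citet{van2011targeted} -- and the closest thing to an in-house argument is the appendix derivation of the interference analogue (Theorem \ref{theo: POM eic}), which follows \citet{hines2022demystifying}: it perturbs $P_0$ along the point-mass submodel $p_\epsilon(o)=\epsilon\,\mathds{1}_{\tilde o}(o)+(1-\epsilon)p_0(o)$, differentiates the $g$-formula $\int y\,p_\epsilon(y\mid t,z,x,x_{\mathcal N})p_\epsilon(x,x_{\mathcal N})\,dy\,dx\,dx_{\mathcal N}$ at $\epsilon=0$ via the chain rule on the three density factors, and reads off the influence curve evaluated at $\tilde o$ directly, with no explicit tangent-space argument. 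Your proposal instead runs the classical semiparametric calculation: generic regular submodels with score $s=s_{Y\mid T,X}+s_{T\mid X}+s_X$, pathwise differentiation of $\psi(P_\eta)$, and identification of the canonical gradient, using that the nonparametric tangent space is all of $L^2_0(P)$ so the gradient is unique and hence efficient. The two computations produce the same three pieces (your $s_X$ term is the paper's $p_\epsilon(x,x_{\mathcal N})$ term, your $s_{Y\mid T,X}$ term is where the inverse-propensity factor appears in both, and the treatment-mechanism direction contributes nothing in either), but they buy different things: your route is the rigorous textbook argument and makes the efficiency claim self-contained, at the cost of the score bookkeeping you describe; the paper's point-mass Gateaux route is shorter and purely mechanical, but the point-mass path is not a genuine dominated submodel, so it is really a heuristic that delivers the correct answer in nonparametric models and leans on the cited literature for the claim that the resulting gradient is the \emph{efficient} one. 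One small remark on your writing: the vanishing of the $p(t\mid x)$ contribution is most directly seen from the fact that the $g$-formula representation of $\psi(P)$ does not involve the treatment mechanism at all; the orthogonality observation you give (that $\mathbb E[Y-\mu(T,X)\mid T,X]=0$ kills the projection of the candidate $\varphi$ onto the $T\mid X$ score space) is the complementary verification, so state one or the other cleanly rather than conflating them.
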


If the estimator $(\hat \mu, \hat g)$ satisfies the certain equation above, i.e., $\Sigma_{i=1}^n\varphi(y_i,t_i,x_i; \hat \mu, \hat g, \psi)=0$, then the resulted estimator $\hat \psi$ has various good properties, e.g. lowest variance and double robustness \cite{vanderLaanRubin+2006, van2011targeted, Demystifying2022Oliver}. Under the no interference assumption, TMLE establishes a three-step estimation for average causal effects to ensure that the designed estimator solves the EIC.

\textbf{Step 1.} Fit the conditional outcome model $ \hat \mu(T,X)=\mathbb E[Y|T,X]$. \textbf{Step 2.} Fit the propensity score $\hat g(T|X)$. \textbf{Step 3.} Estimate the perturbation parameter $\epsilon$ by runing a logistic regression of the outcome $Y$ on the clever covariate $H^*(T,X)$ using as intercept the offset $ \text{logit } \hat \mu (T,X)$ (suppose $Y$ is binary in this case):
\begin{equation}
    \begin{aligned}
        \text{logit } \mu ^*(T,X) =  \text{logit } \hat \mu (T,X) + \epsilon H^*(T,X),
    \end{aligned}
\end{equation}
where $H^*(T,X)=\frac{\mathds{1}_{ T}(1)}{\hat g(1|X)} - \frac{\mathds{1}_{ T}(0)}{\hat g (0|X)}$ is called the clever covariate. As a result, the average causal effect can be obtained by
\begin{equation*}
    \hat \psi = \frac{1}{n} [\Sigma_{i=1}^{n} \mu ^*(1,x_i)-\mu ^*(0,x_i)].
\end{equation*}
The key insight is that, for the log-likelihood loss function in Step 3
\begin{equation*}
    \mathcal{L} = - \frac{1}{n} \Sigma_{i=1}^n \log \mu ^*(t_i,x_i)^{y_i} ( 1- \mu ^*(t_i,x_i))^{1-y_i},
\end{equation*}
we have 
\begin{equation}
   0 = \frac{\partial \mathcal{L}}{d \epsilon} |_{\epsilon=0} =  \frac{1}{n} \Sigma_{i=1}^n \varphi(y_i,t_i,x_i; \hat \mu, \hat g, \psi),
\end{equation}
which means that TMLE solves the EIC equation in Theorem \ref{theo: original EIC}, and achieves the double robustness property.

\subsection{Generalization to Networked Interference}

In the networked data setting, our estimand focuses on the whole average dose-response function $\psi$. We first focus on $\psi(t,z)$ on the specified value of $t,z$. Then EIC can be derived:

\begin{theorem} \label{theo: POM eic} For $t \in \{0,1\}, z \in [0,1]$, the efficient influence curve of $\psi(t,z)$ is:
    \begin{equation}
    \begin{aligned}
        & \varphi (t,z,X,X_{\mathcal N}; \mu ,g,\psi) 
        \\ = & \left( \frac{\mathds{1}_{T,Z}(t,z)}{g(t,z|X,X_{\mathcal N})} \right) \left( y - \mu (t,z,X,X_{\mathcal N}) \right)  
        \\ & +  \mu (t,z,X,X_{\mathcal N})
        - \psi(t,z), 
    \end{aligned}
    \end{equation}
     where $\mu (t,z,X,X_{\mathcal N}):=\mathbb E[Y|t,z,X,X_{\mathcal N}]$ and $g(t,z|X,X_{\mathcal N}):=\mathbb E[t,z|X,X_{\mathcal N}]$.
\end{theorem}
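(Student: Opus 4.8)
The plan is to derive the efficient influence curve for $\psi(t,z) = \mathbb{E}[Y(t,z)]$ by treating the pair $(t,z)$ as a fixed ``super-treatment'' value and mimicking the classical derivation of the EIC for a dose-response functional under unconfoundedness, while being careful about the networked structure. First I would fix the target value $(t,z)$ and recall the identification result already established in the excerpt, namely $\psi(t,z) = \mathbb{E}[\mathbb{E}[Y \mid T=t, Z=z, X, X_{\mathcal N}]] = \mathbb{E}[\mu(t,z,X,X_{\mathcal N})]$. The functional of interest is thus the mean of a regression evaluated at a fixed treatment level, averaged over the covariate distribution $p(x, x_{\mathcal N})$. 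This is structurally identical to the ``mean counterfactual outcome'' functional $\Psi_t = \mathbb{E}[\mathbb{E}[Y \mid T=t, X]]$ whose EIC is the well-known AIPW-type influence function, with the single difference that the conditioning treatment is the bivariate $(T,Z)$ and the covariate vector is $(X, X_{\mathcal N})$.

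The key steps, in order, would be: (1) write the observed-data likelihood factorization $p(y, t, z, x, x_{\mathcal N}) = p(y \mid t, z, x, x_{\mathcal N})\, g(t,z \mid x, x_{\mathcal N})\, p(x, x_{\mathcal N})$, and note that $\psi(t,z)$ depends only on the outcome-regression factor and the covariate marginal, not on the treatment-assignment factor $g$; (2) compute the pathwise derivative of $\psi(t,z)$ along a one-dimensional parametric submodel $p_\delta$ through the truth, splitting the score into the three orthogonal tangent-space pieces corresponding to the three factors; (3) for the $p(x,x_{\mathcal N})$-score component, the derivative contributes $\mu(t,z,X,X_{\mathcal N}) - \psi(t,z)$, exactly as in the non-interference case; (4) for the $p(y \mid \cdot)$-score component, the derivative picks out, via the Radon--Nikodym / importance-weighting argument, the term $\frac{\mathds{1}_{T,Z}(t,z)}{g(t,z \mid X, X_{\mathcal N})}(Y - \mu(t,z,X,X_{\mathcal N}))$, since we need to reweight from the observed treatment distribution to the point mass at $(t,z)$; (5) for the $g$-score component, the derivative is zero because $\psi(t,z)$ is variation-independent of $g$, confirming that no further term appears; (6) collect the three pieces and verify that their sum has mean zero and lies in the tangent space, so it is indeed the efficient influence curve (it is the unique influence function since the nonparametric model's tangent space is the whole Hilbert space). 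Here Assumption~\ref{asmp: Overlap} (network overlap) is what guarantees the weight $1/g(t,z \mid X, X_{\mathcal N})$ is well-defined and square-integrable, and Assumptions~\ref{asmp: consistency} and~\ref{asmp: Network unconfounderness} are what make $\psi(t,z)$ equal to this identified functional in the first place.

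The main obstacle I anticipate is making the continuous component of $Z$ rigorous: since $Z = agg(t_{\mathcal N})$ takes values in $[0,1]$ and in the stated data-generating process is a deterministic function of the neighbors' binary treatments (hence discrete with finitely many atoms for a given degree), the indicator $\mathds{1}_{T,Z}(t,z)$ and the conditional density $g(t,z\mid X,X_{\mathcal N})$ must be interpreted consistently — either as genuine densities with respect to a common dominating measure, or, if $Z$ is treated as literally continuous, the functional $\psi(t,z)$ is not pathwise differentiable in the usual sense and one must pass to a smoothed/projected version (this is precisely the subtlety that \citet{nie2020vcnet} handle in the continuous-treatment VCNet setting, which the authors cite as a counterpart). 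I would therefore state the theorem under the convention that $g(t,z\mid x,x_{\mathcal N})$ is the joint density of $(T,Z)$ with respect to an appropriate base measure, note that when $Z$ is genuinely continuous the EIC should be read as the influence function of the corresponding regularized parameter, and otherwise the derivation reduces cleanly to the discrete super-treatment case. A secondary, more routine check is confirming that the networked dependence among units does not invalidate the single-unit EIC calculation: since the EIC is a statement about the marginal law of one unit's observed tuple $(Y, T, Z, X, X_{\mathcal N})$ and the pathwise-derivative computation only uses that marginal, the derivation goes through unit-by-unit; the between-unit dependence only matters later for the variance/convergence analysis, not for the form of $\varphi$ itself.
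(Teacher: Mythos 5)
Your proposal is correct in substance but follows a genuinely different route from the paper. The paper computes the EIC directly as a Gateaux derivative under a point-mass contamination: it writes $\Psi(P_\epsilon)$ with $p_\epsilon(\cdot)=\epsilon\,\mathds{1}_{\tilde{\cdot}}(\cdot)+(1-\epsilon)p_0(\cdot)$, differentiates at $\epsilon=0$ via the chain rule applied to the ratio $y\,p_\epsilon(y,t_0,z_0,x,x_{\mathcal N})p_\epsilon(x,x_{\mathcal N})/p_\epsilon(t_0,z_0,x,x_{\mathcal N})$, and simplifies term by term until the AIPW form appears (following the ``demystifying'' recipe of Hines et al.), with no explicit appeal to tangent spaces. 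You instead run the classical semiparametric argument: factorize the likelihood into outcome, treatment-assignment, and covariate parts, compute the pathwise derivative along smooth submodels, observe that the $g$-factor score contributes nothing by variation independence, and then verify mean-zero membership in the (saturated) tangent space to conclude uniqueness and efficiency. The paper's computation is shorter and purely mechanical, but it only produces a candidate influence curve; your decomposition buys the conceptual justification for why it is \emph{the} efficient one and why no propensity-score term enters. Your caveat about the continuous component of $Z$ is also well taken and goes beyond the paper: for genuinely continuous $Z$ the pointwise functional $\psi(t,z)$ is not pathwise differentiable, so the displayed $\varphi$ with $\mathds{1}_{T,Z}(t,z)/g(t,z\mid X,X_{\mathcal N})$ must be read with respect to an appropriate dominating measure or as the influence function of a smoothed/localized parameter (the VCNet-style reading); the paper's proof silently treats densities and indicators formally without addressing this, and likewise does not discuss that point-mass perturbations are not literally valid submodels in the continuous case. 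Your closing observation that only the single-unit marginal law matters for the form of $\varphi$, with cross-unit dependence deferred to the convergence analysis, matches what the paper implicitly does.
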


Based on Theorem \ref{theo: POM eic}, we aim to design an estimator solving the EIC, which makes the estimator of $\psi(t,z)$ achieve double robustness as the following lemma states.

\begin{lemma} [Double Robustness Property] \label{lemma: dr condition}
For $t \in \{0,1\}, z \in [0,1]$, if the models $\hat g $ and $\hat \mu$ solving EIC, $\mathbb{P}  \varphi(t,z,X,X_{\mathcal N}; \hat \mu, \hat g, \psi)=0$, then the estimator $\hat \psi(t,z)$ for $\psi(t,z)$ is the doubly robust, i.e., if either $\hat g = g$ or  $\hat \mu=\mu$, then $\hat \psi=\psi$. Further, if $\rVert \hat g - g \lVert_\infty = O_p(r_1(n))$ and $\rVert \hat \mu - \mu \lVert_\infty = O_p(r_2(n))$, we have 
\begin{equation*}
    \sup_{t,z \in \mathcal{T},\mathcal{Z}} | \mathbb{P} \varphi(t,z,X,X_{\mathcal N}; \hat \mu, \hat g, \psi) |=O_p(r_1(n)r_2(n)).
\end{equation*}
\end{lemma}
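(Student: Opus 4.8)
The idea is to reduce the claim to the classical ``second-order remainder'' identity for a one-step/doubly robust estimator, now written for the target $\psi(t,z)$, the generalized propensity $g(t,z\mid X,X_{\mathcal N})$, and the conditional outcome $\mu(t,z,X,X_{\mathcal N})$. First I would make precise what ``$\hat\psi(t,z)$ solves the EIC'' means: since $\psi(t,z)$ enters $\varphi$ linearly with coefficient $-1$, the equation $\mathbb P\,\varphi(t,z,X,X_{\mathcal N};\hat\mu,\hat g,\hat\psi)=0$ defining $\hat\psi(t,z)$ is equivalent to
\[
  \hat\psi(t,z)=\mathbb E\!\left[\frac{\mathds{1}_{T,Z}(t,z)}{\hat g(t,z\mid X,X_{\mathcal N})}\big(Y-\hat\mu(t,z,X,X_{\mathcal N})\big)+\hat\mu(t,z,X,X_{\mathcal N})\right],
\]
so that $\hat\psi(t,z)-\psi(t,z)=\mathbb P\,\varphi(t,z,X,X_{\mathcal N};\hat\mu,\hat g,\psi)$. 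It therefore suffices to evaluate this population expectation and exhibit its product structure.

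The main step is iterated expectations. Conditioning on $(X,X_{\mathcal N})$ and using that $\hat\mu(t,z,X,X_{\mathcal N})$ is $\sigma(X,X_{\mathcal N})$-measurable, together with $\mathbb E[\mathds{1}\{T=t,Z=z\}\mid X,X_{\mathcal N}]=g(t,z\mid X,X_{\mathcal N})$ and $\mathbb E[Y\mid T=t,Z=z,X,X_{\mathcal N}]=\mu(t,z,X,X_{\mathcal N})$ --- the latter, along with $\psi(t,z)=\mathbb E[\mu(t,z,X,X_{\mathcal N})]$, being exactly what the identification argument supplies via Assumptions \ref{asmp: Network unconfounderness} and \ref{asmp: consistency} --- the inverse-propensity term collapses to $\mathbb E\big[\tfrac{g(t,z\mid X,X_{\mathcal N})}{\hat g(t,z\mid X,X_{\mathcal N})}(\mu(t,z,X,X_{\mathcal N})-\hat\mu(t,z,X,X_{\mathcal N}))\big]$. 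Adding $\mathbb E[\hat\mu]-\mathbb E[\mu]$ and factoring gives the key identity
\[
  \mathbb P\,\varphi(t,z,X,X_{\mathcal N};\hat\mu,\hat g,\psi)=\mathbb E\!\left[\frac{g(t,z\mid X,X_{\mathcal N})-\hat g(t,z\mid X,X_{\mathcal N})}{\hat g(t,z\mid X,X_{\mathcal N})}\big(\mu(t,z,X,X_{\mathcal N})-\hat\mu(t,z,X,X_{\mathcal N})\big)\right].
\]

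Both conclusions now read off from this identity. For double robustness: if $\hat g=g$ the first factor of the integrand is identically zero, and if $\hat\mu=\mu$ the second is; in either case $\mathbb P\,\varphi=0$ for every $(t,z)$, hence $\hat\psi=\psi$. For the rate: bound the integrand pointwise by $|g-\hat g|\,|\mu-\hat\mu|/\hat g$ and take suprema, obtaining
\[
  \sup_{t,z}\big|\mathbb P\,\varphi(t,z,X,X_{\mathcal N};\hat\mu,\hat g,\psi)\big|\;\le\;\frac{\lVert\hat g-g\rVert_\infty\,\lVert\hat\mu-\mu\rVert_\infty}{\inf_{t,z,x,x_{\mathcal N}}\hat g(t,z\mid x,x_{\mathcal N})},
\]
where the denominator is bounded below with probability tending to one under Assumption \ref{asmp: Overlap} together with consistency of $\hat g$ (or by clipping $\hat g$ away from $0$); combined with $\lVert\hat g-g\rVert_\infty=O_p(r_1(n))$ and $\lVert\hat\mu-\mu\rVert_\infty=O_p(r_2(n))$ this yields the stated $O_p(r_1(n)r_2(n))$ bound.

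The computation itself is routine, so I expect the only delicate points to be bookkeeping ones: (i) pinning down the definition of $\hat\psi(t,z)$ so that its bias equals $\mathbb P\,\varphi$ exactly (immediate from linearity of $\varphi$ in $\psi$); (ii) justifying that $\hat g$ is \emph{uniformly} bounded away from $0$ --- the pointwise overlap in Assumption \ref{asmp: Overlap} does not literally give this, so it must be strengthened to a uniform version, or obtained from uniform consistency of $\hat g$ plus compactness, or simply assumed by construction; and (iii) interpreting the $O_p$ rates on $\lVert\hat g-g\rVert_\infty$ and $\lVert\hat\mu-\mu\rVert_\infty$ as suprema over $(t,z)\in\mathcal T\times\mathcal Z$ as well, so that the left-hand supremum over $(t,z)$ can be absorbed. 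Note that, since the whole argument lives at the level of the population operator $\mathbb P$, the non-i.i.d.\ structure of the networked data plays no role here --- it enters only later, when $\mathbb P$ is replaced by an empirical average.
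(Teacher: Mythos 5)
Your proposal is correct and follows essentially the same route as the paper's proof: both reduce the claim to the identity $\mathbb{P}\,\varphi(t,z,X,X_{\mathcal N};\hat\mu,\hat g,\psi)=\mathbb{E}\bigl[\bigl(\tfrac{g}{\hat g}-1\bigr)(\mu-\hat\mu)\bigr]$ via iterated expectations and then read off both the double-robustness statement and the product rate bound from that second-order remainder. Your additional bookkeeping remarks (uniform lower bound on $\hat g$, uniformity of the rates in $(t,z)$) are points the paper leaves implicit here and only formalizes later as condition 1 of Theorem 6.1, so they are welcome but not a departure from the paper's argument.
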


Theorem \ref{theo: POM eic} gives a certain condition that the estimator of $\psi(t,z)$ should satisfy if it aims to target the causal effects. And Lemma \ref{lemma: dr condition} claims that when such a condition is satisfied, the estimator enjoys double robustness, which is a particularly impactful property under networked interference when $\hat \mu$ and $\hat g$ become high-dimensional and difficult to fit in an unbiased manner. As TMLE does, to achieve double robustness of $\psi(t,z)$, we can construct an estimator containing an estimated constant $\epsilon$, which solves the EIC function in Theorem \ref{theo: POM eic}. To achieve double robustness for the whole does-response function $\psi$, in the next section, we devise our estimator by designing a functional $\epsilon$ using B-spline curves.

\begin{figure*}[!t]
    \centering
    \includegraphics[width=1.\textwidth]{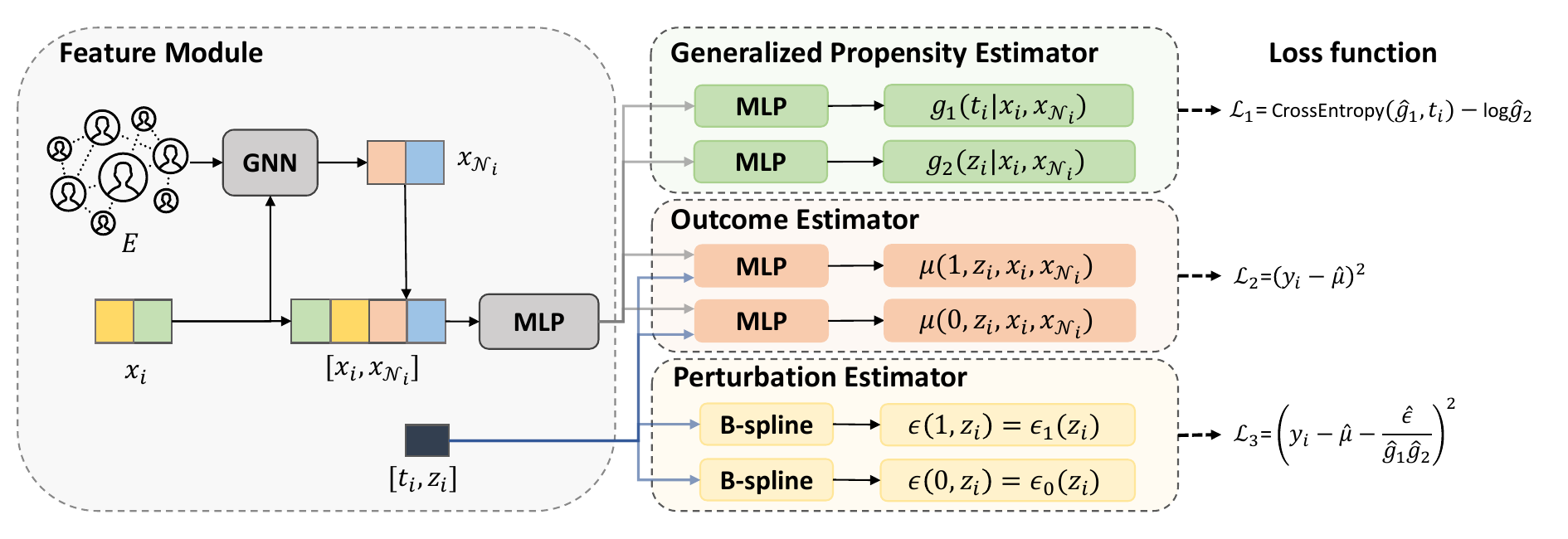}
    \caption{Model architecture of our proposed TNet. The feature module aggregates the information of covariates of unit $i$ and its neighbor. The generalized propensity estimator module aims to estimate individual propensity score and neighborhood propensity score respectively. The outcome estimator module aims to estimate potential outcomes of unit $i$. The perturbation estimator module aims to estimate $\epsilon(t,z)$ that is adapted into our estimator to achieve double robustness property.}
    \label{fig: network}
\end{figure*}

\section{How to Design DR Targeted Estimators under Interference?} \label{sec: how}

In this section, we answer the question of how to design our doubly robust estimator for the whole does-response function $\psi$ under interference. We denote our model as \textbf{TNet}. As shown in Figure \ref{fig: network}, our model architecture can be divided into the feature module and three estimators, generalized propensity estimator, outcome estimator, and perturbation estimator. 
Specifically, the perturbation estimator is to estimate $\epsilon(t,z)$ to ensure that our estimator $\hat \psi$ solves the EIC in Theorem \ref{theo: POM eic}. Through this alignment, our estimator gains the attribute of double robustness for any $t,z$ based on Lemma \ref{lemma: dr condition}.

In the \textbf{feature module}, following existing work \cite{guo2020learning,ma2021causal,jiang2022estimating}, we use Graph Convolution Networks (GCN \cite{defferrard2016convolutional,kipf2016semi}) to aggregate the information of covariates of unit $i$ and its neighbors, i.e., $x_i, x_ {\mathcal N_i}$:
\begin{equation*}
\begin{aligned}
   & h^{neigh}_{i,1} =\sigma(\sum_{j\in \mathcal{N}_i}\frac{1}{\sqrt{d_id_j}}W^Tx_j; \theta_1), \\
   & h_{i,2} = MLP_1(h^{neigh}_{i,1},x_i; \theta_2) ,
\end{aligned}
\end{equation*}
where $\sigma(\cdot)$ is a non-linear activation function, $d_i$ is the degrees of unit $i$, $W$ is the weight matrix of GCN parameterized by $\theta_1$, and $MLP_1$ is a multilayer perceptron parameterized by $\theta_2$.

In the \textbf{generalized propensity estimator module}, we estimate the generalized propensity. Following \cite{forastiere2021identification}, we decompose the join propensity score into individual propensity score and neighborhood propensity score, i.e., $g(T, Z|X, X_{\mathcal N})=g_1(T|X, X_{\mathcal N})g_2(Z|X, X_{\mathcal N})$, and estimate them respectively. Note that we can also decompose $g(T, Z|X, X_{\mathcal N})=g_1(T|X, X_{\mathcal N})g_2(Z|T,X, X_{\mathcal N})$ by allowing the dependence between $Z$ and $T$. The different modeling way may result in model misspecification, while our DR estimator is robust to this kind of model misspecification if our outcome model is consistent.

For binary treatment $T$, we use the sigmoid function as the last activation function to approximate $g_1(T|X, X_{\mathcal N})$:
\begin{equation*}
    \begin{aligned}
        g_1^{NN}(t_1 | x_i,x_{\mathcal N_i}) = h_{i,3} = sigmoid (MLP_2(h_{i,2};\theta_3)).
    \end{aligned}
\end{equation*}

For continuous exposure $Z$, we use piecewise linear functions \cite{nie2021varying} to approximate $g_2(Z|X, X_{\mathcal N})$:
\begin{equation*}
    \begin{aligned}
        h_{i,4} = softmax(MLP_3(h_{i,2}; \theta_4)) \in \mathbb R^{B+1},
    \end{aligned}
\end{equation*}
where we divide $[0,1]$ into $B$ grids, and estimate the conditional density $g_2(Z|X, X_{\mathcal N})$ on the $(B+1)$ grid points. Here $h_{i,4} = [h_{i,4}^0,...,h_{i,4}^B]$ and $h_{i,4}^j$ is the estimated conditional density of $Z=\frac{j}{B}$ given $X=x_i, X_{\mathcal N}=x_{\mathcal N_i}$. Estimation of conditional density at $z_i$ given $x_i, x_{\mathcal N_i}$ can be obtained via linear interpolation:
\begin{equation*}
    \begin{aligned}
       g_2^{nn}(z_i|x_i, x_{\mathcal N_i}) = h_{i,4}^{z_1} + B(h_{i,4}^{z_2} - h_{i,4}^{z_1} )(z_i - z_1), 
    \end{aligned}
\end{equation*}
where $ z_1 = \lfloor Bz_i \rfloor$ and $ z_2 = \lceil B z_i \rceil$ denote the least integer greater than or equal to $Bz_i $ and the least integer less than or equal to $Bz_i$, respectively.

Then, the loss function in this module is
\begin{equation}
 \begin{aligned}
    & \mathcal{L}_{1}(\theta_1,\theta_2,\theta_3,\theta_4) \\
    = &  \Sigma_{i=1}^n [ \alpha \text{CrossEntropy}( g_1^{NN} (t_i|x_i, x_{\mathcal N_i}), t_i) \\
     & - \gamma \log  g_2^{NN} (z_i|x_i, x_{\mathcal N_i})],
 \end{aligned}
\end{equation}
where $\alpha $ and $\gamma$ are the hyper-parameters controlling the strength of different loss functions.

In the \textbf{outcome estimator module}, we use two MLPs to estimate $y_i$ of treated and control groups respectively, i.e.,
\begin{equation*}
 \mu^{NN}(t_i,z_i, x_i, x_{\mathcal N_i}; \theta_5,\theta_6)= 
\left\{
\begin{aligned}
MLP_3(z_i, h_{i,2}; \theta_5) & \quad \quad  t_i =1  \\
MLP_4(z_i, h_{i,2}; \theta_6) & \quad \quad  t_i =0 
\end{aligned}
\right.
\end{equation*}
and the loss function is 
\begin{equation}
    \mathcal{L}_{2} (\theta_1,\theta_2,\theta_5, \theta_6) =  \Sigma_{i=1}^n  (y_i - \mu^{NN}(t_i,z_i, x_i, x_{\mathcal N_i}))^2.
\end{equation}

In the \textbf{perturbation estimator module}, we estimate $\epsilon$ for each pair $t_i,z_i$ using spline $\{\varphi_k\}_{k=1}^{K_{n_0}}$ with  with $K_{n_0}$ and $\{\varphi_k\}_{k=1}^{K_{n_1}}$ with $K_{n_1}$ basis functions:
\begin{equation*}
     \epsilon ^{NN} (t_i,z_i)=
    \left\{
\begin{aligned}
\Sigma_{k=1}^{K_{n_1}} \theta_{7,k} \varphi_k(z_i)  & \quad \quad  t_i =1  \\
\Sigma_{k=1}^{K_{n_0}} \theta_{8,k} \varphi_k(z_i) & \quad \quad  t_i =0 
\end{aligned}
\right.
\end{equation*}
where $\theta_7=\{\theta_{7,k}\}_{k=1}^{K_{n_1}}$ and  $\theta_8=\{\theta_{8,k}\}_{k=1}^{K_{n_0}}$ are the optimized parameters.

The perturbation parameter works as the coefficient of the clever covariate $  \frac{\mathds{1}_{T, Z}(t_i,z_i)}{g(t,z|x_i,x_{\mathcal N_i})} $, and are obtained by the regression of the outcome $Y$. Thus, the loss function is
\begin{equation}
\begin{aligned}
     & \mathcal{L}_{3} (\theta_1,\theta_2,\theta_3,\theta_4,\theta_5,\theta_6,\theta_7,\theta_8) \\
     = & \beta \Sigma_{i=1}^n  \left[  y_i -  \left( \mu^{NN}(t_i,z_i, x_i, x_{\mathcal N_i}) \right.\right. \\  & \left.\left.
     +  \epsilon^{NN} (t_i,z_i)   \frac{\mathds{1}_{T,Z}(t_i,z_i)}{ g_1^{NN}(t_i|x_i,x_{\mathcal N_i})  g_2^{NN}(z_i|x_i,x_{\mathcal N_i}) } \right) \right]^2
\end{aligned}
\end{equation}
where $\beta$ is the hyper-parameter controlling the strength of different loss functions and same as TMLE, the resulting final estimator is $ \mu^{NN}(t_i,z_i, x_i, x_{\mathcal N_i})  +  \epsilon^{NN} (t_i,z_i)   \frac{\mathds{1}_{T,Z}(t_i,z_i)}{ g_1^{NN}(t_i|x_i,x_{\mathcal N_i})  g_2^{NN}(z_i|x_i,x_{\mathcal N_i}) }$.

In the \textbf{optimization} step, inspired by the three-step TMLE estimator, we optimize our model iteratively. At each iteration, we first minimize $\mathcal L_1+\mathcal L_2$, and then minimize $\mathcal L_3$.

After training our model, our resulting estimator of the potential outcome is that, for a specified value of $t,z$ we have $\hat{y}_i(t,z)=\hat{\mu}(t,z,x_i,x_{\mathcal{N}_i})+\frac{\hat{\epsilon}(t,z)}{\hat{g}_1(t|x_i,x_{\mathcal{N}_i})\hat{g}_2(z|x_i,x_{\mathcal{N}_i})}$. And thus, to infer $\hat{\tau}^{(t,z),(t^\prime, z^\prime)}$, the mean difference between $t,z$ and $t^\prime, z^\prime$, we have 
\begin{equation*}
    \hat{\tau}^{(t,z),(t^\prime, z^\prime)}=\Sigma_i^n\hat{y}_i(t,z) - \Sigma_i^n\hat{y}_i(t^\prime, z^\prime),
\end{equation*}
where $n$ is the sample size.

Our estimator is a DR estimator. The key observation is, that for each pair $t,z$, the minimizer $\hat{\mu}, \hat{g}, \hat{\epsilon}$ of loss $\mathcal L$ follows
\begin{equation} \label{loss function}
\begin{aligned}
     0  & = \frac{\partial (\mathcal L)}{\partial \epsilon} |_{\epsilon = \hat \epsilon}  = \frac{\partial (\mathcal L_1+\mathcal L_2+\mathcal L_3)}{\partial \epsilon} |_{\epsilon = \hat \epsilon}  
     \\ & = 2\beta \Sigma_{i=1}^n \varphi(t,z,x_i,x_{\mathcal N_i}; \hat \mu , \hat g, \psi).
\end{aligned}
\end{equation}

According to Lemma \ref{lemma: dr condition}, our estimator is doubly robust:
even if one of the generalized propensity estimator and outcome estimator is biased, we can still promise $\hat \psi$ is unbiased.

\section{Analysis of the Estimator} \label{sec: Analysis}

In this section, we establish the convergence rate for our estimator using loss function $\mathcal{L}$ in Eq. \ref{loss function}. We find that, under some mild assumption, using our perturbation estimator theoretically helps us obtain a better estimator of $\psi$.

\begin{theorem} \label{Theorem convergence}
Under Assumptions \ref{asmp: consistency}, \ref{asmp: Overlap}, \ref{asmp: Neighborhood interference}, \ref{asmp: Network unconfounderness}, and the following assumptions:
\begin{enumerate}
    \item there exists constant $c > 0$ such that for any $t \in \mathcal{T} ,z \in \mathcal{Z}, x\in \mathcal{X}, x_\mathcal{N} \in \mathcal{X}_\mathcal{N}$, and $g^{NN}\in \mathcal{Q}$, we have $\frac{1}{c} \leq g^{NN}(t,z|x,x_\mathcal{N}) \leq c, \frac{1}{c} \leq g(t,z|x,x_\mathcal{N}) \leq c, \lVert\mathcal{Q} \rVert_\infty 
    \leq c$ and $\lVert \mu \rVert_\infty \leq c$. \label{condition 1}
    \item $Y= \mu (T,Z,X,X_\mathcal{N})+V$ where $\mathbb{E}V=0, V\Vbar X,X_\mathcal{N}, V\Vbar T,Z$ and $V$ follows sub-Gaussian distribution.  \label{condition 2}
    \item $g, \mu, g^{NN}, \mu^{NN}$ have bounded second derivatives for any $g^{NN} \in \mathcal{Q}, \mu^{NN} \in \mathcal{U}$ .  \label{condition 3}
    \item Either $\overline g=g$ or $\overline \mu = \mu $. And $\text{Rad}_n(\mathcal{Q})$, $\text{Rad}_n(\mathcal{U})=O(n^{-1/2})$.  \label{condition 4}
    \item $\mathcal{B}_{K_{n_1}}$, $\mathcal{B}_{K_{n_0}}$  equal the closed linear span of B-spline with equally spaced knots, fixed degree, and dimension $K_{n_1} \asymp n_1^{-1/6}$,$K_{n_0} \asymp n_0^{-1/6}$.  \label{condition 5}
\end{enumerate}
then we have
\begin{equation}
    \lVert \hat \phi - \phi \rVert_{L^2}=O_p(n_0^{-1/3}\sqrt{\log n_0} + n_1^{-1/3}\sqrt{\log n_1}+r_1(n)r_2(n)),
\end{equation}
where $\lVert \hat{g} - g \rVert _{\infty} =O_p(r_1(n))$ and $\lVert \hat{u} - u \rVert _{\infty} =O_p(r_2(n))$.
\end{theorem}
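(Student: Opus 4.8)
The plan is to decompose the target error $\lVert \hat\phi - \phi \rVert_{L^2}$ into three sources: (i) the error of the perturbed outcome estimator relative to its population limit, coming from the B-spline approximation of $\epsilon(t,z)$ and the empirical-process fluctuation of the $\mathcal{L}_3$ minimization; (ii) the error from the nuisance estimators $\hat g, \hat\mu$ converging to their limits $\overline g, \overline \mu$; and (iii) the bias term that survives in the population because only \emph{one} of $\overline g = g$ or $\overline\mu = \mu$ is assumed (Condition~\ref{condition 4}), which by Lemma~\ref{lemma: dr condition} is governed by the product $r_1(n)r_2(n)$. I would set $\hat\phi(t,z,x,x_\mathcal N) = \hat\mu(t,z,x,x_\mathcal N) + \hat\epsilon(t,z)/\big(\hat g_1(t|x,x_\mathcal N)\hat g_2(z|x,x_\mathcal N)\big)$ and note that $\phi$ is the analogous object with the true nuisances and the population-optimal perturbation, so that $\phi$ still satisfies $\mathbb P\,\varphi(t,z,X,X_\mathcal N;\phi) = 0$ pointwise in $(t,z)$.

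First I would handle the B-spline approximation. Under Condition~\ref{condition 5}, classical spline theory gives that for a function with bounded second derivative (Condition~\ref{condition 3} supplies the smoothness of $g,\mu$ and hence of the population $\epsilon$), the best $L^2$ approximation error over $\mathcal{B}_{K_{n_j}}$ is $O(K_{n_j}^{-2}) = O(n_j^{-1/3})$ with the stated knot count. Next, the statistical error of estimating the spline coefficients by least squares in $\mathcal{L}_3$: with $K_{n_j}$ basis functions and $n_j$ samples in group $j$, standard sieve M-estimation bounds (or a direct Rademacher argument using Condition~\ref{condition 4}, $\mathrm{Rad}_n(\mathcal Q),\mathrm{Rad}_n(\mathcal U) = O(n^{-1/2})$, together with the boundedness in Condition~\ref{condition 1} and sub-Gaussian noise in Condition~\ref{condition 2}) give a fluctuation of order $\sqrt{K_{n_j}/n_j}\cdot\sqrt{\log n_j} = \sqrt{n_j^{-1/3}\log n_j}$; balancing against the approximation term and taking square roots yields the $n_j^{-1/3}\sqrt{\log n_j}$ rate for $j = 0,1$. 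The boundedness of $g^{NN}$ and $g$ away from zero (Condition~\ref{condition 1}) is what keeps the clever-covariate $1/(g_1 g_2)$ Lipschitz in the nuisances so that errors in $\hat g$ propagate linearly.

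For the nuisance-to-limit and residual-bias terms I would invoke Lemma~\ref{lemma: dr condition} directly: because the minimizer satisfies the empirical EIC equation (the display following Eq.~\eqref{loss function}), and $\hat g \to \overline g$, $\hat\mu \to \overline\mu$ uniformly with rates $r_1(n), r_2(n)$, a second-order Taylor expansion of $\mathbb P\,\varphi$ around $(\overline g,\overline\mu)$—using the bounded second derivatives of Condition~\ref{condition 3}—shows the population bias is $O_p(r_1(n)r_2(n))$ when one limit is correct, exactly the mixed-bias structure of Lemma~\ref{lemma: dr condition}. Converting the score equation back to a bound on $\lVert\hat\phi-\phi\rVert_{L^2}$ uses the overlap lower bound once more. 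Assembling the three pieces and using that $n_0, n_1 \le n$ gives the claimed $O_p(n_0^{-1/3}\sqrt{\log n_0} + n_1^{-1/3}\sqrt{\log n_1} + r_1(n)r_2(n))$.

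The main obstacle I anticipate is the empirical-process control of the $\mathcal{L}_3$ step while the nuisances $\hat g,\hat\mu$ are \emph{simultaneously} being estimated rather than fixed: the clever covariate depends on $\hat g$, so the spline least-squares problem is over a random, data-dependent design. Decoupling this—either by a uniform-in-$\mathcal Q\times\mathcal U$ bracketing/Rademacher argument that holds over the whole function class (which is why Condition~\ref{condition 4} bounds $\mathrm{Rad}_n(\mathcal Q),\mathrm{Rad}_n(\mathcal U)$), or by a sample-splitting-style conditioning argument—is the delicate step; everything else is either classical spline approximation or a routine Taylor expansion justified by the boundedness and smoothness conditions.
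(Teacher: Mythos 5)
Your high-level decomposition is essentially the paper's: the appendix proof also splits the error into (a) the deviation of the fitted perturbation term from its population counterpart, (b) an empirical-process term for $\frac{1}{n}\sum_i\hat\mu(\cdot,x_i,x_{\mathcal N_i})$ controlled exactly by the Rademacher bound in Condition~\ref{condition 4}, and (c) the residual population bias, which is handled verbatim by Lemma~\ref{lemma: dr condition} (no Taylor expansion is needed there: the lemma's proof gives the exact mixed-product identity $(g/\hat g-1)(\mu-\hat\mu)$, which is why only one of $\overline g=g$, $\overline\mu=\mu$ is required). Two corrections to your setup: the target should be the averaged dose--response $\hat\psi(t,z)=\frac{1}{n}\sum_i\bigl[\hat\mu(t,z,x_i,x_{\mathcal N_i})+\hat\epsilon(t,z)/\hat g(t,z|x_i,x_{\mathcal N_i})\bigr]$, not the pointwise-in-$(x,x_{\mathcal N})$ object you write down --- double robustness rescues only the averaged functional, and a pointwise bound of the form you describe would require $\hat\mu\to\mu$, which is not assumed; also the knot dimensions should read $K_{n_j}\asymp n_j^{1/6}$ (the theorem statement's exponent is a typo that your sketch inherits).

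The genuine gap is precisely the step you defer as ``the delicate step'': the uniform-over-$(\hat g,\hat\mu)$ analysis of the weighted spline regression in $\mathcal{L}_3$, which is the content of the paper's Lemma~\ref{app: epsilon convergence rate} bounding $\lVert\hat\epsilon_n-\check\epsilon_n\rVert_{L^2}$. Your heuristic fluctuation rate is also wrong in two ways. Arithmetically, with $K_{n_j}\asymp n_j^{1/6}$ one has $\sqrt{K_{n_j}/n_j}\,\sqrt{\log n_j}\asymp n_j^{-5/12}\sqrt{\log n_j}$, not $\sqrt{n_j^{-1/3}\log n_j}$, and nothing ``balances'' to produce $n_j^{-1/3}\sqrt{\log n_j}$ from your numbers. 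More substantively, the classical $\sqrt{K/n}$ sieve variance is not what can be established here, because the design weights $1/\hat g$, the pseudo-outcomes $(y_i-\hat\mu)/\hat g$, and hence the whole least-squares problem depend on the nuisance estimates; the paper pays for the supremum over $\mathcal Q\times\mathcal U$ (via the Rademacher bounds on composite classes such as $\varphi_k(\mathcal Q+\mu)\mathcal U^{-1}$, Hoeffding/Markov bounds, and the sub-Gaussian truncation with a Mills-ratio tail) and obtains a variance term of order $\sqrt{K_{n_j}^2\log n_j/n_j}$. It is this larger variance, balanced against the spline bias $K_{n_j}^{-2}$, that forces $K_{n_j}\asymp n_j^{1/6}$ and yields the stated $n_j^{-1/3}\sqrt{\log n_j}$ rate. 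Without carrying out that uniform argument (or substituting a genuine sample-splitting scheme, which the paper does not use and which would change the estimator), the first term of the claimed rate is unsupported, so the proposal as written does not yet constitute a proof of Theorem~\ref{Theorem convergence}.
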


The required assumptions are mild and common. Assumptions \ref{condition 1}, \ref{condition 3} and the first half of assumption \ref{condition 5} are weak and standard conditions for establishing the convergence rate of spline estimators \cite{HUANG2023,huang2004polynomial,wang2008variable}. The second part of assumption \ref{condition 5}, i.e., $K_{n_1} \asymp n_1^{-1/6}$, $K_{n_0} \asymp n_0^{-1/6}$, restricts the growth rate of $K_{n_1}, K_{n_0}$, which is the typical assumption \cite{HUANG2023,huang2004polynomial,huang2002varying} and the different rates of $K_n$ are taken to achieve uniform bound. The assumption \ref{condition 2} bounds the tail behavior of $V$. Assumption \ref{condition 4} is also a very common assumption for problems with nuisance functions \cite{kennedy2017non,nie2020vcnet,liu2023nonparametric}. Assumption \ref{condition 4} states that at least one of $\hat \mu$ and $\hat g$ should be consistent, which is arguably the most important, and involves the complexity of model space. Since only one of $\hat \mu, \hat g$ is required to be consistent (not both), Theorem  \ref{Theorem convergence} again shows the double robustness of the proposed estimator $\hat \psi$. 

The convergence rate given in Theorem  \ref{Theorem convergence} is a sum of two components. The first, $n_0^{-1/3}\sqrt{\log n_0} + n_1^{-1/3}\sqrt{\log n_1}$, is the rate achieved in term of $\epsilon$ using B-spline estimators. The second component, $r_1(n)r_2(n)$, is the product of the local rates of convergence of the nuisance estimators $\hat \mu$ and $\hat g$ towards their targets $\mu$ and $g$. Thus, if both estimates converge slowly, the convergence rate of $\hat \psi$ will also be slow. However, since the term is a product, $\hat \psi$ enjoys a doubly robust convergence rate. That is, when one of the nuisance estimators is misspecified, then as long as the other one is consistent ($r_1(n)=o(1)$ or $r_2(n)=o(1)$), we still have consistency ($r_1(n)r_2(n)=o(1)$). More attractively, even if we use neural networks as estimators that are consistent based on the universal approximation theorem, the estimator enjoying the double robustness property can achieve a faster convergence rate than non-doubly robust estimators whose convergence rate generally matches that of the nuisance function estimate.

\section{Experiments}

\begingroup
\setlength{\tabcolsep}{13pt} 
\renewcommand{\arraystretch}{1.5} 
\begin{table*}[!h]
\centering
\caption{Experimental results on BC(homo) Dataset. The top result is highlighted in bold, and the runner-up is underlined.}
\label{table: bc}
\resizebox{1.\textwidth}{!}{
\setlength{\tabcolsep}{4pt}
\begin{tabular}{ccccccccccc}
\hline
\makecell[c]{Metric}                       & setting                        & effect     & CFR+z         & GEst     & ND+z          & NetEst        & RRNet & NDR & TNet(w/o. $\mathcal{L}_3$) & TNet    \\ \hline
\multicolumn{1}{c|}{\multirow{6}{*}{ $\varepsilon_{average}$ }} & \multirow{3}{*}{Within Sample} & AME & $  0.1010_{\pm 0.0678 }$ & $ 0.1512 _{\pm 0.1073 }$ & $  0.0868_{\pm 0.0757 }$ & $ 0.1257 _{\pm 0.1343 }$ & $  \underline{0.0877}_{\pm 0.0565 }$  & $0.5033 _{\pm 0.0080}$ & $0.1056 _{\pm 0.0690} $ & \pmb { $0.0481 _{\pm 0.0365} $} \\
\multicolumn{1}{c|}{}                     &                                & ASE      & $ 0.1956 _{\pm 0.0582 }$ & $ 0.1860 _{\pm 0.0225 }$ & $ 0.2140 _{\pm 0.0287 }$ & $ 0.0347 _{\pm 0.0169 }$ & $ \underline{0.0227} _{\pm 0.0165 }$  & $0.2464 _{\pm0.0042}$ & $0.1337 _{\pm0.0139} $ & \pmb { $0.0180 _{\pm0.0183} $}  \\
\multicolumn{1}{c|}{}                     &                                & ATE     & $ 0.2802 _{\pm 0.1814 }$ & $ 0.1342 _{\pm 0.0785 }$ & $  0.3742_{\pm 0.1041 }$ & $ 0.1229 _{\pm 0.0583 }$ & $ 0.0907 _{\pm0.0662  }$  &  \pmb { $0.0284 _{\pm0.0149}$} &$0.2467 _{\pm0.0520} $ &   $\underline{0.0533} _{\pm0.0405}$ \\ \cline{2-11} 
\multicolumn{1}{c|}{}                     & \multirow{3}{*}{Out-of Sample} & AME & $  0.1011_{\pm 0.0681 }$ & $ 0.1534 _{\pm0.1025  }$ & $  0.0901_{\pm 0.0750 }$ & $ 0.1258 _{\pm 0.1350 }$ & $ 0.0879 _{\pm 0.0561 }$  & / & $0.1081 _{\pm 0.0671} $ & \pmb { $0.0481 _{\pm 0.0364} $} \\
\multicolumn{1}{c|}{}                     &                                & ASE      & $  0.1969_{\pm 0.0581 }$ & $ 0.1859 _{\pm 0.0228 }$ & $  0.2127_{\pm 0.0279 }$ & $ 0.0322 _{\pm 0.0173 }$ & $ \underline{  0.0225 } _{\pm 0.0167 }$ & /  &  $0.1238 _{\pm0.0094} $ & \pmb { $0.0179 _{\pm0.0183} $} \\
\multicolumn{1}{c|}{}                     &                                & ATE     & $ 0.2792 _{\pm 0.1826 }$ & $ 0.1298 _{\pm0.0782  }$ & $ 0.3688 _{\pm 0.1040 }$ & $ 0.1238 _{\pm 0.0568 }$ & $ \underline{ 0.0911} _{\pm 0.0667 }$  & / & $0.2358 _{\pm0.0503} $ &  \pmb { $0.0532 _{\pm0.0405} $} \\ \hline

\multicolumn{1}{c|}{\multirow{6}{*}{$\varepsilon_{individual}$}} & \multirow{3}{*}{Within Sample} & IME  & $ 0.1234 _{\pm 0.0580 }$ & $ 0.2021 _{\pm 0.0780 }$ & $  0.1150_{\pm 0.0642 }$ & $ 0.1411 _{\pm 0.1240 }$ & $   \underline{ 0.0951 }_{\pm 0.0527  }$  & / & $0.1497 _{\pm 0.0596} $ & \pmb { $0.0506 _{\pm 0.0352} $} \\
\multicolumn{1}{c|}{}                     &                                & ISE      & $  0.1974_{\pm 0.0579 }$ & $ 0.1890 _{\pm 0.0217 }$ & $ 0.2155 _{\pm0.0289 }$ & $ 0.0493 _{\pm 0.0163 }$ & $  \underline{ 0.0304 }_{\pm 0.0139 }$  & / & $0.1532 _{\pm0.0155} $ & \pmb { $0.0196 _{\pm0.0179} $} \\
\multicolumn{1}{c|}{}                     &                                & ITE      & $ 0.3033 _{\pm 0.1562 }$ & $ 0.1848 _{\pm 0.0635 }$ & $ 0.3780 _{\pm 0.1031 }$ & $ 0.1278 _{\pm 0.0583 }$ & $  \underline{ 0.1010 }_{\pm0.0587  }$  & / & $0.2783 _{\pm0.0493} $ & \pmb { $0.0560 _{\pm0.0383} $} \\ \cline{2-11} 
\multicolumn{1}{c|}{}                     & \multirow{3}{*}{Out-of Sample} & IME & $  0.1254_{\pm 0.0572 }$ & $ 0.2031 _{\pm 0.0749 }$ & $ 0.1195 _{\pm 0.0658  }$ & $ 0.1412 _{\pm 0.1246 }$ & $  \underline{ 0.0953 }_{\pm 0.0524 }$  & / &  $0.1487 _{\pm 0.0574} $ & \pmb { $0.0506 _{\pm 0.0351} $} \\
\multicolumn{1}{c|}{}                     &                                & ISE      & $ 0.1987 _{\pm 0.0578 }$ & $ 0.1890 _{\pm 0.0217 }$ & $  0.2142_{\pm0.0277 }$ & $ 0.0465 _{\pm 0.0159 }$ & $  \underline{ 0.0306 }_{\pm 0.0144 }$  & / & $0.1411 _{\pm0.0105} $ &  \pmb { $0.0195 _{\pm0.0178} $} \\
\multicolumn{1}{c|}{}                     &                                & ITE      & $ 0.3061 _{\pm 0.1524 }$ & $ 0.1822 _{\pm 0.0626 }$ & $  0.3730_{\pm 0.1026 }$ & $ 0.1283 _{\pm 0.0569 }$ & $ \underline{ 0.1019 }_{\pm 0.0589 }$  & / & $0.2612 _{\pm0.0472} $ & \pmb { $0.0560 _{\pm0.0380} $} \\ \hline
\end{tabular}
}
\end{table*}

In this section, we validate the proposed method \textbf{TNet} on two commonly used semisynthetic datasets. In detail, we verify the effectiveness of our algorithm and further evaluate the correctness of the analysis with the help of semisynthetic datasets. In particular, we aim to answer the following research questions (RQs):
\begin{itemize}[itemsep=2pt,topsep=1pt,parsep=1pt]
    \item \textbf{RQ1}: How does the proposed method compare with the existing methods in terms of effect estimation performance?
    \item \textbf{RQ2}: How does the perturbation estimator module affect the performance of our methods?
    \item \textbf{RQ3}: Des our method stably perform well under different choices of hyperparameters?
\end{itemize}

We first introduce the experimental setup and then answer the questions above by conducting corresponding experiments. Additional results can be found in Appendix.

\subsection{Datasets}
It is impossible to observe the potential outcome $y_i(t^\prime _i, z^\prime _i)$ for a unit $i$ receiving $t_i,z_i$. Thus, following existing works \cite{jiang2022estimating, guo2020learning, ma2021deconfounding, cai2023generalization}, our semisynthetic datasets to evaluate our proposed method are from:
\begin{itemize}[itemsep=1pt,topsep=1pt,parsep=1pt]
    \item \textbf{BlogCatalog (BC)} is an online community where users post blogs. In this dataset, each unit is a blogger and each edge is the social link between units. The features are bag-of-words representations of keywords in bloggers' descriptions.
    \item \textbf{Flickr} is an online social network where users can share images and videos. In this dataset, each unit is a user and each edge is the social relationship between units. The features are the list of tags of units' interests.
\end{itemize}

We reuse the data generation by \citet{jiang2022estimating}.
As for the original datasets, the potential outcome is simulated by
\begin{equation*}
    \begin{aligned}
    \small
        y_i(t_i,z_i) = t_i + z_i + po_i + 0.5 \times po_{\mathcal N_i} + e_i,
    \end{aligned}
\end{equation*}
where $e_i$ is a Gaussian noise term, and $po_i = Sigmoid(w_2 \times x_i)$, and $ po_{\mathcal N_i}$ is the averages of $po_i$. Here, $w_2$ is a randomly generated weight vector that mimics the causal mechanism from the features to outcomes. We denote them as \textbf{BC(homo)} and \textbf{Flickr(homo)}\footnote{Original datasets are available at \url{https://github.com/songjiang0909/Causal-Inference-on-Networked-Data}. The detailed data generation process can also be found in Appendix.} because the original datasets measure the homogeneous causal effects.
We modify the outcome generation for heterogeneous effect estimation, as
\begin{equation*}
   \begin{aligned}
   \small
        y_i(t_i,z_i) =
        & t_i + z_i + po_i + 0.5 \times po_{\mathcal N_i} 
        \\ & + t_i \times( po_i + 0.5 \times po_{\mathcal N_i} ) + e_i,
    \end{aligned}
\end{equation*}
which is denoted as \textbf{BC(hete)} and \textbf{Flickr(hete)} \footnote{We also conduct experiments verifying the heterogeneous effect regarding $z$ in Appendix.}.

\subsection{Baselines, Metrics}

\subsubsection{Baselines} We compare our methods \textbf{TNet} with several baselines, including neural network-based and non-neural network-based methods\footnote{The implementation details are in Appendix. Our code is available at \url{https://github.com/WeilinChen507/targeted_interference} and \url{https://github.com/DMIRLAB-Group/TNet}.}. We modify CFR \cite{johansson2021generalization} and ND \cite{guo2020learning} by additionally inputting the exposure $z_i$, denoted as \textbf{CFR+z} and \textbf{ND+z} respectively. We also take baselines that are designed for effect estimation under networked interference, including \textbf{GEst} \cite{ma2021causal},  \textbf{NetEst} \cite{jiang2022estimating} and \textbf{RRNet} \cite{cai2023generalization}. We use the semiparametric doubly robust estimator under network interference, named \textbf{NDR} \cite{liu2023nonparametric}, as one of the baselines. Since NDR is used to identify average effects on the given training data, we only report its results regarding AME, ASE, and ATE on \textit{Within Sample}.

\subsubsection{Metrics} In this paper, we use the Mean Absolute Error (MAE) on  AME, ASE, and ATE as our metric, i.e., $\varepsilon_{average}= | \hat \tau - \tau|$, where $\tau$ and $\hat \tau$ are the average causal effect and estimated one. 
We also use the Rooted Precision in Estimation of Heterogeneous Effect on IME, ASE, and ITE, $\varepsilon_{individual} = \sqrt{\frac{1}{n} \Sigma_{i=1}^n (\hat \tau_i - \tau_i)^2 }$, where $\tau_i$ and $\hat \tau_i$ are the individual causal effect and estimated one.
The mean and standard deviation of these metrics via $5$ times running are reported. Note that our main estimands are AME, ASE, and ATE in this paper.

\subsection{Outperforming Existing Methods (RQ1)}

As shown in Table \ref{table: bc}, we have conducted experiments by running TNet and several baselines. Overall, TNet outperforms baselines, showing its effectiveness. Specifically, focusing on the metrics for average causal effects, i.e., AME, ASE, and ATE, compared with the neural network-based methods, TNet emerges as the top-performing solution, exhibiting not only the lowest Mean Absolute Error (MAE) but also a minimal standard deviation. This consistency signifies TNet's effectiveness and stability. Compared with NDR, TNet still outperforms it, and the reasons might be the flexibility of our one-step learning network that shares information between two nuisance functions and the inappropriate assumptions of NDR.
Turning attention to the individual causal effects, i.e., IME, ISE, and ITE, TNet consistently provides accurate estimates that align closely with the baselines. This showcases that while TNet is purposefully designed as a DR estimator for average effects, it remains a robust and highly effective estimator for individual causal effects as well.

\subsection{Ablation on $\mathcal L_3$ (RQ2)}

As shown in Table \ref{table: bc}, we have conducted a comparison between the performances of TNet and TNet(w/o. $\mathcal L_3$) across two datasets. Overall, TNet outperforms TNet(w/o. $\mathcal L_3$). This result is expected, given that $\mathcal L_3$ introduces double robustness to TNet, enabling the conditional outcome model and the propensity score model to collaboratively mitigate bias as shown in Theorem \ref{Theorem convergence}. Moreover, TNet(w/o. $\mathcal L_3$) performs worse than NetEst and RRNet, because it can be seen as a 'pure' variant of these methods without the balancing/reweighting modules in their networks.


\begin{figure}[!h]
	\centering
	\subfigure[$\alpha=1.0$ (on BC)]{\includegraphics[width=.48\linewidth]{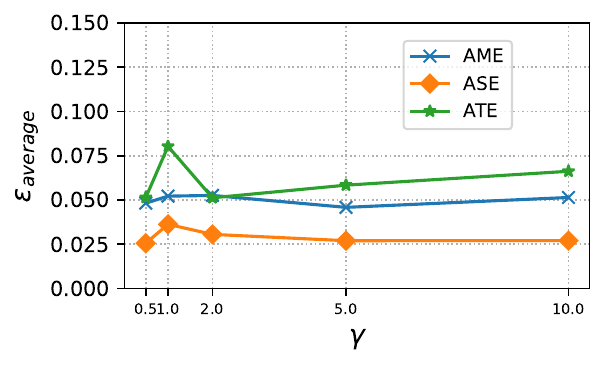}} 
	\subfigure[$\gamma=1.0$ (on BC)]{\includegraphics[width=.480\linewidth]{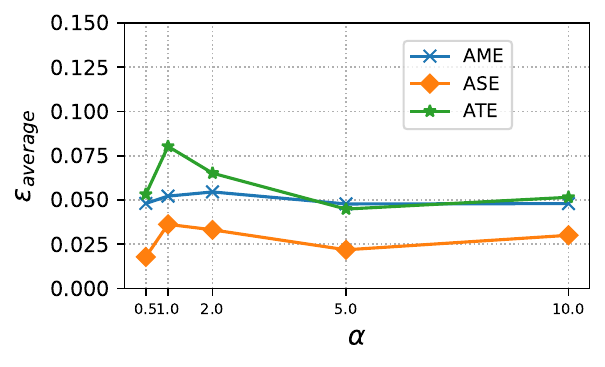}} 
    \subfigure[$\alpha=1.0$ (on BC(hete))]{\includegraphics[width=.48\linewidth]{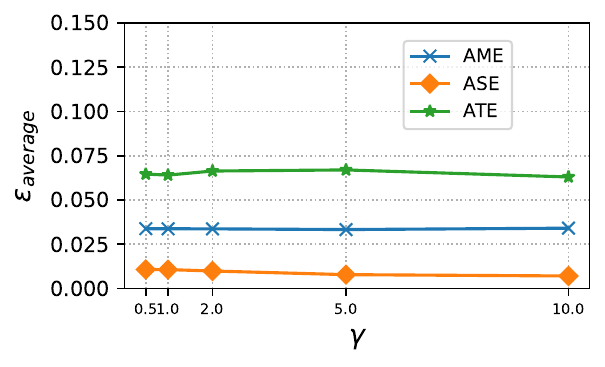}}  
	\subfigure[$\gamma=1.0$ (on BC(hete))]{\includegraphics[width=.48\linewidth]{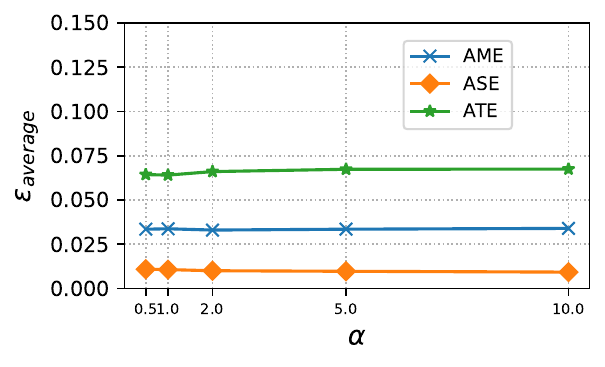}}  
    \\
	\caption{Sensitivity analysis results on BC and BC\_hete datasets.} \label{figure: hyperparameter sensitivity}
\end{figure}

\subsection{Stability Regarding Hyperparameters (RQ3)}

In Figure \ref{figure: hyperparameter sensitivity}, we perform experiments with TNet using different values of hyperparameters $\alpha$ and $\gamma$. By fixing one hyperparameter and varying another one, we observe that TNet consistently achieves the MAEs of less than $0.1$ for three kinds of causal effects. Notably, when either $\alpha$ or $\gamma$ becomes excessively large, there is a slight decrease in TNet's performance. This can be attributed to the loss imbalance caused by extremely large hyperparameters. From these findings, we can conclude that TNet exhibits robustness to variations in hyperparameter choices. Additional stability experimental results on other datasets and regarding the grid numbers $B$ can be found in Appendix.

\section{Conclusion}
\label{conclusion}

In this work, we address the problem of how to design a targeted estimator under networked interference.
Specifically, we adapt the TMLE technique to accommodate networked interference and establish the condition under which the estimator achieves double robustness. 
Leveraging our theoretical findings, we develop our estimator TNet, by adapting the identified condition into a targeted loss, ensuring the double robustness property of our causal effect estimator under networked interference.
We provide the theoretical analysis of our design estimator, showing its advantages in terms of convergence rate.
Compared with existing methods that solely model the conditional outcome or the propensity score, our estimator achieves lower bias and double robustness.
Extensive experimental results verify the correctness of our theory and the effectiveness of TNet.

\section*{Acknowledgements}
The authors would like to thank the anonymous reviewers for their helpful comments. 
This research was supported in part by
National Key R\&D Program of China (2021ZD0111501), 
National Science Fund for Excellent Young Scholars (62122022), 
Natural Science Foundation of China (62206064, 62206061), 
Guangdong Basic and Applied Basic Research Foundation (2024A1515011901),
Guangzhou Basic and Applied Basic Research Foundation (2023A04J1700),
and CCF-DiDi GAIA Collaborative Research Funds (CCF-DiDi GAIA 202311).

\section*{Impact Statement}
This paper presents TNet whose goal is to estimate causal effects under networked interference. Our TNet could be applied to a wide range of applications, such as decision-making in marketing, and preventive measure study in epidemiology.

\nocite{langley00}

\bibliography{example_paper}
\bibliographystyle{icml2024}

\newpage
\appendix
\onecolumn

\section{Complete Definitions of Causal Effects}

Due to the limited space of the main body, we give more detailed definitions of IME, ISE, and ITE, as well as AME, ASE, and ATE, in the following Appendix. Note that our main goal is to infer AME, ASE, and ATE.

\begin{definition} [Individual Main Effects (IME) ] IME measures the difference in mean outcomes of a particular unit  $x_i$ assigned to $T=t, Z=0$ and assigned $T=t^\prime, Z=0$:
    $\tau_i(x_i,x_{\mathcal N_i})^{(t,0),(t^{\prime},0)} =\mu(x_i,x_{\mathcal N_i}, t,0) -\mu(x_i,x_{\mathcal N_i}, t^\prime,0) $. 
\end{definition}

\begin{definition} [Individual Spillover Effects (ISE) ] ISE measures the difference in mean outcomes of a particular unit $x_i$ assigned to $T=0, Z=z$ and assigned $T=0, Z=z^\prime$:
    $\tau_i(x_i,x_{\mathcal N_i})^{(0,z),(0,z^{\prime})} =\mu(x_i,x_{\mathcal N_i}, 0,z) -\mu(x_i,x_{\mathcal N_i}, 0,z^{\prime}) $. 
\end{definition}

\begin{definition} [Individual Total Effects (ITE) ] ITE measures the difference in mean outcomes of a particular unit  $x_i$ assigned to $T=t, Z=z$ and assigned $T=t^\prime, Z=z^\prime$:
$\tau_i(x_i,x_{\mathcal N_i})^{(t,z),(t^\prime, z^\prime)} =\mu(x_i,x_{\mathcal N_i}, t,z) -\mu(x_i,x_{\mathcal N_i}, t^\prime, z^\prime) $. 
\end{definition}

The main effects reflect the effects of changing treatment $t$ to $t^\prime$ of the specified unit $x_i$. The spillover effects reflect the effects of changing neighborhood exposure $z$ to $z^\prime$ of the specified unit $x_i$. Similarly, the total effects represent the combined effect of both main effects and spillover effects of the specified unit $x_i$.

\begin{definition} [Average Main Effects (AME) ] AME measures the difference in mean outcomes between units assigned to $T=t, Z=0$ and assigned $T=t^\prime, Z=0$:
    $\tau^{(t,0),(t^{\prime},0)} =\psi(t,0) -\psi(t^\prime,0) $. 
\end{definition}

\begin{definition} [Average Spillover Effects (ASE) ] ASE measures the difference in mean outcomes between units assigned to $T=0, Z=z$ and assigned $T=0, Z=z^\prime$:
    $\tau^{(0,z),(0,z^{\prime})} =\psi(0,z) -\psi(0,z^{\prime}) $. 
\end{definition}

\begin{definition} [Average Total Effects (ATE) ] ATE measures the difference in mean outcomes between units assigned to $T=t, Z=z$ and assigned $T=t^\prime, Z=z^\prime$:
$\tau^{(t,z),(t^\prime, z^\prime)} =\psi(t,z) -\psi(t^\prime, z^\prime) $. 
\end{definition}

\section{Additional Notations}

We denote the sample size as $n$, the sample size of $t=1$ as $n_1$, and the sample size of $t=0$ as $n_0$. We denote $\mathbb E$ as expectation, $\mathbb P$ as population probability measure, $\mathbb P_n$ as the expirical measure and write $\mathbb P (f)=\int f(x) d \mathbb P(z)$, $\mathbb P_n (f)=\int f(x) d \mathbb P_n(z)$. We denote $\tau$ as Rademacher random variables, and denote Rademacher complexity of a function class $\mathcal F : \mathcal{X} \rightarrow \mathbb R$ as  $\text{Rad}_n(\mathcal F)= \mathbb E (\sup _ {f \in \mathcal F} |\frac{1}{n} \Sigma_{i=1}^n \tau_i f(X_i) |)$. Given two functions $f_1,f_2: \mathcal{X} \rightarrow \mathbb R$, we define $\lVert f_1 -f_2 \rVert_\infty = \sup_{x\in \mathcal{X}} |f_1(x)-f_2(x)| $ and $ \lVert f_1-f_2\rVert_{L^2}=( \int _{x\in \mathcal{X}} (f_1(x)-f_2(x) )^2dx)^{1/2}$. For a function class $\mathcal{F}$, we denote $\lVert \mathcal{F} \rVert_{\infty} = \sup _{f\in \mathcal{F}} \lVert f \rVert _\infty$. For any function $f$ and function spaces $\mathcal F_1, \mathcal F_2$, we write  $\mathcal F_1 + \mathcal F_2 = \{f_1+f_2:f_1\in \mathcal F_1, f_2 \in \mathcal F_2\}$, $\mathcal F_1 \mathcal F_2=\{ f_1f_2:f_1\in \mathcal F_1, f_2 \in \mathcal F_2 \}$, $f\mathcal F = \{fh: h\in \mathcal{F} \}$, $f\circ \mathcal{F}=\{f\circ h: h\in \mathcal{F}\}$, and $\mathcal{F}^a=\{ f^a:f\in \mathcal{F} \}, \forall a \in \mathbb R$.

Further, we denote stochastic boundedness with $O_p$ and convergence in probability with $o_p$. We denote $X_1 \Vbar X_2$ as the independence between $X_1, X_2$. We use $a_n \asymp b_n$ to denote both $a_n / b_n$ and $b_n / a_n$ are bounded. We use $a_n \lesssim b_n$ to denote both $a_n \leq C b_n$ for some constant $C > 0$. We use $\hat{\circ}$ to denote the minimizer of loss function, i.e., $\hat{u}, \hat{g}, \hat{\epsilon}$ is the minimizer of $\mathcal{L}$. We use $\overline \circ$ to denote a fixed function that $\hat{\circ}$ converges, e.g, $\hat{\mu}$ converges in the sense that $\lVert \hat \mu - \overline \mu \rVert _\infty = o_p(1)$. Let $ \mathcal Q, \mathcal U$ as the functional space in which $g^{NN}, u^{NN}$ lie. We also define $\check \epsilon_n (\cdot) = \mathbb P [(Y-\hat{\mu}_n)/\hat g_n | T=\cdot,Z=\cdot ] / \mathbb P [\hat g_n^2 | T=\cdot, Z=\cdot]$.

\section{Proof of Theorem 4.2}

We restate the Theorem \ref{theo: POM eic} as follows:

\begin{theorem} \label{app theo: POM eic} For $t \in \{0,1\}, z \in [0,1]$, the efficient influence curve of $\psi(t,z)$ is:
    \begin{equation}
    \begin{aligned}
        & \varphi (t,z,X,X_{\mathcal N}; \mu ,g,\psi)  \left( \frac{\mathds{1}_{T,Z}(t,z)}{g(t,z|X,X_{\mathcal N})} \right) \left( y - \mu (t,z,X,X_{\mathcal N}) \right)   +  \mu (t,z,X,X_{\mathcal N})
        - \psi(t,z), 
    \end{aligned}
    \end{equation}
     where $\mu (t,z,X,X_{\mathcal N}):=\mathbb E[Y|t,z,X,X_{\mathcal N}]$ and $g(t,z|X,X_{\mathcal N}):=\mathbb E[t,z|X,X_{\mathcal N}]$.
\end{theorem}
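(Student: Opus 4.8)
The plan is to compute the efficient influence curve of the target parameter $\psi(t,z)=\mathbb{E}[Y(t,z)]$ by treating it, after identification, as a function of the observed data distribution and differentiating along a one-dimensional parametric submodel. First I would use the identification result established in the excerpt, namely
\begin{equation*}
  \psi(t,z)=\mathbb{E}\big[\mathbb{E}[Y\mid T=t,Z=z,X,X_{\mathcal N}]\big],
\end{equation*}
so that $\psi(t,z)$ is a smooth functional of the law $P$ of $O=(X,X_{\mathcal N},T,Z,Y)$ through the outcome regression $\mu(t,z,X,X_{\mathcal N})=\mathbb{E}[Y\mid t,z,X,X_{\mathcal N}]$ and the marginal law of $(X,X_{\mathcal N})$. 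The generalized propensity $g(t,z\mid X,X_{\mathcal N})$ enters only through the tangent-space geometry.

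The key steps, in order: (1) fix a regular parametric submodel $\{P_\eta\}$ passing through $P$ at $\eta=0$ with score $s(O)=\partial_\eta\log p_\eta(O)\big|_{\eta=0}$, and decompose $s$ along the standard factorization $p(O)=p(x,x_{\mathcal N})\,g(t,z\mid x,x_{\mathcal N})\,p(y\mid t,z,x,x_{\mathcal N})$ into the three orthogonal score components $s_{XX_{\mathcal N}}$, $s_{TZ\mid XX_{\mathcal N}}$, $s_{Y\mid\cdot}$; (2) differentiate $\psi_\eta(t,z)=\int \mu_\eta(t,z,x,x_{\mathcal N})\,dP_\eta(x,x_{\mathcal N})$ with respect to $\eta$, producing one term from the moving regression, $\int \partial_\eta\mu_\eta(t,z,x,x_{\mathcal N})\,dP(x,x_{\mathcal N})$, and one term from the moving covariate law, $\int \mu(t,z,x,x_{\mathcal N})\,s_{XX_{\mathcal N}}(x,x_{\mathcal N})\,dP$; (3) rewrite $\partial_\eta\mu_\eta$ as a conditional expectation of $(Y-\mu)s_{Y\mid\cdot}$ and then, via an importance-weighting (change of conditioning) identity, express the first term as $\mathbb{E}\big[\frac{\mathds{1}_{T,Z}(t,z)}{g(t,z\mid X,X_{\mathcal N})}(Y-\mu(t,z,X,X_{\mathcal N}))\,s(O)\big]$; (4) rewrite the second term as $\mathbb{E}\big[(\mu(t,z,X,X_{\mathcal N})-\psi(t,z))\,s(O)\big]$ using that $s_{XX_{\mathcal N}}$ has mean zero and is orthogonal to the other components; (5) conclude $\partial_\eta\psi_\eta(t,z)\big|_0=\mathbb{E}[\varphi(O)\,s(O)]$ with $\varphi$ as claimed, so $\varphi$ is a gradient of $\psi(t,z)$; (6) verify $\varphi$ lies in the tangent space (it is a sum of a component in the $Y$-tangent space and a component in the $(X,X_{\mathcal N})$-tangent space, with zero projection onto the treatment score space), which forces $\varphi$ to be \emph{the} canonical gradient, i.e. the efficient influence curve. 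One should also check that $\mathbb{E}[\varphi]=0$, which is immediate from iterated expectations and Assumptions~\ref{asmp: consistency}–\ref{asmp: Network unconfounderness}.

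The main obstacle is step (3): justifying the change-of-conditioning identity that turns a derivative of the outcome regression at the single point $(t,z)$ into a weighted score over the observed $(T,Z)$. Concretely one needs $\mathbb{E}[h(X,X_{\mathcal N})\mid T=t,Z=z]\,p(t,z)$-type manipulations to be valid, which relies crucially on the overlap Assumption~\ref{asmp: Overlap} (so $g(t,z\mid X,X_{\mathcal N})$ is bounded away from zero) and on treating the continuous exposure $Z$ carefully — formally the ``indicator'' $\mathds{1}_{T,Z}(t,z)$ and the density $g(t,z\mid\cdot)$ should be read in the sense appropriate to the continuous coordinate (e.g. via a smooth approximation of the point mass at $z$, or by working with the pathwise derivative of $\psi(\cdot,z)$ as a functional into a function space and extracting the pointwise representer). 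A secondary subtlety, which I would flag but not belabor, is that the data are networked and hence not i.i.d.; however, since $\psi(t,z)$ and its EIC are defined unit-wise through the marginal law of $(X,X_{\mathcal N},T,Z,Y)$ induced by a single unit and its neighbourhood, the EIC computation is identical to the i.i.d.\ case, and the dependence structure only matters later (for the convergence-rate analysis, not for Theorem~\ref{theo: POM eic} itself).
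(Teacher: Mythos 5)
Your plan is correct in substance, but it follows a genuinely different route from the paper. The paper derives the influence curve as a Gateaux derivative under a point-mass contamination: it writes $\Psi(P_\epsilon)$ with $p_\epsilon(\cdot)=\epsilon\,\mathds{1}_{\tilde o}(\cdot)+(1-\epsilon)p_0(\cdot)$ for a single observation $\tilde o=(\tilde x,\tilde x_{\mathcal N},\tilde t,\tilde z,\tilde y)$, applies the chain rule to the identified functional $\int y\,p_\epsilon(y\mid t,z,x,x_{\mathcal N})\,p_\epsilon(x,x_{\mathcal N})\,dy\,dx\,dx_{\mathcal N}$, and reads off $\varphi$ directly from $\frac{\partial\Psi(P_\epsilon)}{\partial\epsilon}\big|_{\epsilon=0}$ (following the Hines et al.\ ``demystifying'' recipe). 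You instead take the classical semiparametric route: regular parametric submodels, orthogonal decomposition of the score along $p(x,x_{\mathcal N})\,g(t,z\mid x,x_{\mathcal N})\,p(y\mid t,z,x,x_{\mathcal N})$, the identity $\partial_\eta\psi_\eta=\mathbb{E}[\varphi\,s]$, and an explicit tangent-space membership check to upgrade ``a gradient'' to ``the canonical gradient.'' The trade-off: the paper's computation is shorter and purely mechanical, but it presupposes that the functional is pathwise differentiable and that in the nonparametric model any gradient is automatically the efficient one, so it never has to argue about the tangent space; your route supplies exactly that missing verification (your steps (1) and (6)) at the cost of the heavier machinery of the score decomposition and the change-of-conditioning identity in your step (3). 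Both derivations share the same unaddressed delicacy you rightly flag and the paper does not: for continuous $Z$ the pointwise $\psi(t,z)$ is not a regular parameter, so $\mathds{1}_{T,Z}(t,z)$ and $g(t,z\mid\cdot)$ must be read formally (or via smoothing/kernel localization), which is also how the paper implicitly treats them. Your closing remark that the network dependence does not affect the unit-level EIC computation, only the later asymptotics, matches the paper's stance.
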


Our main notations and the proof structure follow \cite{hines2022demystifying}, which gives derivations of EIC under no interference assumption.

\begin{proof}
    Under the usual identifying assumptions, the conditional average dose-response function $\psi(t_0,z_0)$ can be written as:
\begin{equation}
    \Psi(P_0) = \psi(t_0,z_0) = \mathbb E[ \mathbb E[Y|X,X_ \mathcal N,T=t_0,Z=z_0]],
\end{equation}
where $P_0$ denote the underlying true distribution, and $\Psi(P_0)$ is a function that maps the distribution $P_0$ into the target parameter, potential outcome mean.

Pertubing $P_0$ in the direction of a point mass at $(\widetilde x, \widetilde x_ \mathcal N, \widetilde t,  \widetilde z, \widetilde y)$, we have
\begin{equation}
  \begin{aligned}
      \Psi(P_\epsilon) & = \int y p_\epsilon (y|t_0,z_0,x,x_ \mathcal N) p_\epsilon (x,x_ \mathcal N) dydx dx_ \mathcal N\\
      & = \int \frac{y p_\epsilon(y,t_0,z_0,x,x_ \mathcal N) p_\epsilon(x,x_ \mathcal N)}{p_\epsilon(t_0,z_0,x,x_ \mathcal N)} dydxdx_ \mathcal N,
  \end{aligned}
\end{equation}
where $P_\epsilon$ is the distribution under the parametric submodel and $p_\epsilon$ is the corresponding density function under the parametric submodel. To get the EIC, by the chain rule, we have:
\begin{equation} \label{eq: targeting po}
    \begin{aligned}
         \frac{\partial \Psi(P_\epsilon)}{\partial \epsilon }\big | _{\epsilon=0} 
        = & \int y \left[ \frac{p_0(x,x_ \mathcal N)}{p_0(t_0,z_0,x,x_ \mathcal N)} \frac{d}{d\epsilon} p_\epsilon(y,t_0,z_0,x,x_ \mathcal N)\big |_{\epsilon=0}  
          + \frac{p_0(y,t_0,z_0,x,x_ \mathcal N)}{p_0(t_0,z_0,x,x_ \mathcal N)} \frac{d}{d\epsilon} p_\epsilon(x,x_ \mathcal N)\big |_{\epsilon=0}  \right. \\
          & \left. 
          - \frac{p_0(y,t_0,z_0,x,x_ \mathcal N) p_0(x,x_ \mathcal N)}{p_0(t_0,z_0,x,x_ \mathcal N) ^2} \frac{d}{d\epsilon} p_\epsilon(t_0,z_0,x,x_ \mathcal N) \big |_{\epsilon=0}  \right]
          dydxdx_ \mathcal N \\
        = & \int y \frac{p_0(y,t_0,z_0,x,x_ \mathcal N) p_0(x,x_ \mathcal N)}{p_0(t_0,z_0,x,x_ \mathcal N)} 
        \left[ \frac{\frac{d}{d\epsilon} p_\epsilon(y,t_0,z_0,x,x_ \mathcal N)\big |_{\epsilon=0}}{p_0(y,t_0,z_0,x,x_ \mathcal N)}  + \frac{\frac{d}{d\epsilon} p_\epsilon(x,x_ \mathcal N)\big |_{\epsilon=0}}{p_0(x,x_ \mathcal N)} 
         \right. \\
        & \left. 
        - \frac{\frac{d}{d\epsilon} p_\epsilon(t_0,z_0,x,x_ \mathcal N) \big |_{\epsilon=0} }{p_0(t_0,z_0,x,x_ \mathcal N)}  \right]dydxdx_ \mathcal N,
    \end{aligned}
\end{equation}
where $p_\epsilon(x)$ is the density value with perturbing in the direction of a single observation $\widetilde x$, that is
\begin{equation*}
    \begin{aligned}
        p_\epsilon(x) = \epsilon \mathds{1}_{\widetilde x}(x) + (1-\epsilon) p_0(x), 
    \end{aligned}
\end{equation*}
and we have
\begin{equation} \label{eq: targeting density value}
    \begin{aligned}
        \frac{\partial p_\epsilon(x)}{\partial \epsilon} = \mathds{1}_{\widetilde x}(x) - p_0(x).
    \end{aligned}
\end{equation}
Then, substituting Eq. \ref{eq: targeting density value} to Eq. \ref{eq: targeting po}, we have:
\begin{equation}
    \begin{aligned}
         & \frac{\partial \Psi(P_\epsilon)}{\partial \epsilon }\big | _{\epsilon=0} \\
        = & \int y \frac{p_0(y,t_0,z_0,x,x_ \mathcal N) p_0(x,x_ \mathcal N)}{p_0(t_0,z_0,x,x_ \mathcal N)} 
        \left[ \frac{ \mathds{1}_{\widetilde x, \widetilde x_ \mathcal N, \widetilde t,  \widetilde z, \widetilde y}(x,x_ \mathcal N,t_0,z_0,y) - p_0(x,x_ \mathcal N,t_0,z_0,y) }{p_0(y,t_0,z_0,x,x_ \mathcal N)} 
        \right. \\
        & \left. 
          + \frac{ \mathds{1}_{\widetilde x,\widetilde x_ \mathcal N}(x,x_ \mathcal N) - p_0(x,x_ \mathcal N) }{p_0(x,x_ \mathcal N)} 
        - \frac{ \mathds{1}_{\widetilde x,\widetilde x_ \mathcal N, \widetilde t,  \widetilde z}(x,x_ \mathcal N, t_0,z_0)  -p_0(t_0,z_0,x,x_ \mathcal N) }{p_0(t_0,z_0,x,x_ \mathcal N)}  \right]dydxdx_ \mathcal N \\
        = & \int y \frac{p_0(y,t_0,z_0,x,x_ \mathcal N) p_0(x,x_ \mathcal N)}{p_0(t_0,z_0,x,x_ \mathcal N)} 
        \left[\frac{ \mathds{1}_{\widetilde x, \widetilde x_ \mathcal N, \widetilde t,  \widetilde z, \widetilde y}(x,x_ \mathcal N,t_0,z_0,y) }{p_0(y,t_0,z_0,x,x_ \mathcal N)} 
          + \frac{ \mathds{1}_{\widetilde x,\widetilde x_ \mathcal N}(x,x_ \mathcal N) }{p_0(x,x_ \mathcal N)} - \frac{ \mathds{1}_{\widetilde x,\widetilde x_ \mathcal N, \widetilde t,  \widetilde z}(x,x_ \mathcal N, t_0,z_0) }{p_0(t_0,z_0,x,x_ \mathcal N)} \right. \\
          & \left. -1  \right]dydxdx_ \mathcal N \\
        = & \widetilde y \frac{\mathds{1}_{\widetilde T,\widetilde Z}(t_0,z_0) p_0(\widetilde x,\widetilde x_ \mathcal N)}{p_0(t_0,z_0,\widetilde x,\widetilde x_ \mathcal N)} 
            + \int y \frac{p_0( y, t_0,z_0, \widetilde x,\widetilde x_ \mathcal N)}{p_0(t_0,z_0,\widetilde x,\widetilde x_ \mathcal N)} dy   
            - \int y \frac{\mathds{1}_{\widetilde T,\widetilde Z}(t_0,z_0) p_0( y, t_0,z_0, \widetilde x,\widetilde x_ \mathcal N) p_0(\widetilde x,\widetilde x_ \mathcal N)}{p_0(t_0,z_0,\widetilde x,\widetilde x_ \mathcal N) ^2} dy  \\
         &  - \int  y \frac{p_0(y,t_0,z_0,x,x_ \mathcal N) p_0(x,x_ \mathcal N)}{p_0(t_0,z_0,x,x_ \mathcal N)} dydxdx_ \mathcal N \\
        = & \widetilde y \frac{\mathds{1}_{\widetilde T,\widetilde Z}(t_0,z_0) p_0(\widetilde x,\widetilde x_ \mathcal N)}{p_0(t_0,z_0,\widetilde x,\widetilde x_ \mathcal N)}  
        + \mathbb E_{p_0}[Y|T=t_0,Z=z_0,X = \widetilde x,X_ \mathcal N=\widetilde x_ \mathcal N]  \\
        & - \mathbb E_{p_0}[Y|T=t_0,Z=z_0,X = \widetilde x,X_ \mathcal N=\widetilde x_ \mathcal N] 
        \frac{\mathds{1}_{\widetilde T,\widetilde Z}(t_0,z_0) p_0(\widetilde x,\widetilde x_ \mathcal N)}{p_0(t_0,z_0,\widetilde x,\widetilde x_ \mathcal N)} 
            - \Psi(P_0) \\
        = &  \frac{\mathds{1}_{\widetilde T,\widetilde Z}(t_0,z_0) p_0(\widetilde x,\widetilde x_ \mathcal N)}{p_0(t_0,z_0,\widetilde x,\widetilde x_ \mathcal N)} 
        \left( \widetilde y - \mathbb E_{p_0}[Y|T=t_0,Z=z_0,X = \widetilde x,X_ \mathcal N=\widetilde x_ \mathcal N] \right) \\ &
        +  \mathbb E_{p_0}[Y|T=t_0,Z=z_0,X = \widetilde x,X_ \mathcal N=\widetilde x_ \mathcal N]   - \Psi(P_0) \\
        = & \frac{\mathds{1}_{\widetilde T,\widetilde Z}(t_0,z_0)}{ g (t_0,z_0 | \widetilde x, \widetilde x_ \mathcal N)} \left(\widetilde y - \mu (t_0,z_0,\widetilde x, \widetilde x_ \mathcal N) \right)  
        +   \mu (t_0,z_0,\widetilde x, \widetilde x_ \mathcal N) 
        - \Psi(P_0),
    \end{aligned}
\end{equation}
which finishes our proof.
\end{proof}

\section{Proof of Lemma 4.3}
We restate the Lemma \ref{lemma: dr condition} as follows:

\begin{lemma} \label{app lemma: dr condition}
For $t \in \{0,1\}, z \in [0,1]$, if the models $\hat g $ and $\hat \mu$ solving EIC, $\mathbb{P}  \varphi(t,z,X,X_{\mathcal N}; \hat \mu, \hat g, \psi)=0$, then the estimator $\hat \psi(t,z)$ for $\psi(t,z)$ is the doubly robust, i.e., if either $\hat g = g$ or  $\hat \mu=\mu$, then $\hat \psi=\psi$. Further, if $\rVert \hat g - g \lVert_\infty = O_p(r_1(n))$ and $\rVert \hat \mu - \mu \lVert_\infty = O_p(r_2(n))$, we have 
\begin{equation*}
    \sup_{t,z \in \mathcal{T},\mathcal{Z}} | \mathbb{P} \varphi(t,z,X,X_{\mathcal N}; \hat \mu, \hat g, \psi) |=O_p(r_1(n)r_2(n)).
\end{equation*}
\end{lemma}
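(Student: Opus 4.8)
The plan is to establish the double-robustness identity first, namely an explicit formula for the ``bias'' $\mathbb{P}\varphi(t,z,X,X_{\mathcal N};\hat\mu,\hat g,\psi)$ as a product of the two nuisance errors, and then read off both claims (exact consistency and the rate) from that formula. First I would expand, using the identification $\psi(t,z) = \mathbb{P}\,\mu(t,z,X,X_{\mathcal N})$ from the excerpt,
\begin{equation*}
\mathbb{P}\varphi(t,z,X,X_{\mathcal N};\hat\mu,\hat g,\psi)
= \mathbb{P}\!\left[\frac{\mathds{1}_{T,Z}(t,z)}{\hat g(t,z\mid X,X_{\mathcal N})}\bigl(Y-\hat\mu(t,z,X,X_{\mathcal N})\bigr)\right] + \mathbb{P}\,\hat\mu(t,z,X,X_{\mathcal N}) - \psi(t,z).
\end{equation*}
The key move is to condition on $(X,X_{\mathcal N})$ inside the first term and use $\mathbb{E}[\mathds{1}_{T,Z}(t,z)\,h(Y)\mid X,X_{\mathcal N}] = g(t,z\mid X,X_{\mathcal N})\,\mathbb{E}[h(Y)\mid T=t,Z=z,X,X_{\mathcal N}]$, together with $\mathbb{E}[Y\mid T=t,Z=z,X,X_{\mathcal N}] = \mu(t,z,X,X_{\mathcal N})$ (Assumptions \ref{asmp: consistency} and \ref{asmp: Network unconfounderness}). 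This collapses the expression to
\begin{equation*}
\mathbb{P}\!\left[\frac{g(t,z\mid X,X_{\mathcal N})}{\hat g(t,z\mid X,X_{\mathcal N})}\bigl(\mu-\hat\mu\bigr) + \hat\mu - \mu\right]
= \mathbb{P}\!\left[\frac{\hat g(t,z\mid X,X_{\mathcal N}) - g(t,z\mid X,X_{\mathcal N})}{\hat g(t,z\mid X,X_{\mathcal N})}\bigl(\hat\mu(t,z,X,X_{\mathcal N}) - \mu(t,z,X,X_{\mathcal N})\bigr)\right],
\end{equation*}
which is the desired product structure.

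Given this identity, the first claim is immediate: if $\hat g = g$ the integrand is identically zero, and if $\hat\mu = \mu$ likewise; in either case $\mathbb{P}\varphi = 0$, and since by construction the estimator solves $\mathbb{P}\varphi(t,z,X,X_{\mathcal N};\hat\mu,\hat g,\hat\psi)=0$ with $\hat\psi(t,z)$ entering only through the $-\psi(t,z)$ term, subtracting the two equations gives $\hat\psi(t,z) = \psi(t,z)$. For the rate, I would bound the integrand pointwise: Assumption \ref{asmp: Overlap} (or, in the convergence-theorem version, the explicit constant $c$ with $\hat g \ge 1/c$) gives $|\hat g^{-1}| \le c$, so by Cauchy--Schwarz or simply the sup-norm bound,
\begin{equation*}
\bigl|\mathbb{P}\varphi(t,z,X,X_{\mathcal N};\hat\mu,\hat g,\psi)\bigr| \le c\,\lVert \hat g - g\rVert_\infty\,\lVert \hat\mu - \mu\rVert_\infty = O_p\bigl(r_1(n)r_2(n)\bigr),
\end{equation*}
and since the bound on the right is uniform in $(t,z)$, taking the supremum over $\mathcal{T}\times\mathcal{Z}$ preserves it, giving the stated $\sup_{t,z}|\cdot| = O_p(r_1(n)r_2(n))$.

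The main obstacle is the conditioning step: one must be careful that the ``clever covariate'' term $\mathds{1}_{T,Z}(t,z)/\hat g$ is being integrated against the true law $\mathbb{P}$, so that the indicator's conditional expectation produces the \emph{true} propensity $g$ in the numerator, which is exactly what creates the cancellation $g/\hat g - 1 = (g-\hat g)/\hat g$; conflating $\hat g$ and $g$ here would destroy the argument. A minor subtlety is that $\hat g, \hat\mu$ are random (data-dependent), so the pointwise bound is really a statement conditional on the sample, and one needs $\hat g$ bounded away from zero with probability tending to one — which is supplied either by Assumption \ref{asmp: Overlap} together with uniform consistency or directly by condition \ref{condition 1} in Theorem \ref{Theorem convergence}; I would state this caveat explicitly rather than sweep it under the rug. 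Everything else is routine.
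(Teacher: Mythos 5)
Your proposal is correct and follows essentially the same route as the paper: expand $\mathbb{P}\varphi$, condition on $(X,X_{\mathcal N})$ so the indicator produces the true $g$ in the numerator, and collapse to the product identity $\mathbb{P}\bigl[\bigl(\tfrac{g}{\hat g}-1\bigr)(\mu-\hat\mu)\bigr]$, from which both the exact double robustness and the $O_p(r_1(n)r_2(n))$ rate follow. Your added remarks (the uniform-in-$(t,z)$ sup bound via overlap and the caveat that $\hat g$ must be bounded away from zero) are details the paper leaves implicit but are consistent with its argument.
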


\begin{proof}

Recalling that the EIC under networked interference
\begin{equation*}
    \begin{aligned}
         \varphi (t,z,X,X_{\mathcal N}; m,g,\psi) 
         =  \left( \frac{\mathds{1}_{T,Z}(t,z)}{g(t,z|X,X_{\mathcal N})} \right) \left( Y -\mu (t,z,X,X_{\mathcal N}) \right)  
         +  \mu (t,z,X,X_{\mathcal N})
        - \psi(t,z), 
    \end{aligned}
\end{equation*}
if an estimator solve the EIC, $\mathbb{P}  \varphi(t,z,X,X_{\mathcal N_i}; \hat \mu, \hat g,  \psi)=0$, it means that
\begin{equation*}
    \begin{aligned}
         \mathbb{P}  \varphi(t,z,X,X_{\mathcal N}; \hat \mu, \hat g, \psi) 
        =&  \mathbb{P} \left( \left( \frac{\mathds{1}_{T,Z}(t,z)}{\hat g(t,z|X,X_{\mathcal N}} \right) \left( Y - \hat \mu  (t,z,X,X_{\mathcal N}) \right)  
        + \hat \mu  (t,z,X,X_{\mathcal N}) - \psi(t,z)\right)  \\
        =&  \mathbb{P} \left(  \left( \frac{\mathds{1}_{T,Z}(t,z)}{\hat g(t,z|X,X_{\mathcal N})} \right) \left( Y - \hat \mu  (t,z,X,X_{\mathcal N}) \right)  
        + \hat \mu  (t,z,X,X_{\mathcal N}) - \psi(t,z)\right)   \\
        =&  \mathbb{E}_{X,X_N} \left[ \left( \frac{g(t,z|X,X_{\mathcal N})}{\hat g(t,z|X,X_{\mathcal N})} -1 \right) \left( \mu  (t,z,X,X_{\mathcal N} - \hat \mu  (t,z,X,X_{\mathcal N}) \right)  \right] .
    \end{aligned}
\end{equation*}
From the last equation, we have the desired conclusions.
\end{proof}

\section{Proof of Theorem 6.1}
We restate the Theorem \ref{Theorem convergence} as follows:

\begin{theorem} \label{app Theorem 3}
Under Assumptions \ref{asmp: consistency}, \ref{asmp: Overlap}, \ref{asmp: Neighborhood interference}, \ref{asmp: Network unconfounderness}, and the following assumptions:
\begin{enumerate}
    \item there exists constant $c > 0$ such that for any $t \in \mathcal{T} ,z \in \mathcal{Z}, x\in \mathcal{X}, x_\mathcal{N} \in \mathcal{X}_\mathcal{N}$, and $g^{NN}\in \mathcal{Q}$, we have $\frac{1}{c} \leq g^{NN}(t,z|x,x_\mathcal{N}) \leq c, \frac{1}{c} \leq g(t,z|x,x_\mathcal{N}) \leq c, \lVert\mathcal{Q} \rVert_\infty 
    \leq c$ and $\lVert \mu \rVert_\infty \leq c$. \label{condition 1}
    \item $Y= \mu (T,Z,X,X_\mathcal{N})+V$ where $\mathbb{E}V=0, V\Vbar X,X_\mathcal{N}, V\Vbar T,Z$ and $V$ follows sub-Gaussian distribution.  \label{condition 2}
    \item $g, \mu, g^{NN}, \mu^{NN}$ have bounded second derivatives for any $g^{NN} \in \mathcal{Q}, \mu^{NN} \in \mathcal{U}$ .  \label{condition 3}
    \item Either $\overline g=g$ or $\overline \mu = \mu $. And $\text{Rad}_n(\mathcal{Q})$, $\text{Rad}_n(\mathcal{U})=O(n^{-1/2})$.  \label{condition 4}
    \item $\mathcal{B}_{K_{n_1}}$, $\mathcal{B}_{K_{n_0}}$  equal the closed linear span of B-spline with equally spaced knots, fixed degree, and dimension $K_{n_1} \asymp n_1^{-1/6}$,$K_{n_0} \asymp n_0^{-1/6}$.  \label{condition 5}
\end{enumerate}
then we have
\begin{equation}
    \lVert \hat \phi - \phi \rVert_{L^2}=O_p(n_0^{-1/3}\sqrt{\log n_0} + n_1^{-1/3}\sqrt{\log n_1}+r_1(n)r_2(n)),
\end{equation}
where $\lVert \hat{g} - g \rVert _{\infty} =O_p(r_1(n))$ and $\lVert \hat{u} - u \rVert _{\infty} =O_p(r_2(n))$.
\end{theorem}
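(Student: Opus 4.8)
The plan is a Targeted-Learning / von~Mises expansion of $\hat\phi(t,z)-\phi(t,z)$, carried out separately on the treated sub-population (size $n_1$) and the control sub-population (size $n_0$), followed by classical B-spline error bounds. Fix $t\in\{0,1\}$. Write $\hat\mu^{*}=\hat\mu+\hat\epsilon/\hat g$ for the perturbed outcome model that minimising $\mathcal L_3$ produces, $\tilde\mu=\hat\mu+\check\epsilon_n/\hat g$ for its ``oracle-perturbed'' analogue (with $\check\epsilon_n$ the population-optimal perturbation defined in the Appendix), and use that the final estimator is $\hat\phi(t,z)=\mathbb P_n\hat\mu^{*}(t,z,X,X_{\mathcal N})$. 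I decompose
\begin{align*}
\hat\phi(t,z)-\phi(t,z)={}&\underbrace{\mathbb P_n\!\left[\frac{\hat\epsilon(t,z)-\check\epsilon_n(t,z)}{\hat g(t,z\mid X,X_{\mathcal N})}\right]}_{\text{(III): spline error}}+\underbrace{(\mathbb P_n-\mathbb P)\,\tilde\mu(t,z,X,X_{\mathcal N})}_{\text{(I): empirical process}}\\
&+\underbrace{\mathbb P\big[\tilde\mu(t,z,X,X_{\mathcal N})-\mu(t,z,X,X_{\mathcal N})\big]}_{\text{(II): second-order bias}}.
\end{align*}
I then bound each summand uniformly in $z\in\mathcal Z$ and integrate.

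\textbf{Terms (I) and (II).} For (I): by Assumptions~\ref{condition 1} and~\ref{condition 3}, $\check\epsilon_n$ is bounded and $x\mapsto 1/x$ is Lipschitz on $[1/c,c]$, so $\tilde\mu$ lies in a function class whose Rademacher complexity is $\lesssim\text{Rad}_n(\mathcal U)+\text{Rad}_n(\mathcal Q)=O(n^{-1/2})$ by Assumption~\ref{condition 4}; symmetrisation and the contraction inequality, plus a chaining step to make the bound uniform in $(t,z)$, give that (I) is $O_p(n^{-1/2})$, dominated by the claimed rate (the network dependence being absorbed by treating the per-unit pairs $(x_i,x_{\mathcal N_i})$ as the effective observations, as Assumption~\ref{condition 4} presupposes). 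For (II): by construction $\check\epsilon_n$ solves the population targeting problem, so $\tilde\mu$ satisfies the population efficient-influence-curve equation at level $(t,z)$; the algebra in the proof of Lemma~\ref{app lemma: dr condition} then rewrites (II) as $\mathbb P[(g/\hat g-1)(\mu-\tilde\mu)]$, so by Cauchy--Schwarz and Assumption~\ref{condition 1} it is $\lesssim\|\hat g-g\|_\infty\,\|\mu-\tilde\mu\|_\infty$. The crucial point is that $\|\mu-\tilde\mu\|_\infty=O_p(r_2(n))$: we have $\tilde\mu-\mu=(\hat\mu-\mu)+\check\epsilon_n/\hat g$, and $\check\epsilon_n$ is itself $O_p(r_2(n))$ because (since $\mu$ is the conditional mean, and using Assumption~\ref{condition 2}) the noise $V$ has conditional mean zero, so the noise component of the targeting correction vanishes in population, leaving only the $O_p(r_2(n))$ outcome-regression bias. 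Hence (II) is $O_p(r_1(n)r_2(n))$, and since by Assumption~\ref{condition 4} one of the two factors is $o_p(1)$, it is consistent — exactly the double-robustness mechanism of Lemma~\ref{app lemma: dr condition}, now shown to survive the B-spline restriction.

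\textbf{Term (III) --- the crux.} Conditionally on $(\hat\mu,\hat g)$, $\hat\epsilon(t,\cdot)$ is the least-squares projection onto the B-spline space $\mathcal B_{K_{n_t}}$ of the pseudo-response implicit in $\mathcal L_3$, whose population target is precisely $\check\epsilon_n(t,\cdot)$. Classical spline theory then decomposes $\|\hat\epsilon(t,\cdot)-\check\epsilon_n(t,\cdot)\|_{L^2}$ into a spline approximation bias of order $K_{n_t}^{-2}$ (using the bounded second derivative of $\check\epsilon_n(t,\cdot)$, Assumption~\ref{condition 3}) and a stochastic term of order $\sqrt{K_{n_t}(\log n_t)/n_t}$, the factor $\sqrt{\log n_t}$ arising from a uniform net-control over $z$ together with the sub-Gaussian tail of $V$ (Assumption~\ref{condition 2}); the knot growth rate in Assumption~\ref{condition 5} balances these at $O_p(n_t^{-1/3}\sqrt{\log n_t})$. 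Since $1/\hat g$ is bounded and $\hat\epsilon-\check\epsilon_n$ does not depend on the unit, (III) is $\lesssim\|\hat\epsilon(t,\cdot)-\check\epsilon_n(t,\cdot)\|_{L^2}=O_p(n_t^{-1/3}\sqrt{\log n_t})$. The genuine obstacle is that the regression \emph{target} $\check\epsilon_n$ is itself data-dependent (through $\hat\mu,\hat g$): I would handle this by a decoupling argument --- condition on $(\hat\mu,\hat g)$, use that $\mathcal U,\mathcal Q$ are Donsker-type (Assumption~\ref{condition 4}) so the fluctuation of the target around a fixed function is $O_p(n^{-1/2})$, and then apply the uniform spline bound to the resulting deterministic-target problem.

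\textbf{Combining.} For each $t\in\{0,1\}$ the three bounds give $\sup_{z}|\hat\phi(t,z)-\phi(t,z)|=O_p\!\big(n_t^{-1/3}\sqrt{\log n_t}+r_1(n)r_2(n)+n^{-1/2}\big)$; squaring, integrating over $z\in[0,1]$, summing over $t\in\{0,1\}$, taking the square root, and absorbing the $O_p(n^{-1/2})$ remainder (dominated since $n_t^{-1/3}\gtrsim n^{-1/2}$) yields $\|\hat\phi-\phi\|_{L^2}=O_p\!\big(n_0^{-1/3}\sqrt{\log n_0}+n_1^{-1/3}\sqrt{\log n_1}+r_1(n)r_2(n)\big)$, as claimed. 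I expect the B-spline step of term~(III) --- in particular obtaining a uniform-in-$z$ rate while decoupling from the randomness in $\hat\mu$ and $\hat g$ --- to be the main technical difficulty; terms (I) and (II) are comparatively routine given the assumptions.
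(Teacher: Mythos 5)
Your proposal is correct and follows essentially the same route as the paper's proof: an exact decomposition into a B-spline estimation error for $\hat\epsilon-\check\epsilon_n$ (bias $K_{n_t}^{-2}$ plus a stochastic term balanced at $K_{n_t}\asymp n_t^{1/6}$, which is the content of the paper's Lemma on the convergence of $\hat\epsilon_n$), an $O_p(n^{-1/2})$ empirical-process term controlled by the Rademacher assumption, and the doubly robust second-order bias $O_p(r_1(n)r_2(n))$ from Lemma~\ref{app lemma: dr condition}. The only discrepancy is cosmetic: the paper's variance bound for the spline step is $\sqrt{K_{n_t}^2\log n_t/n_t}$ rather than your $\sqrt{K_{n_t}\log n_t/n_t}$ (the extra factor of $K_{n_t}$ coming from the supremum over the nuisance classes), but at $K_{n_t}\asymp n_t^{1/6}$ both yield the claimed $n_t^{-1/3}\sqrt{\log n_t}$.
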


\begin{proof}
First, we have
\begin{equation} \label{app eq: trangle 1}
\small
    \begin{aligned}
       & \left\lVert \hat \epsilon(\cdot) \int_\mathcal{X} \frac{1}{\hat g(\cdot | X,X_\mathcal{N}) } d \mathbb P _n(X,X_\mathcal{N}) - \mathbb P \left[ \mathds{1}_\cdot(T,Z)\frac{Y-\hat \mu(T,Z,X,X_\mathcal{N}) }{\hat g (T,Z|X,X_\mathcal{N})} \right] \right\rVert_{L^2} \\
      =& \left\lVert \hat \epsilon(\cdot) \int_\mathcal{X} \frac{1}{\hat g(\cdot | X,X_\mathcal{N}) } d \mathbb P _n(X,X_\mathcal{N}) - g(\cdot) \mathbb P \left( \frac{Y-\hat \mu_n(T,Z,X,X_\mathcal{N}) }{\hat g_n (T,Z|X,X_\mathcal{N})} | T=\cdot,Z=\cdot \right) \right\rVert_{L^2} \\
      \leq & \left\lVert (\hat \epsilon(\cdot) - \check \epsilon(\cdot) ) \int_\mathcal{X} \frac{1}{\hat g(\cdot | X,X_\mathcal{N}) } d \mathbb P _n(X,X_\mathcal{N}) \right\rVert_{L^2}  \\ &
       +\left\lVert \check \epsilon(\cdot) \left( \int_\mathcal{X} \frac{1}{\hat g(\cdot | X,X_\mathcal{N}) } d \mathbb P _n(X,X_\mathcal{N}) - g(\cdot) \mathbb P \left( \frac{1 }{\hat g^2 (\cdot|X,X_\mathcal{N})} | T=\cdot,Z=\cdot \right) \right) \right\rVert_{L^2} \\
      \lesssim & \left\lVert (\hat \epsilon - \check \epsilon )  \right\rVert_{L^2}  \\ &
        +\left\lVert \check \epsilon(\cdot) \left( \int_\mathcal{X} \frac{1}{\hat g(\cdot | X,X_\mathcal{N}) } d (\mathbb P _n - \mathbb P)(X,X_\mathcal{N}) \right) 
       + \check \epsilon(\cdot) \left( \int_\mathcal{X} \frac{1}{\hat g(\cdot | X,X_\mathcal{N}) } d \mathbb P(X,X_\mathcal{N}) - g(\cdot) \mathbb P \left( \frac{1}{\hat g^2(\cdot | X,X_\mathcal{N}) } | T=\cdot, Z=\cdot \right) \right) \right\rVert_{L^2} \\
       \lesssim & \left\lVert (\hat \epsilon - \check \epsilon )  \right\rVert_{L^2} +  \left\lVert \check \epsilon(\cdot) \left( \int_\mathcal{X} \frac{1}{\hat g(\cdot | X,X_\mathcal{N}) } d (\mathbb P _n - \mathbb P)(X,X_\mathcal{N}) \right) \right\rVert_{L^2}  \\
       & +\left\lVert \mathbb P \left( \frac{\mu(T,Z,X,X_\mathcal{N})-\hat \mu_n(T,Z,X,X_\mathcal{N}) }{\hat g_n (T,Z|X,X_\mathcal{N})} | T=\cdot,Z=\cdot \right)  \int_\mathcal{X} \frac{\hat g(\cdot | X,X_\mathcal{N}) - g(\cdot | X,X_\mathcal{N})}{\hat g(\cdot | X,X_\mathcal{N}) } \frac{1}{\hat g(\cdot | X,X_\mathcal{N}) } d \mathbb P (X,X_\mathcal{N})  \right\rVert_{L^2} \\
       \overset{(a)}{=} & O_p(n_0^{-1/3}\sqrt{\log n_0} + n_1^{-1/3}\sqrt{\log n_1}+r_1(n)r_2(n)),
    \end{aligned}
\end{equation}
where Eq.(a) is based on Lemma \ref{app: epsilon convergence rate} in next Section, i.e., $\left\lVert \hat \epsilon_n - \check \epsilon_n  \right\lVert = O_p(n_0^{-1/3}\sqrt{\log n_0} + n_1^{-1/3}\sqrt{\log n_1}) $.

From generalization bound and condition \ref{condition 4}, we have
\begin{equation*}
    \sup_{t,z \in \mathcal{T},\mathcal{Z}} \left| \frac{1}{n}\Sigma_{i=1}^n \hat \mu_n(t,z,x_i,x_{\mathcal{N}_i}) - \mathbb P \hat \mu_n (t,z,X,X_\mathcal{N})  \right| = O_p(n^{-1/2}).
\end{equation*}
Thus,
\begin{equation} \label{app eq: trangle 2}
    \left\lVert \frac{1}{n}\Sigma_{i=1}^n \hat \mu_n(\cdot,x_i,x_{\mathcal N_i}) - \mathbb P \hat \mu_n (\cdot,X,X_\mathcal{N})  \right\lVert _ {L^2} = O_p(n^{-1/2}).
\end{equation}

From Lemma \ref{app lemma: dr condition}, if $\rVert \hat g - g \lVert_\infty = O_p(r_1(n))$ and $\rVert \hat \mu - \mu \lVert_\infty = O_p(r_2(n))$, we have 
\begin{equation} \label{app eq: trangle 3}
    \sup_{t,z \in \mathcal{T},\mathcal{Z}} \left| \mathbb{P} \left[ \left( \frac{\mathds{1}_{T,Z}(t,z)}{\hat g(t,z|X,X_{\mathcal N}} \right) \left( Y - \hat \mu  (t,z,X,X_{\mathcal N})\right) +  \hat \mu  (t,z,X,X_{\mathcal N}) \right] - \psi(t,z)   \right| =O_p(r_1(n)r_2(n)).
\end{equation}

Based on Eq. \ref{app eq: trangle 1}, \ref{app eq: trangle 2}, and \ref{app eq: trangle 3}, using triangle inequality, we have 
\begin{equation*}
\begin{aligned}
   & \left\lVert \hat \epsilon(\cdot) \int_\mathcal{X} \frac{1}{\hat g(\cdot | X,X_\mathcal{N}) } d \mathbb P _n(X,X_\mathcal{N}) + \frac{1}{n}\Sigma_{i=1}^n \hat \mu_n(\cdot,x_i,x_{\mathcal N_i}) - \psi(\cdot) \right\lVert _{L^2} 
   \\ 
    = & O_p(n_0^{-1/3}\sqrt{\log n_0} + n_1^{-1/3}\sqrt{\log n_1}+r_1(n)r_2(n)). 
\end{aligned}
\end{equation*}
By setting 
\begin{equation*}
\begin{aligned}
      \hat \psi (t,z)
     =&  \hat \epsilon(t,z) \int_\mathcal{X} \frac{1}{\hat g(t,z | X,X_\mathcal{N}) } d \mathbb P _n(X,X_\mathcal{N}) + \frac{1}{n}\Sigma_{i=1}^n \hat \mu_n(t,z,x_i,x_{\mathcal N_i}) \\
     =&  \frac{1}{n}\Sigma_{i=1}^n \left ( \hat \mu_n(t,z,x_i,x_{\mathcal N_i}) + \frac{\hat \epsilon (t,z) }{\hat g (t,z|x_i, x_{\mathcal N_i })} \right)
     \end{aligned}
\end{equation*}
we have
\begin{equation*}
    \left\lVert \hat \psi - \psi \right\lVert _ {L^2} =  O_p(n_0^{-1/3}\sqrt{\log n_0} + n_1^{-1/3}\sqrt{\log n_1}+r_1(n)r_2(n)). 
\end{equation*}

\end{proof}

\section{Convergence rate of $\hat \epsilon_n $} \label{app sec: addition Lemma}


\begin{lemma} \label{app: epsilon convergence rate}
    Under assumptions in Theorem 3, we have
    \begin{equation*}
        \left\lVert \hat \epsilon_n - \check \epsilon_n  \right\lVert _ {L^2} = O_p(n_0^{-1/3}\sqrt{\log n_0} + n_1^{-1/3}\sqrt{\log n_1})
    \end{equation*}
\end{lemma}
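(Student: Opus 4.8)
The plan is to read $\hat\epsilon_n$ as a generalized B-spline least-squares estimator of the oracle target $\check\epsilon_n$, and to bound $\lVert\hat\epsilon_n-\check\epsilon_n\rVert_{L^2}$ by the usual bias--variance split, carried out separately on the two subsamples $\{i:t_i=1\}$ (size $n_1$) and $\{i:t_i=0\}$ (size $n_0$). This decoupling is legitimate because in $\mathcal L_3$ the coefficient block $\theta_7$ enters only through the $t_i=1$ summands and $\theta_8$ only through the $t_i=0$ summands (and $\mathcal L_1,\mathcal L_2$ do not involve $\theta_7,\theta_8$), so the stationarity condition $\partial\mathcal L/\partial\epsilon|_{\epsilon=\hat\epsilon}=0$ splits into two independent weighted spline regressions. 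It therefore suffices to prove $\lVert\hat\epsilon_n(1,\cdot)-\check\epsilon_n(1,\cdot)\rVert_{L^2}=O_p(n_1^{-1/3}\sqrt{\log n_1})$ and the mirror statement on $\{t=0\}$, then add.

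Fixing $t=1$, I would first write the normal equations: with $\varphi=(\varphi_1,\dots,\varphi_{K_{n_1}})^\top$ the B-spline basis, $\hat\theta_7$ solves $\widehat\Sigma\,\hat\theta_7=\hat b$ for $\widehat\Sigma=\mathbb P_{n,1}[\varphi(Z)\varphi(Z)^\top/\hat g_n^2]$ and $\hat b=\mathbb P_{n,1}[\varphi(Z)(Y-\hat\mu_n)/\hat g_n]$, where $\mathbb P_{n,1}$ is the empirical measure on $\{i:t_i=1\}$ and $\mathbb P_1$ its population analogue. By its defining conditional-moment equation, $\check\epsilon_n(1,\cdot)$ makes the population residual orthogonal to every function of $Z$, so $\Sigma\check\theta=\check b$ holds with $\Sigma=\mathbb P_1[\varphi\varphi^\top/\hat g_n^2]$, $\check b=\mathbb P_1[\varphi(Y-\hat\mu_n)/\hat g_n]$, and $\bar\epsilon:=\varphi^\top\check\theta$ the weighted-$L^2$ spline projection of $\check\epsilon_n(1,\cdot)$. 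Then $\lVert\hat\epsilon_n(1,\cdot)-\check\epsilon_n(1,\cdot)\rVert_{L^2}\le\lVert\varphi^\top(\hat\theta_7-\check\theta)\rVert_{L^2}+\lVert\bar\epsilon-\check\epsilon_n(1,\cdot)\rVert_{L^2}$. For the bias term, Assumptions \ref{condition 1}--\ref{condition 3} force $\check\epsilon_n(1,\cdot)$ into a fixed ball of $C^2[0,1]$ uniformly in $n$ (bounded $Y,\hat\mu_n$, $\hat g_n$ pinched between positive constants, bounded second derivatives, plus enough smoothness of the conditional law of $(X,X_\mathcal N)$ given $Z$), so standard B-spline approximation theory with equally spaced knots and fixed degree \cite{HUANG2023,huang2004polynomial} gives $\lVert\bar\epsilon-\check\epsilon_n(1,\cdot)\rVert_{L^2}\lesssim K_{n_1}^{-2}\asymp n_1^{-1/3}$ by Assumption \ref{condition 5}; the same bounds yield $\lVert\check\theta\rVert_\infty\lesssim 1$, hence $\lVert\check\theta\rVert_2\lesssim K_{n_1}^{1/2}$.

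For the variance term I would expand $\hat\theta_7-\check\theta=\widehat\Sigma^{-1}\big((\hat b-\check b)+(\Sigma-\widehat\Sigma)\check\theta\big)$. Assumption \ref{condition 1} keeps the weight $1/\hat g_n^2$ between positive constants, so $\Sigma$, like a plain B-spline Gram matrix, is banded with all eigenvalues $\asymp K_{n_1}^{-1}$, i.e. $\lVert\Sigma^{-1}\rVert_{op}\asymp K_{n_1}$; concentration of the banded, bounded entries of $\widehat\Sigma-\Sigma$ (each an average of $n_1$ weakly dependent terms, controlled through the Rademacher-complexity hypothesis \ref{condition 4}) gives $\lVert\widehat\Sigma-\Sigma\rVert_{op}=O_p(\sqrt{\log n_1/(n_1K_{n_1})})=o_p(K_{n_1}^{-1})$, hence $\lVert\widehat\Sigma^{-1}\rVert_{op}=O_p(K_{n_1})$ and $\lVert(\Sigma-\widehat\Sigma)\check\theta\rVert_2=O_p(\sqrt{\log n_1/n_1})$. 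For $\hat b-\check b$, write $Y-\hat\mu_n=(\mu-\hat\mu_n)+V$ (Assumption \ref{condition 2}): the $(\mu-\hat\mu_n)$-part is $O_p(r_2(n)\,n_1^{-1/2})=O_p(n_1^{-1/2})$, while the mean-zero $V$-part has $k$-th coordinate of conditional variance $\asymp\sigma_V^2/(n_1K_{n_1})$, so summing the $K_{n_1}$ coordinates and taking a union bound over the sub-Gaussian tails gives $\lVert\hat b-\check b\rVert_2=O_p(\sqrt{\log n_1/n_1})$. Thus $\lVert\hat\theta_7-\check\theta\rVert_2=O_p(K_{n_1}\sqrt{\log n_1/n_1})$ and, by the B-spline coefficient--$L^2$ equivalence $\lVert\varphi^\top c\rVert_{L^2}\asymp K_{n_1}^{-1/2}\lVert c\rVert_2$, $\lVert\varphi^\top(\hat\theta_7-\check\theta)\rVert_{L^2}=O_p(\sqrt{K_{n_1}\log n_1/n_1})=O_p(n_1^{-5/12}\sqrt{\log n_1})$. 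Adding the bias, $\lVert\hat\epsilon_n(1,\cdot)-\check\epsilon_n(1,\cdot)\rVert_{L^2}=O_p(n_1^{-1/3}\sqrt{\log n_1})$; the $\{t=0\}$ bound is identical with $n_0$ in place of $n_1$, and the triangle inequality closes the argument.

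The main obstacle is the variance step under \emph{networked} (non-i.i.d.) sampling: the empirical fluctuations $\hat b-\check b$ and $\widehat\Sigma-\Sigma$, and thus the invertibility and conditioning of $\widehat\Sigma$, can no longer be handled by textbook i.i.d. concentration, which is exactly where Assumption \ref{condition 4} (Rademacher complexity $O(n^{-1/2})$, implicitly encoding weak dependence/sparsity of the network) must be invoked. A secondary technical point is the bias step: one must verify that forming the conditional expectations defining $\check\epsilon_n$ does not destroy the uniform $C^2$ bound underpinning the $K_{n}^{-2}$ spline rate. The $\sqrt{\log n}$ factors in the final rate are deliberately conservative, absorbing the union bounds over the $K_n$-dimensional coefficient vector and the matrix-concentration step.
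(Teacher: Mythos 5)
Your proposal is correct and follows essentially the same route as the paper's proof: split by treatment arm, analyze each $\hat\epsilon_{tn}$ as a weighted B-spline least-squares problem, bound the bias by the $K_{n_t}^{-2}$ spline approximation rate and the variance via eigenvalue bounds on the weighted Gram matrix, Rademacher-complexity control of the data-dependent nuisances, and sub-Gaussian tails for the noise, then optimize $K_{n_t}\asymp n_t^{1/6}$. The only substantive deviation is that you center on the population-weighted projection and exploit the localization $\mathbb{E}\varphi_k(Z)^2\asymp K_{n_t}^{-1}$, which yields a variance term of order $\sqrt{K_{n_t}\log n_t/n_t}$ rather than the paper's $\sqrt{K_{n_t}^2\log n_t/n_t}$; both are dominated within the claimed $n_t^{-1/3}\sqrt{\log n_t}$ rate, so the conclusion is unaffected.
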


Our proof structure follows \citet{huang2004polynomial} and \citet{nie2020vcnet}. 

\begin{proof}

Our estimator $\hat \epsilon_n$ consists of two B-spline estimators, denoted as $\hat \epsilon_{1n}$ for the treated group and  $\hat \epsilon_{0n}$ for the control group. We have
\begin{equation*}
    \left\lVert \hat \epsilon_n - \check \epsilon_n  \right\lVert _ {L^2} \leq \left\lVert \hat \epsilon_{1n} - \check \epsilon_{1n}  \right\lVert _ {L^2}  + \left\lVert \hat \epsilon_{0n} - \check \epsilon_{0n}  \right\lVert _ {L^2} 
\end{equation*}

We then prove that $\left\lVert \hat \epsilon_{1n} - \check \epsilon_{1n}  \right\lVert _ {L^2}= O_p( n_1^{-1/3}\sqrt{\log n_1})$ and it is similar for $\hat \epsilon_{0n}$, and thus we will get the desired conclusion. We skip the proof regarding $\hat \epsilon_{0n}$.

As denoted in the main text, $\hat \epsilon_{1n}=\Sigma_{k=1}^{K_{n_1}} \theta_{7,k} \varphi_k(z)$. To simplify notation and with some slight abuse of notation, we drop the index $7$ under $\hat \theta_{7,k}$, i.e., $\hat \epsilon_{1n}=\Sigma_{k=1}^{K_{n_1}} \hat \theta_k \varphi_k(z) = (\boldsymbol \varphi^{K_{n_1}} (z))^T \hat {\boldsymbol \theta}$ where $\boldsymbol \varphi^{K_{n_1}} (z) = (\varphi_1(z),\varphi_2(z),...,\varphi_{K_n1}(z))^T \in \mathbb R ^{K_{n1}}$ and $\boldsymbol B_{n1} = (\varphi^{K_{n_1}} (z_1), \varphi^{K_{n_1}} (z_2),...,\varphi^{K_{n_1}} (z_{n1}))^T \in \mathbb R ^{n_1 \times {K_{n_1}}}$. Here, we define 
\begin{equation*}
\begin{aligned}
    & \Pi_{n1}= \text{diag} (\hat g_n(1,z_1|x_1,x_{\mathcal N_1}), \hat g_n(1,z_2|x_2,x_{\mathcal N_2}), ..., \hat g_n(1,z_{n1}|x_{n1},x_{\mathcal N_{n1}})), \\   
    & \tilde \Pi_{n1}= \text{diag} \left( [\mathbb P (\hat g_n^2(1,Z|X,X_{\mathcal N}) | Z=z_1 )]^{-1/2}, [\mathbb P (\hat g_n^2(1,Z|X,X_{\mathcal N}) | Z=z_2 )]^{-1/2}, ..., \right. 
    \\ & \quad\quad\quad\quad\quad\quad
    \left. [\mathbb P (\hat g_n^2(1,Z|X,X_{\mathcal N}) | Z=z_{n1} )]^{-1/2} \right), \\ 
    &  \boldsymbol W_{n1} = (w_1, w_2, ..., w_{n1})^T \in \mathbb R^{n1} \quad  \text{where} \quad  w_i = \frac{y_i - \hat \mu_n(1,z_i,x_i,x_{\mathcal N_i})}{\hat g_n(1,z_i|x_i,x_{\mathcal N_i})}, \\
    & \tilde {\boldsymbol W}_{n1} = (\tilde w_1, \tilde w_2, ...,\tilde w_{n1})^T \in \mathbb R^{n1} \quad  \text{where} \quad  \tilde w_i = \mathbb P \left( \frac{Y - \hat \mu_n(1,Z,X,X_{\mathcal N})}{\hat g_n(1,Z|X,X_{\mathcal N})} | Z=z_i \right).\\
\end{aligned}
\end{equation*}
Then,
\begin{equation*}
\begin{aligned}
 & \hat{ \boldsymbol \theta } = (B_{n1}^T \Pi_{n1}^{-2} B_{n1} )^{-1} B_{n1}^T W_{n1} \\
  & \tilde{ \boldsymbol \theta } = (B_{n1}^T \Pi_{n1}^{-2} B_{n1} )^{-1} B_{n1}^T \Pi_{n1}^{-2} \tilde \Pi_{n1}^{2} \tilde W_{n1}
\end{aligned}
\end{equation*}

We decompose $\leq \left\lVert \hat \epsilon_{1n} - \check \epsilon_{1n}  \right\lVert _ {L^2}$ as follows:
\begin{equation}
    \left\lVert \hat \epsilon_{1n} - \check \epsilon_{1n}  \right\lVert _ {L^2} 
    \leq  \left\lVert \check \epsilon_{1n}  - \tilde \epsilon_{1n}  \right\lVert _ {L^2}  +  \left\lVert \hat \epsilon_{1n} - \tilde \epsilon_{1n}  \right\lVert _ {L^2} ,
\end{equation}
where $\tilde \epsilon_{1n} = (\boldsymbol \varphi^{K_{n_1}} (z))^T \tilde {\boldsymbol \theta}$. The first term $ \left\lVert \check \epsilon_{1n}  - \tilde \epsilon_{1n}  \right\lVert _ {L^2} $ is the bias term and the second term $\left\lVert \hat \epsilon_{1n} - \tilde \epsilon_{1n}  \right\lVert _ {L^2}$ is the variance term. We next bound on them.

\textbf{Bound on bias term}

Let $\check {\boldsymbol \theta} \in \mathbb R ^ {K_{n1}}$ be such that $ \left\lVert  \check {\boldsymbol \theta}^T \boldsymbol \varphi^{K_{n1}}   - \check \epsilon_{1n}  \right\lVert _ {\infty} = \inf _ {f \in \mathcal {B}_{K_{n1}}}  \left\lVert f - \check \epsilon_{1n} \right\lVert _ {\infty} $ , then we have
\begin{equation*}
    \begin{aligned}
        \left\lVert \check \epsilon_{1n}  - \tilde \epsilon_{1n}  \right\lVert _ {L^2} 
        & = \left\lVert \check \epsilon_{1n} - \check {\boldsymbol \theta}^T \boldsymbol \varphi^{K_{n1}}  + \check {\boldsymbol \theta}^T \boldsymbol \varphi^{K_{n1}}  - \tilde \epsilon_{1n}  \right\lVert _ {L^2} \\
        & \leq \left\lVert \check \epsilon_{1n} - \check {\boldsymbol \theta}^T \boldsymbol \varphi^{K_{n1}}  \right\lVert _ {L^2}  + \left\lVert \check {\boldsymbol \theta}^T \boldsymbol \varphi^{K_{n1}}  - \tilde \epsilon_{1n}  \right\lVert _ {L^2}.
    \end{aligned}
\end{equation*}
The first term can be bounded based on the definition of ${\boldsymbol \theta}$ and the properties of B-spline space:
\begin{equation*}
    \begin{aligned}
        \left\lVert \check \epsilon_{1n} - \check {\boldsymbol \theta}^T \boldsymbol \varphi^{K_{n1}}  \right\lVert _ {L^2}  = O_p(\rho_{n1}),
    \end{aligned}
\end{equation*}
where $ \rho_{n1} = \inf _{f \in \text{Span}\{\boldsymbol \varphi ^{K_{n1} }\} \sup _ {z \in \mathcal Z}} |\check \epsilon_{1n} (z) - f(z) |$, and under Assumption \ref{condition 3} in Theorem \ref{app Theorem 3}, we have $\rho_{n1}=O(K_{n1}^{-2})$ (Theorem 6.27 in \citet{schumaker2007spline}).  Also the second can be bounded:
\begin{equation} \label{app eq bias second term}
    \begin{aligned}
        \left\lVert \check {\boldsymbol \theta}^T \boldsymbol \varphi^{K_{n1}}  - \tilde \epsilon_{1n}  \right\lVert _ {L^2}
        \overset{a}{ \asymp } \frac{ \left\lVert  \check {\boldsymbol \theta} - \tilde {\boldsymbol \theta} \right\lVert _ {2} }{ \sqrt{K_{n1}} } 
        =  \frac{ \left\lVert 
        (B_{n1}^T \Pi_{n1}^{-2} B_{n1} )^{-1} B_{n1}^T \Pi_{n1}^{-2} ( B_{n1} \check {\boldsymbol \theta} - \tilde \Pi_{n1}^{2} \tilde W_{n1})
        \right\lVert _ {2} }{ \sqrt{K_{n1}} }
    \end{aligned}
\end{equation}
where (a) follows the properties of B-spline basis functions. Then using SVD decomposition, we have $B_{n1}=U\Lambda V^T$ where $U \in \mathcal R ^ {n1 \times n1}, \Lambda \in \mathcal R ^ {n1 \times K_{n1}} , V \in \mathcal R ^ {K_{n1} \times K_{n1}}$. Then all diagonal elements of $\frac{K_{n1}}{n_1}\Lambda ^T \Lambda $ fall between $M_1$ and $M_2$ where $M_1, M_2$ are positive constants and $\frac{K_{n1}}{n_1}\Lambda ^T \Lambda $ is invertible (Lemma A.3 in \citet{huang2004polynomial}). Since the eigenvalues of $\frac{K_{n1}}{n_1} B_{n1}^T \Pi_{n1}^{-2} B_{n1} $ are the diagonal elements of  $\frac{K_{n1}}{n_1}\Lambda ^T \Lambda $ and $\hat g_n$ are bounded, we have that all the eigenvalues of $\frac{K_{n1}}{n_1} B_{n1}^T \Pi_{n1}^{-2} B_{n1} $ fall between $M_1$ and $M_2$ where $M_1, M_2$ are some positive constants and  $\frac{K_{n1}}{n_1} B_{n1}^T \Pi_{n1}^{-2} B_{n1} $ is also invertible. Then Eq. \ref{app eq bias second term} can be written:
\begin{equation} \label{app eq bias second term2}
    \begin{aligned}
        \left\lVert \check {\boldsymbol \theta}^T \boldsymbol \varphi^{K_{n1}}  - \tilde \epsilon_{1n}  \right\lVert _ {L^2}
        & \asymp
         \frac{ \left\lVert 
        (B_{n1}^T \Pi_{n1}^{-2} B_{n1} )^{-1} B_{n1}^T \Pi_{n1}^{-2} ( B_{n1} \check {\boldsymbol \theta} - \tilde \Pi_{n1}^{2} \tilde W_{n1})
        \right\lVert _ {2} }{ \sqrt{K_{n1}} } \\
        & \asymp \frac{K_{n1}}{n_1} \frac{ \left\lVert 
         B_{n1}^T \Pi_{n1}^{-2} ( B_{n1} \check {\boldsymbol \theta} - \tilde \Pi_{n1}^{2} \tilde W_{n1})
        \right\lVert _ {2} }{ \sqrt{K_{n1}} } \\
        & = \frac{\sqrt {K_{n1}} }{n_1}  \left\lVert 
         B_{n1}^T \Pi_{n1}^{-2} ( B_{n1} \check {\boldsymbol \theta} - \tilde \Pi_{n1}^{2} \tilde W_{n1})
        \right\lVert _ {2}  \\
        & \overset{b}{\asymp } \frac{\sqrt {K_{n1}} }{n_1} \sqrt{
         \mathbf 1 ^T \Pi_{n1}^{-2} B_{n1}^T B_{n1} \Pi_{n1}^{-2} \mathbf 1
        }   \rho_{n1}  \\
        &  \asymp  \frac{\sqrt {K_{n1}} }{n_1} \sqrt{
         \Sigma_{k=1}^{K_{n1}} \left( \Sigma_{i=1}^{n_1} \frac{\varphi_k(z_i)}{\hat g_n(1,z_i|x_i, \mathcal{N} _i)} \right)^2}
          \rho_{n1}  \\
        &  \asymp  \sqrt {K_{n1}} \sqrt{
         \Sigma_{k=1}^{K_{n1}} \left( \Sigma_{i=1}^{n_1} \frac{1}{n_1} \varphi_k(z_i) \right)^2}
          \rho_{n1},  \\
    \end{aligned}
\end{equation}
where (b) is based on the properties of B-spline space $\left\lVert B_{n1} \check {\boldsymbol \theta} - \tilde \Pi_{n1}^{2} \tilde W_{n1} \right\lVert _ {\infty} = O_p (\rho_{n1})$ since $\left( \check \epsilon_{1n}(z_1),  \check \epsilon_{1n}(z_2),..., \check \epsilon_{1n}(z_{n_1}) \right)^T =\tilde \Pi_{n1}^{2} \tilde W_{n1} $. Following Lemma A.6 in \citet{huang2004polynomial}, for any $a > [ \mathbb E \varphi_k(Z)]^2 K_{n1}$, we have
\begin{equation*}
    \begin{aligned}
        \text{Prob} \left( 
         \Sigma_{k=1}^{K_{n1}} \left( \Sigma_{i=1}^{n_1} \frac{1}{n_1} \varphi_k(z_i) \right)^2 \right) 
         & \overset{a}{ \leq}  \Sigma_{k=1}^{K_{n1}} \text{Prob} \left( \left| 
           \Sigma_{i=1}^{n_1} \frac{1}{n_1} \varphi_k(z_i) \right| > \sqrt{ \frac{a}{K_{n1}}} \right) \\
         & \leq  \Sigma_{k=1}^{K_{n1}} \text{Prob} \left( \left| 
           \Sigma_{i=1}^{n_1} \frac{1}{n_1} \varphi_k(z_i) - \mathbb E \varphi_k(Z)  \right| + \left| \mathbb E \varphi_k(Z) \right|  > \sqrt{ \frac{a}{K_{n1}}} \right) \\
         & \leq  \Sigma_{k=1}^{K_{n1}} \text{Prob} \left( \left| 
           \Sigma_{i=1}^{n_1} \frac{1}{n_1} \varphi_k(z_i) - \mathbb E \varphi_k(Z)  \right|   > \sqrt{ \frac{a}{K_{n1}}} - \left| \mathbb E \varphi_k(Z) \right| \right) \\
         & \overset{b}{\leq} 2 K_{n1} \exp \left\{ -2n \left( \sqrt{\frac{a}{K_{n1}}} - \left| \mathbb E \varphi_k(Z) \right|  \right)^2 \right\} ,\\
    \end{aligned}
\end{equation*}
where (a) uses union bound and (b) follows Hoeffding's Inequality for bounded random variables. Since $\mathbb E \varphi_k(Z) \asymp 1/K_{n1} $, we pick $a = 2[ \mathbb E \varphi_k(Z)]^2 K_{n1} \asymp 1/K_{n1}$ and then $\Sigma_{k=1}^{K_{n1}} \left( \Sigma_{i=1}^{n_1} \frac{1}{n_1} \varphi_k(z_i) \right)^2 = O_p(1/K_{n1})$. Taking it into \ref{app eq bias second term2}, we have
\begin{equation*} 
    \begin{aligned}
        \left\lVert \check {\boldsymbol \theta}^T \boldsymbol \varphi^{K_{n1}}  - \tilde \epsilon_{1n}  \right\lVert _ {L^2} = O_p(\rho_{n1}).
    \end{aligned}
\end{equation*}
Again, $\rho_{n1} = O(K_{n1}^{-2})$ under Assumptions in Theorem \ref{app Theorem 3}.

Hence, we can conclude that the bias term is bounded:
\begin{equation} 
    \begin{aligned}
        \left\lVert \check \epsilon_{1n}  - \tilde \epsilon_{1n}  \right\lVert _ {L^2}  = O_p(\rho_{n1}).
    \end{aligned}
\end{equation}

\textbf{Bound on variance term}

Based on the properties of B-spline space, we first have
\begin{equation} 
     \left\lVert \hat \epsilon_{1n} - \tilde \epsilon_{1n}  \right\lVert _ {L^2} 
     \lesssim \frac{ \left\lVert  \hat {\boldsymbol \theta} - \tilde {\boldsymbol \theta} \right\lVert _ {2} }{ \sqrt{K_{n1}} } .
\end{equation}

Then,
\begin{equation} \label{app variance}
\begin{aligned}
     \left\lVert  \hat {\boldsymbol \theta} - \tilde {\boldsymbol \theta} \right\lVert _ {2} 
     = & \left\lVert 
        (B_{n1}^T \Pi_{n1}^{-2} B_{n1} )^{-1} B_{n1}^T ( \boldsymbol W_{n1} - \Pi_{n1}^{-2} \tilde \Pi_{n1}^{2} \tilde {\boldsymbol W}_{n1})
        \right\lVert _ {2} \\
     = & \left\lVert 
        (B_{n1}^T \Pi_{n1}^{-2} B_{n1} )^{-1} B_{n1}^T ( \boldsymbol W_{n1} - \tilde {\boldsymbol W}_{n1} + \tilde {\boldsymbol W}_{n1} - \Pi_{n1}^{-2} \tilde \Pi_{n1}^{2} \tilde {\boldsymbol W}_{n1})
        \right\lVert _ {2} \\
     \leq & \left\lVert 
        (B_{n1}^T \Pi_{n1}^{-2} B_{n1} )^{-1} B_{n1}^T ( \boldsymbol W_{n1} - \tilde {\boldsymbol W}_{n1} )
        \right\lVert _ {2} +
        \left\lVert 
        (B_{n1}^T \Pi_{n1}^{-2} B_{n1} )^{-1} B_{n1}^T ( \tilde {\boldsymbol W}_{n1} - \Pi_{n1}^{-2} \tilde \Pi_{n1}^{2} \tilde {\boldsymbol W}_{n1})
        \right\lVert _ {2}.\\
\end{aligned}
\end{equation}
We first consider the first term. Denote $\boldsymbol \delta = (\delta_1,...,\delta_{n_1}):= \boldsymbol W_{n1} - \tilde {\boldsymbol W}_{n1} $, we have:
\begin{equation} \label{app first term of variance}
\begin{aligned}
    \left\lVert 
        (B_{n1}^T \Pi_{n1}^{-2} B_{n1} )^{-1} B_{n1}^T ( \boldsymbol W_{n1} - \tilde {\boldsymbol W}_{n1} )
        \right\lVert _ {2} ^2
     & =  \boldsymbol \delta^T  B_{n1} (B_{n1}^T \Pi_{n1}^{-2} B_{n1} )^{-2}  B_{n1}^T \boldsymbol \delta \\
     & \asymp \frac{K_{n1} ^2}{n^2} \boldsymbol \delta^T B_{n1}  B_{n1}^T \boldsymbol \delta \\
     & =  \frac{K_{n1} ^2}{n^2} \Sigma_{k=1}^{K_{n1}} \left( \Sigma_{i=1}^{n_1} \varphi_k(z_i) \delta_i \right)^2 \\
     & \leq K_{n1} ^2 \Sigma_{k=1}^{K_{n1}} \sup_{\hat g, \hat \mu} \left( \frac{1}{n} \Sigma_{i=1}^{n_1}  \varphi_k(z_i) \delta_i \right)^2.
\end{aligned}
\end{equation}
By definition, we rewrite $\delta_i$ as 
\begin{equation*}
\begin{aligned}
    \delta_i = \underbrace{ \frac{\mu(1,z_i,x_i,x_{\mathcal N_i}) - \hat \mu_n(1,z_i,x_i,x_{\mathcal N_i})}{\hat g(1,z_i|x_i,x_{\mathcal N_i}))} - \mathbb P \left( 
    \frac{\mu(1,Z,X,X_{\mathcal N}) - \hat \mu_n(1,Z,X,X_{\mathcal N}) }{\hat g(1,Z|X,X_{\mathcal N}))} | Z=z_i \right) }_{u_i} + \underbrace{ \frac{v_i}{\hat g (1,z_i|x,x_{\mathcal N_i}))} }_{\tilde v_i},
\end{aligned}
\end{equation*}
where under Assumption \ref{condition 2} in Theorem \ref{app Theorem 3}, $ \mathbb E (\tilde v_i | z_i,x_i,x_{\mathcal N_i})=0 $ and also $ \mathbb E ( u_i | z_i)=0 $. Thus, from union bound we have
\begin{equation} \label{app bound uv}
    \begin{aligned}
       & \text{Prob} \left( \Sigma_{k=1}^{K_{n1}} \sup_{\hat g, \hat \mu} \left( \frac{1}{n} \Sigma_{i=1}^{n_1}  \varphi_k(z_i) \delta_i \right)^2 > a\right) \\
          \leq& \Sigma_{k=1}^{K_{n1}}  \text{Prob} \left( \sup_{\hat g, \hat \mu} \left( \frac{1}{n} \Sigma_{i=1}^{n_1}  \varphi_k(z_i) \delta_i \right)^2 > \frac{a}{K_{n1}}\right) \\
          = &\Sigma_{k=1}^{K_{n1}}  \text{Prob} \left( \sup_{\hat g, \hat \mu} \left| \frac{1}{n} \Sigma_{i=1}^{n_1}  \varphi_k(z_i) \delta_i \right| > \sqrt{\frac{a}{K_{n1}} } \right) \\
          = & \Sigma_{k=1}^{K_{n1}}  \text{Prob} \left( \sup_{\hat g, \hat \mu} \left| \frac{1}{n} \Sigma_{i=1}^{n_1}  \varphi_k(z_i) (u_i + \tilde v_i) \right| > \sqrt{\frac{a}{K_{n1}} } \right) \\
          = &\Sigma_{k=1}^{K_{n1}}  \text{Prob} \left( \sup_{\hat g, \hat \mu} \left| \frac{1}{n} \Sigma_{i=1}^{n_1}  \varphi_k(z_i) u_i  \right| > \frac{1}{2} \sqrt{\frac{a}{K_{n1}} } \right) 
           + \Sigma_{k=1}^{K_{n1}}  \text{Prob} \left( \sup_{\hat g, \hat \mu} \left| \frac{1}{n} \Sigma_{i=1}^{n_1}  \varphi_k(z_i) \tilde v_i \right| > \frac{1}{2}\sqrt{\frac{a}{K_{n1}} } \right) \\
    \end{aligned}
\end{equation}

Based on 
\begin{equation*}
    \begin{aligned}
       & \text{Rad}_n (\varphi_k (\mathcal Q + \mu)\mathcal U ^{-1} ) \\
        \overset{a}{\leq} & \frac{1}{2}(\left\lVert \varphi_k \right\rVert_\infty + \left\lVert (\mathcal Q + \mu)\mathcal U ^{-1} \varphi_k \right\rVert_\infty) (\text{Rad}_n (\varphi_k ) + \text{Rad}_n ( (\mathcal Q + \mu)\mathcal U ^{-1}) ) \\
        \overset{a}{\leq}& \frac{1}{2}(\left\lVert \varphi_k \right\rVert_\infty + \left\lVert (\mathcal Q + \mu)\mathcal U ^{-1} \varphi_k \right\rVert_\infty) (\text{Rad}_n (\varphi_k ) + 
       \frac{1}{2}(\left\lVert \mathcal Q \right\rVert_\infty + \left\lVert \mathcal U ^{-1} \varphi_k \right\rVert_\infty) (\text{Rad}_n (\mathcal Q ) \\
       & + \max (\frac{c^2}{2}, \frac{2}{(c-1/c)^2}) \text{Rad}_n (\mathcal U - \frac{1}{2c} ) + \frac{2c}{n_1}
       ) \\
       = & O(n_1^{1/2}),
    \end{aligned}
\end{equation*}
where (a) is based on Lemma 5 in \citet{nie2020vcnet}, and (b) is based on Lemma 5 in \citet{nie2020vcnet} and Theorem 12 in \cite{bartlett2002rademacher} by plugging $h: x \mapsto \frac{1}{x-1/2c} + 2c $, then we can bound the first term in \ref{app bound uv}:
\begin{equation} \label{app bound v}
    \begin{aligned}
        &  \text{Prob} \left( \sup_{\hat g, \hat \mu} \left| \frac{1}{n} \Sigma_{i=1}^{n_1}  \varphi_k(z_i) u_i  \right| > \frac{1}{2} \sqrt{\frac{a}{K_{n1}} } \right) \\
        \overset{a}{\lesssim} & \frac{\mathbb E (\sup_{\hat g, \hat \mu} \left| \frac{1}{n} \Sigma_{i=1}^{n_1}  \varphi_k(z_i) u_i  \right| )}{\frac{1}{2} \sqrt{\frac{a}{K_{n1}} }} \\
        \overset{b}{\asymp} & \sqrt{\frac{K_{n1}}{a n_1}}, 
    \end{aligned}
\end{equation}
where (a) follows Markov Inequality and (b) follows the definition of Rademacher complexity.

Then we bound the second term in \ref{app bound uv} using union bound: for any $M_{n1} > 0$,
\begin{equation} \label{app eq bound v}
    \begin{aligned}
        &\text{Prob} \left( \sup_{\hat g, \hat \mu} \left| \frac{1}{n} \Sigma_{i=1}^{n_1}  \varphi_k(z_i) \tilde v_i \right| > \frac{1}{2}\sqrt{\frac{a}{K_{n1}} } \right) \\
        \leq & \text{Prob} \left( \sup_{\hat g, \hat \mu} \left| \frac{1}{n} \Sigma_{i=1}^{n_1}  \varphi_k(z_i) \tilde v_i \mathds{1}(|v_i|>M_{n1} ) \right| > \frac{1}{4}\sqrt{\frac{a}{K_{n1}} } \right) 
        + \text{Prob} \left( \sup_{\hat g, \hat \mu} \left| \frac{1}{n} \Sigma_{i=1}^{n_1}  \varphi_k(z_i) \tilde v_i \mathds{1}(|v_i|\leq M_{n1} ) \right| > \frac{1}{4}\sqrt{\frac{a}{K_{n1}} } \right) \\
        \lesssim & \frac{\mathbb E \left( \sup_{\hat g, \hat \mu}  \left| \frac{1}{n} \Sigma_{i=1}^{n_1}  \frac{ \varphi_k(z_i)}{\hat g(1,z_i|x_i,x_{\mathcal N_i})} \tilde v_i \mathds{1}(|v_i| > M_{n1} ) \right| \right)}{ \sqrt{\frac{a}{K_{n1}} }} 
          + \frac{\mathbb E \left(\sup_{\hat g, \hat \mu}  \left| \frac{1}{n} \Sigma_{i=1}^{n_1}  \frac{ \varphi_k(z_i)}{\hat g(1,z_i|x_i,x_{\mathcal N_i})} \tilde v_i \mathds{1}(|v_i|\leq M_{n1} ) \right|  \right)}{ \sqrt{\frac{a}{K_{n1}} }}  \\
        \lesssim & \frac{\mathbb E \left( \sup_{\hat g, \hat \mu}   \frac{1}{n} \Sigma_{i=1}^{n_1}  \frac{ \varphi_k(z_i)}{\hat g(1,z_i|x_i,x_{\mathcal N_i})} \left| \tilde v_i \mathds{1}(|v_i| > M_{n1} ) \right| \right)}{ \sqrt{\frac{a}{K_{n1}} }} 
          + \sqrt{\frac{K_{n1}}{a n_1 }} M_{n1} \\
        \lesssim & \frac{\mathbb E \left( \left| \tilde v_i \mathds{1}(|v_i| > M_{n1} ) \right| \right)}{ \sqrt{\frac{a}{K_{n1}} }} 
          + \sqrt{\frac{K_{n1}}{a n_1}} M_{n1} \\
        =  & \frac{ \int_0^\infty(1-F_W(w))dw - \int_{-\infty}^0F_W(w)dw }{ \sqrt{\frac{a}{K_{n1}} }} 
          + \sqrt{\frac{K_{n1}}{an_1}} M_{n1}  \quad\quad\quad \left( \text{ Set }  W= \left| \tilde v_i \mathds{1}(|v_i| > M_{n1} ) \right| \right) \\
        = & \frac{ \int_0^\infty \mathbb P (|v|\geq \max (M_{n1},w)) dw  }{ \sqrt{\frac{a}{K_{n1}} }} 
          + \sqrt{\frac{K_{n1}}{an}} M_{n1}  \\
        \overset{a}{\lesssim} & \frac{  \int_0^\infty  e^{-\sigma [\max (M_{n1},w))]^2 } dw  }{ \sqrt{\frac{a}{K_{n1}} }} 
          + \sqrt{\frac{K_{n1}}{a n_1}} M_{n1}   \\
        \leq & \frac{  \int_0^\infty  e^{-\sigma (M_{n1}+w)^2 } dw  }{ \sqrt{\frac{a}{K_{n1}} }} 
          + \sqrt{\frac{K_{n1}}{an_1}} M_{n1}   \\
        \overset{b}{\lesssim} & \frac{   e^{-\sigma M_{n1}^2 }\sqrt{K_{n1}} }{ M_{n1} \sqrt{a }} 
          + \sqrt{\frac{K_{n1}}{an_1}} M_{n1} ,
    \end{aligned}
\end{equation}
where (a) is based on Assumptions in Theorem \ref{app Theorem 3} that $v$ follows sub-Gaussian distribution and (b) follows Mills ratio.

Then taking $M_{n1} \asymp \sqrt{\log n_1}, a \asymp \frac{K_{n1}\log n_1}{n_1}$ and based on \ref{app bound uv},\ref{app bound v}, \ref{app eq bound v} and \ref{app first term of variance}, we have
\begin{equation} 
\begin{aligned}
     \Sigma_{k=1}^{K_{n1}} \sup_{\hat g, \hat \mu} \left( \frac{1}{n} \Sigma_{i=1}^{n_1}  \varphi_k(z_i) \delta_i \right)^2 
     = O_p(\frac{K_{n1}\log n_1}{n_1})
\end{aligned}
\end{equation}
and then the first term of \ref{app variance} can be:
\begin{equation} \label{app variance first term result}
\begin{aligned}
    \left\lVert 
        (B_{n1}^T \Pi_{n1}^{-2} B_{n1} )^{-1} B_{n1}^T ( \boldsymbol W_{n1} - \tilde {\boldsymbol W}_{n1} )
        \right\lVert _ {2} = O_p(\sqrt{ \frac{K_{n1}^3\log n_1}{n_1} } ).
\end{aligned}
\end{equation}

We next bound the second term of \ref{app variance}, i.e.,$\left\lVert (B_{n1}^T \Pi_{n1}^{-2} B_{n1} )^{-1} B_{n1}^T ( \tilde {\boldsymbol W}_{n1} - \Pi_{n1}^{-2} \tilde \Pi_{n1}^{2} \tilde {\boldsymbol W}_{n1}) \right\lVert _ {2}$. First, each coordinate of $\tilde {\boldsymbol W}_{n1} - \Pi_{n1}^{-2} \tilde \Pi_{n1}^{2} \tilde {\boldsymbol W}_{n1}$ is bounded, and using similar arguments as that of \ref{app first term of variance}, we have that
\begin{equation} \label{app variance second term result}
\begin{aligned}
    \left\lVert 
        (B_{n1}^T \Pi_{n1}^{-2} B_{n1} )^{-1} B_{n1}^T ( \boldsymbol W_{n1} - \tilde {\boldsymbol W}_{n1} )
        \right\lVert _ {2} = O_p(\sqrt{ \frac{K_{n1}^3\log n_1}{n_1} } )
\end{aligned}
\end{equation}

Based on Eq. \ref{app variance} \ref{app variance first term result} and \ref{app variance second term result}, we bound the variance term:
\begin{equation*} 
\begin{aligned}
     \left\lVert \hat \epsilon_{1n} - \tilde \epsilon_{1n}  \right\lVert _ {L^2} = O_p(\sqrt{ \frac{K_{n1}^2\log n_1}{n_1} } ).
\end{aligned}
\end{equation*}

Hence, based on the rate of the first bias term and second variance term, we have
\begin{equation*} 
\begin{aligned}
    \left\lVert \hat \epsilon_{1n} - \check \epsilon_{1n}  \right\lVert _ {L^2} 
    = O_p(\rho_{n1} + \sqrt{ \frac{K_{n1}^2\log n_1}{n_1} }) =  O_p(K_{n1}^{-2} + \sqrt{ \frac{K_{n1}^2\log n_1}{n_1} }), 
\end{aligned}
\end{equation*}
and taking optimal $K_{n1} \asymp n_1^{1/6}$, we have
\begin{equation*} 
\begin{aligned}
    \left\lVert \hat \epsilon_{1n} - \check \epsilon_{1n}  \right\lVert _ {L^2} 
    = O_p(n_1^{-1/3}\sqrt{\log n_1}) .
\end{aligned}
\end{equation*}

Then similarly by taking optimal $K_{n0} \asymp n_0^{1/6}$, we can conclude 
\begin{equation*} 
\begin{aligned}
    \left\lVert \hat \epsilon_{0n} - \check \epsilon_{0n}  \right\lVert _ {L^2} 
    = O_p(n_0^{-1/3}\sqrt{\log n_0}) .
\end{aligned}
\end{equation*}

Therefore, we can have the desired result:
\begin{equation*} 
\begin{aligned}
    \left\lVert \hat \epsilon_{n} - \check \epsilon_{n}  \right\lVert _ {L^2} 
    = O_p(n_0^{-1/3}\sqrt{\log n_0} + n_1^{-1/3}\sqrt{\log n_1}).
\end{aligned}
\end{equation*}

\end{proof}

\section{Additional Implementation Details}

The compared baselines in this paper are 
\begin{itemize}[itemsep=1pt,topsep=1pt,parsep=1pt]
    \item \textbf{CFR+z}: Original CFR \cite{johansson2021generalization} uses two-heads neural networks with an MMD term to achieve \underline{c}ounter\underline{f}actual \underline{r}egression under no interference assumption.
    We modify CFR by additionally inputting the exposure $z_i$.
    \item \textbf{ND+z}: Original ND \cite{guo2020learning} propose \underline{n}etwork \underline{d}econfounder framework by using network information under no interference assumption.
    We modify ND by additionally inputting the exposure $z_i$.
    \item \textbf{GEst} \cite{ma2021causal}: GEst, based on CFR, uses \underline{G}CN to aggregate the features of neighbors and input the exposure $z_i$ to \underline{est}imate causal effects under networked interference. 
    \item \textbf{NetEst} \cite{jiang2022estimating}: NetEst learns balanced representation via adversarial learning for \underline{net}worked causal effect \underline{est}imation.
    \item \textbf{RRNet} \cite{cai2023generalization}: RRNet combines the \underline{r}epresentation learning and \underline{r}eweighting techniques in its \underline{net}work to estimate causal effects under interference. 
    \item \textbf{NDR} \cite{liu2023nonparametric}: NDR is a \underline{n}onparametric \underline{d}oubly robust estimator to identify the average causal effects under networked interference, where we use SuperLearner to estimate nuisance functions. Since it is used to identify average effects on the given training data, we only report the results regarding AME, ASE, and ATE on \textit{Within Sample}. 
\end{itemize}

We build our model as follows. Our code is implemented using the PyTorch framework. We use 1 graph convolution as our encoder, and all MLP in TNet is $3$ fully connected layers with $64$ hidden units in each layer. Dropouts are used with a given probability of $0.05$ during training. We use full-batch training and use Adam optimizer \cite{kingma2014adam} with the learning rate across $\{0.001, 0.0001\}$ for $\mathcal L_1+\mathcal L_2$ and learning rate across  $\{0.01,0.001, 0.0001\}$ for $\mathcal L_3$. The space of parameter $\alpha$ and $\gamma$ is $\{0.5, 1\}$, and $\beta = 20 \times n^{-1/2} $ . In estimators of $\epsilon$, we use two B-spline estimators with degree $2$ and the same number of knots across $\{4,5,10,20\}$ (all equally spaced at $\left[0,1\right]$). All the experiments can be run on a single 11GB GPU of GeForce RTX™ 2080 Ti. Our code is available at \url{https://github.com/WeilinChen507/targeted_interference} and \url{https://github.com/DMIRLAB-Group/TNet}.

\section{Additional Datasets Details}

Following existing works \cite{veitch2019using, jiang2022estimating, guo2020learning, ma2021deconfounding, cai2023generalization}, we use the two semi-synthetic datasets from BlogCatalog(BC) and Flickr. In our paper, we denote the original datasets as BC(homo) and Flickr(homo), which are available at \url{https://github.com/songjiang0909/Causal-Inference-on-Networked-Data}. In detail, the linear discriminant analysis technique is applied to reduce the features' dimensions to 10 in both datasets. We reuse the data generation by \citeauthor{jiang2022estimating}. Specifically, given the feature $x_i$ of unit $i$, the treatments are simulated by
\begin{equation*}
    \begin{aligned}
        t_i=
            \begin{cases}
            1& \text{if \quad $tpt_i>\overline{tpt}$},\\
            0& \text{else},
            \end{cases}
    \end{aligned}
\end{equation*}
where $\overline{tpt}$ is the average of all $tpt_i$, and $tpt_i = pt_i + pt_{\mathcal N_i}$, and $pt_i = Sigmoid(w_1 \times x_i)$, and $pt_{\mathcal N _i}$ is the average of all $i$'s neighbors' propensities. Here $w_1$ is a randomly generated weight vector that mimics the causal mechanism from the features to treatments. Then, $z_i$ can be directly obtained by network topology $E$ and $t_{\mathcal N_i}$. The potential outcome is simulated by
\begin{equation*}
    \begin{aligned}
        y_i(t_i,z_i) = t_i + z_i + po_i + 0.5 \times po_{\mathcal N_i} + e_i,
    \end{aligned}
\end{equation*}
where $e_i$ is a Gaussian noise term, and $po_i = Sigmoid(w_2 \times x_i)$, and $ po_{\mathcal N_i}$ is the averages of $po_i$. Here, $w_2$ is  a randomly generated weight vector that mimics the causal mechanism from the features to outcomes. The original datasets measure the homogeneous causal effects. 

For heterogeneous effect estimation, we modify the outcome as
\begin{equation*}
   \begin{aligned}
        y_i(t_i,z_i) = t_i + z_i + po_i + 0.5 \times po_{\mathcal N_i}  + t_i \times( po_i + 0.5 \times po_{\mathcal N_i} ) + e_i.
    \end{aligned}
\end{equation*}
We denote the modified datasets as \textbf{BC(hete)} and \textbf{Flickr(hete)}.

\section{Additional Experimental Results}

\begingroup
\setlength{\tabcolsep}{13pt} 
\renewcommand{\arraystretch}{1.5} 
\begin{table*}[!h]
\setlength{\abovecaptionskip}{0cm}
\caption{Results of causal effect estimation on Flickr Dataset (homo). The top result is highlighted in bold, and the runner-up is underlined.}
\label{table: flicker homo}
\resizebox{1.\textwidth}{!}{
\setlength{\tabcolsep}{4pt}
\begin{tabular}{ccccccccccc}
\makecell[c]{Metric}                       & setting                        & effect     & CFR+z         & GEst     & ND+z          & NetEst        & RRNet & NDR   & Tnet(w/o. $\mathcal{L}_3$) & Tnet    \\ \hline
\multicolumn{1}{c|}{\multirow{6}{*}{ $\varepsilon_{average}$ }} & \multirow{3}{*}{Within Sample} & AME & $  0.0777_{\pm0.0468  }$ & $ 0.1292 _{\pm 0.0274 }$ & $ 0.0460 _{\pm 0.0220  }$ & $ 0.0402 _{\pm 0.325 }$ &  $ \underline{0.0273} _{\pm 0.0099 }$ & $0.5036 _{\pm 0.0029} $ & $0.0589 _{\pm 0.0355} $ &\pmb { $0.0245 _{\pm 0.0192} $}  \\
\multicolumn{1}{c|}{}                     &                                & ASE      & $  0.1574_{\pm 0.0122 }$ & $ 0.1817 _{\pm0.0402  }$ & $  0.1982_{\pm0.0263  }$ & $ 0.0312 _{\pm 0.0132 }$ & $ \underline{0.0230} _{\pm 0.0118 }$ & $0.2467 _{\pm0.0015} $ &  $0.1101 _{\pm0.0114} $ & \pmb { $0.0197 _{\pm0.0111} $} \\
\multicolumn{1}{c|}{}                     &                                & ATE     & $  0.1715_{\pm0.0867  }$ & $  0.0551_{\pm 0.0351 }$ & $  0.3300_{\pm 0.0405  }$ & \pmb{$ 0.0126 _{\pm 0.0084 }$} & $ 0.0328 _{\pm 0.0209 }$ & \pmb { $0.0126 _{\pm0.0059} $} & $0.2596 _{\pm0.0333} $ & $ \underline{ 0.0146} _{\pm0.0102} $  \\ \cline{2-11} 
\multicolumn{1}{c|}{}                     & \multirow{3}{*}{Out-of Sample} & AME & $ 0.0781 _{\pm0.0473 }$ & $  0.1280_{\pm 0.0285 }$ & $  0.0397_{\pm0.0205  }$ & $ 0.0413 _{\pm 0.0320 }$ & $ \underline{ 0.0272 } _{\pm 0.0098 }$ & / &  $0.0578 _{\pm 0.0310} $ & \pmb { $0.0241 _{\pm 0.0195} $}  \\
\multicolumn{1}{c|}{}                     &                                & ASE      & $  0.1574_{\pm0.0130  }$ & $ 0.1822 _{\pm 0.0400 }$ & $  0.1973_{\pm 0.0262 }$ & $ 0.0289 _{\pm 0.0143 }$ & $ \underline{0.0226 } _{\pm 0.0123 } $ & / & $0.0808 _{\pm0.0072} $ & \pmb { $0.0197 _{\pm0.0112} $} \\
\multicolumn{1}{c|}{}                     &                                & ATE     & $ 0.1707 _{\pm 0.0873 }$ & $ 0.0556 _{\pm 0.0327 }$ & $  0.3203_{\pm0.0420  }$ & $ 0.0152 _{\pm 0.0114 }$ & $0.0327  _{\pm0.0217  }$ & / &  $0.2095 _{\pm0.0137} $ &  \pmb { $0.0150 _{\pm0.0100} $} \\ \hline

\multicolumn{1}{c|}{\multirow{6}{*}{$\varepsilon_{individual}$}} & \multirow{3}{*}{Within Sample} & IME  & $ 0.0884 _{\pm 0.0447 }$ & $ 0.1430 _{\pm 0.0251 }$ & $ 0.0701 _{\pm 0.0207 }$ & $ 0.0529 _{\pm 0.0288 }$ & $  \underline{ 0.0425 }_{\pm 0.0159 }$ &/ & $0.0911 _{\pm 0.0334} $ & \pmb { $0.0317 _{\pm 0.0163} $}\\
\multicolumn{1}{c|}{}                     &                                & ISE      & $ 0.1615 _{\pm0.0145  }$ & $ 0.1841 _{\pm 0.0400 }$ & $ 0.1988 _{\pm 0.0261 }$ & $ 0.0369 _{\pm 0.0118 }$ & $  \underline{ 0.0316} _{\pm 0.0066 }$ &/ & $0.1369 _{\pm0.0134} $ & \pmb { $0.0240 _{\pm0.0105} $}   \\
\multicolumn{1}{c|}{}                     &                                & ITE      & $  0.1814_{\pm0.0803  }$ & $ 0.0865 _{\pm 0.0236 }$ & $ 0.3357 _{\pm 0.0387 }$ & $  \underline{ 0.0324} _{\pm 0.0092 }$ & $   0.0435  _{\pm0.0154  }$ &/ &  $0.2916 _{\pm0.0425} $ & \pmb { $0.0237 _{\pm0.0061} $} \\ \cline{2-11} 
\multicolumn{1}{c|}{}                     & \multirow{3}{*}{Out-of Sample} & IME & $ 0.0892 _{\pm 0.0454 }$ & $  0.1419_{\pm 0.0260 }$ & $  0.0740_{\pm 0.0221 }$ & $ 0.0544 _{\pm 0.0280 }$ & $ \underline{ 0.0427 } _{\pm 0.0163 }$ & / &  $0.0908 _{\pm 0.0326} $ & \pmb { $0.0310 _{\pm 0.0166} $} \\
\multicolumn{1}{c|}{}                     &                                & ISE      & $ 0.1616 _{\pm 0.0155 }$ & $  0.1842_{\pm0.0398  }$ & $ 0.1980 _{\pm 0.0260 }$ & $  \underline{ 0.0289 }_{\pm 0.0143 }$ & $ 0.0315 _{\pm 0.0067 }$ &/ &$0.1080 _{\pm0.0111} $ & \pmb { $0.0237 _{\pm0.0107} $} \\
\multicolumn{1}{c|}{}                     &                                & ITE      & $  0.1810_{\pm0.0804  }$ & $ 0.0864 _{\pm 0.0223 }$ & $ 0.3283 _{\pm 0.0395 }$ & $ \underline{ 0.0359} _{\pm 0.0100 }$ & $ 0.0437 _{\pm 0.0159 }$ &/ & $0.2429 _{\pm0.0239} $ & \pmb { $0.0233 _{\pm0.0064} $} \\ \hline
\end{tabular}
}
\end{table*}

\begingroup
\setlength{\tabcolsep}{13pt} 
\renewcommand{\arraystretch}{1.5} 
\begin{table*}[!h]
\setlength{\abovecaptionskip}{0cm}
\caption{Results of causal effect estimation on BC(hete) Dataset. The top result is highlighted in bold, and the runner-up is underlined.} \label{app: tb: bchete}
\resizebox{1.\textwidth}{!}{
\setlength{\tabcolsep}{4pt}
\begin{tabular}{ccccccccccc}
\makecell[c]{Metric}                       & setting                        & effect     & CFR+z         & GEst     & ND+z          & NetEst        & RRNet & NDR  & Ours (w.o. $\mathcal{L}_3$) & Ours    \\ \hline
\multicolumn{1}{c|}{\multirow{6}{*}{ $\varepsilon_{average}$ }} & \multirow{3}{*}{Within Sample} & AME & $ 0.0557 _{\pm 0.0149 }$ & $  0.1560_{\pm 0.1092 }$ & $  0.1236_{\pm 0.1035 }$ & $ 0.0844 _{\pm 0.0612  }$ & $ \underline{ 0.0554} _{\pm 0.0426 }$  & $1.0385 _{\pm 0.0591} $  & $0.0989 _{\pm 0.0496} $ & \pmb { $0.0340 _{\pm 0.0348} $} \\
\multicolumn{1}{c|}{}                     &                                & ASE      & $ 0.2481 _{\pm 0.0536 }$ & $ 0.1595 _{\pm0.0473  }$ & $  0.2890_{\pm 0.0659 }$ & $ 0.0337 _{\pm 0.0169 }$ & $ \underline{0.0256 } _{\pm 0.0220 }$  &  $0.2015 _{\pm0.0147} $ & $0.1753 _{\pm0.0263} $ &  \pmb { $0.0109 _{\pm0.0083} $} \\
\multicolumn{1}{c|}{}                     &                                & ATE     & $ 0.2987 _{\pm 0.1557 }$ & $  0.1316_{\pm 0.1233 }$ & $  0.4262_{\pm 0.2175 }$ & $ 0.1130 _{\pm 0.0769 }$ & $ 0.1062 _{\pm 0.0832 }$  &  \pmb { $0.0329 _{\pm0.0244} $} & $0.3126 _{\pm0.0485} $ &   $\underline{0.0641} _{\pm0.0524} $ \\ \cline{2-11} 

\multicolumn{1}{c|}{}                     & \multirow{3}{*}{Out-of Sample} & AME & $ 0.0606 _{\pm 0.0158 }$ & $ 0.1581 _{\pm 0.1091 }$ & $ 0.1171_{\pm 0.1013 }$ & $ 0.0881 _{\pm 0.0610 }$ & $ \underline{0.0559} _{\pm 0.0435 }$  & / & $0.0949 _{\pm 0.0515} $ &  \pmb { $0.0335 _{\pm 0.0348} $}   \\
\multicolumn{1}{c|}{}                     &                                & ASE      & $ 0.2479 _{\pm 0.0544 }$ & $ 0.1588 _{\pm 0.0478 }$ & $ 0.2888 _{\pm 0.0661 }$ & $ 0.0324 _{\pm 0.0175 }$ & $  \underline{0.0259}_{\pm  0.0218}$  &/ & $0.1718 _{\pm0.0222} $ &  \pmb { $0.0109 _{\pm0.0084} $}  \\
\multicolumn{1}{c|}{}                     &                                & ATE     & $ 0.2947 _{\pm 0.1607 }$ & $ 0.1343 _{\pm 0.1232 }$ & $ 0.4249 _{\pm0.2193  }$ & $ 0.1197 _{\pm 0.0735 }$ & $ \underline{ 0.1048}_{\pm 0.0852 }$  & / & $0.2953 _{\pm0.0516} $ & \pmb { $0.0644 _{\pm0.0526} $} \\ \hline

\multicolumn{1}{c|}{\multirow{6}{*}{$\varepsilon_{individual}$}} & \multirow{3}{*}{Within Sample} & IME  & $  0.1069_{\pm 0.0301 }$ & $  0.2037_{\pm 0.0861 }$ & $  0.1746_{\pm 0.0800 }$ & $ 0.1358 _{\pm 0.0385 }$ & $ \underline{0.0692} _{\pm 0.0381 }$  & / & $0.1724 _{\pm 0.0571} $ & \pmb { $0.0494 _{\pm 0.0277} $}  \\
\multicolumn{1}{c|}{}                     &                                & ISE      & $ 0.2503 _{\pm 0.0526 }$ & $ 0.1642 _{\pm 0.0462 }$ & $ 0.2897 _{\pm 0.0656 }$ & $ 0.0426 _{\pm 0.0070 }$ & $ \underline{0.0397 }_{\pm 0.0210 }$  & / & $0.1853 _{\pm0.0262} $ &  \pmb { $0.0121 _{\pm0.0075} $}  \\
\multicolumn{1}{c|}{}                     &                                & ITE      & $ 0.3311 _{\pm 0.1247 }$ & $ 0.1886 _{\pm 0.1020 }$ & $  0.4503_{\pm 0.1973 }$ & $ 0.1553 _{\pm 0.0556 }$ & $  \underline{0.1177 }_{\pm 0.0762 }$  & / &  $0.3498 _{\pm0.0491} $ &  \pmb { $0.0706 _{\pm0.0500} $} \\ \cline{2-11} 

\multicolumn{1}{c|}{}                     & \multirow{3}{*}{Out-of Sample} & IME & $ 0.1135 _{\pm0.0356  }$ & $ 0.2063 _{\pm 0.0869 }$ & $ 0.1747 _{\pm 0.0746 }$ & $ 0.1393 _{\pm 0.0391 }$ & $\underline{ 0.0695} _{\pm0.0390  }$  & / & $0.1710 _{\pm 0.0585} $ &  \pmb { $0.0505 _{\pm 0.0269} $}  \\
\multicolumn{1}{c|}{}                     &                                & ISE      & $ 0.2502 _{\pm 0.0534 }$ & $ 0.1631 _{\pm 0.0467 }$ & $ 0.2896 _{\pm 0.0660 }$ & $ 0.0421 _{\pm 0.0062 }$ & $\underline{ 0.0398} _{\pm 0.0209 }$  & / & $0.1797 _{\pm0.0210} $ &   \pmb { $0.0121 _{\pm0.0075} $} \\ 
\multicolumn{1}{c|}{}                     &                                & ITE      & $ 0.3325 _{\pm 0.1238 }$ & $ 0.1953 _{\pm 0.0988  }$ & $  0.4497_{\pm 0.1977 }$ & $ 0.1595 _{\pm 0.0562 }$ & $ \underline{0.1169} _{\pm 0.0779 }$  & / & $0.3313 _{\pm0.0466} $ &  \pmb { $0.0713 _{\pm0.0500} $} \\  \hline
\end{tabular}
}
\end{table*}

\begingroup
\setlength{\tabcolsep}{13pt} 
\renewcommand{\arraystretch}{1.5} 
\begin{table*}[!h]
\setlength{\abovecaptionskip}{0cm}
\caption{Results of causal effect estimation on Flickr(hete) Dataset. The top result is highlighted in bold, and the runner-up is underlined.}
\label{app: tb: flhete}
\resizebox{1.\textwidth}{!}{
\setlength{\tabcolsep}{4pt}
\begin{tabular}{ccccccccccc}
\makecell[c]{Metric}                       & setting                        & effect     & CFR+z         & GEst     & ND+z          & NetEst        & RRNet & NDR  & Ours (w.o. $\mathcal{L}_3$) & Ours    \\ \hline
\multicolumn{1}{c|}{\multirow{6}{*}{ $\varepsilon_{average}$ }} & \multirow{3}{*}{Within Sample} & AME & $  0.1411_{\pm 0.1133 }$ & $  0.1390_{\pm 0.0695 }$ & $  0.1613_{\pm0.0906  }$ & $ 0.1382 _{\pm 0.0637 }$ & $ 0.0555 _{\pm0.0266  }$  & $1.0492 _{\pm 0.0167} $ &  \pmb { $0.0307 _{\pm 0.0224} $} &  $ \underline{ 0.0437} _{\pm 0.0341} $ \\
\multicolumn{1}{c|}{}                     &                                & ASE      & $ 0.2176 _{\pm 0.0413 }$ & $ 0.1134 _{\pm 0.0398 }$ & $ 0.2179 _{\pm0.0213  }$ & \pmb{ $ 0.0118 _{\pm 0.0141 }$} & $  0.0235_{\pm 0.0100 }$  & $0.1720 _{\pm0.0125} $ & $0.1771 _{\pm0.0119} $ & $ \underline{0.0234} _{\pm0.0054} $  \\
\multicolumn{1}{c|}{}                     &                                & ATE     & $  0.1975_{\pm 0.1464 }$ & $ 0.1383 _{\pm 0.0667 }$ & $  0.3090_{\pm0.1382  }$ & $ 0.1239 _{\pm 0.0728 }$ & $ \underline{ 0.0829}_{\pm 0.0739 }$ &  $0.1245 _{\pm0.0905} $  &  $0.3489 _{\pm0.0523} $ &  \pmb { $0.0782 _{\pm0.0357} $} \\ \cline{2-11} 

\multicolumn{1}{c|}{}                     & \multirow{3}{*}{Out-of Sample} & AME & $ 0.1418 _{\pm 0.1065 }$ & $  0.1324_{\pm 0.0714 }$ & $ 0.1614 _{\pm 0.0981 }$ & $  0.1310 _{\pm 0.0589 }$ & $ 0.0543 _{\pm0.0279  }$ & /  & \pmb {$0.0358 _{\pm 0.0327} $} &  $ \underline{0.0435} _{\pm 0.0330} $\\
\multicolumn{1}{c|}{}                     &                                & ASE      & $ 0.2174 _{\pm 0.0416 }$ & $ 0.1139 _{\pm 0.0399 }$ & $ 0.2176 _{\pm 0.0204 }$ & \pmb{$ 0.0110 _{\pm 0.0123 }$} & $ 0.0235 _{\pm 0.0104 }$  &  /&  $0.1433 _{\pm0.0137} $ &  $\underline{ 0.0233} _{\pm0.0054} $  \\
\multicolumn{1}{c|}{}                     &                                & ATE     & $ 0.1976 _{\pm0.1452  }$ & $ 0.1314 _{\pm 0.0733 }$ & $ 0.3042 _{\pm 0.1344 }$ & $  0.1128 _{\pm 0.0597 }$ & $ \underline{ 0.0819} _{\pm 0.0746 }$  &/  & $0.2935 _{\pm0.0584} $ &  \pmb { $0.0774 _{\pm0.0358} $}  \\ \hline

\multicolumn{1}{c|}{\multirow{6}{*}{$\varepsilon_{individual}$}} & \multirow{3}{*}{Within Sample} & IME  & $ 0.2077 _{\pm 0.0819 }$ & $  0.2239_{\pm 0.0471 }$ & $  0.2126_{\pm 0.0682 }$ & $ 0.1824 _{\pm 0.0560 }$ & $ \underline{ 0.0739}_{\pm0.0130  }$ & /  &$0.1079 _{\pm 0.0146} $ & \pmb { $0.0550 _{\pm 0.0321} $}   \\
\multicolumn{1}{c|}{}                     &                                & ISE      & $ 0.2194 _{\pm 0.0418 }$ & $ 0.1226_{\pm 0.0353 }$ & $  0.2191_{\pm 0.0218 }$ & $ \underline{0.0260} _{\pm 0.0134 }$ & $ 0.0285 _{\pm 0.0100 }$  & / &  $0.1967 _{\pm0.0097} $ & \pmb { $0.0239 _{\pm0.0053} $}  \\
\multicolumn{1}{c|}{}                     &                                & ITE      & $ 0.2532 _{\pm 0.1134 }$ & $ 0.2251 _{\pm 0.0589 }$ & $ 0.3416 _{\pm0.1271  }$ & $ 0.1753 _{\pm 0.0577 }$ & $ \underline{ 0.0978}_{\pm 0.0667 }$ & /  & $0.3898 _{\pm0.0513} $ & \pmb { $0.0836 _{\pm0.0353} $}  \\ \cline{2-11} 

\multicolumn{1}{c|}{}                     & \multirow{3}{*}{Out-of Sample} & IME & $ 0.2067 _{\pm 0.0776 }$ & $ 0.2219 _{\pm 0.0462 }$ & $  0.2151_{\pm 0.0733 }$ & $  0.1778_{\pm 0.0535 }$ & $ \underline{0.0742} _{\pm 0.0142 }$  & / & $0.1107 _{\pm 0.0230} $ & \pmb { $0.0560 _{\pm 0.0306} $}   \\
\multicolumn{1}{c|}{}                     &                                & ISE      & $ 0.2192 _{\pm 0.0420 }$ & $ 0.1220 _{\pm 0.0358 }$ & $ 0.2192 _{\pm 0.0210 }$ & $\underline{0.0268}  _{\pm 0.0127 }$ & $ 0.0281 _{\pm 0.0113 }$  & / & $0.1613 _{\pm0.0118} $ & \pmb { $0.0239 _{\pm0.0053} $}  \\ 
\multicolumn{1}{c|}{}                     &                                & ITE      & $ 0.2532 _{\pm 0.1134 }$ & $ 0.2221 _{\pm 0.0627 }$ & $ 0.3407 _{\pm 0.1223 }$ & $ 0.1689 _{\pm 0.0481 }$ & $ \underline{0.0982} _{\pm0.0664  }$ & /  & $0.3385 _{\pm0.0574} $ &  \pmb { $0.0838 _{\pm0.0349} $} \\  \hline
\end{tabular}
}
\end{table*}

\begin{figure*}[!h]
	\centering
	\subfigure[$\alpha=0.5$]{\includegraphics[width=.3\linewidth]{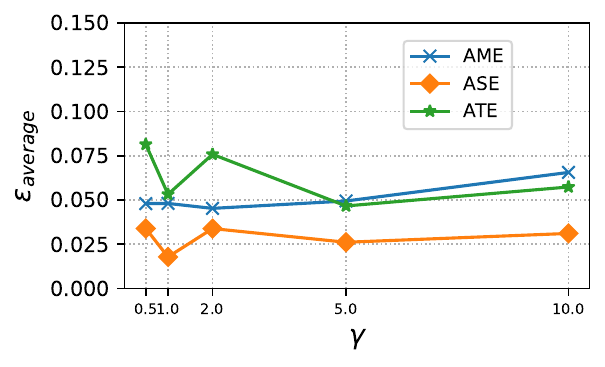}} \hspace{-0.1cm}
	\subfigure[$\alpha=1.0$]{\includegraphics[width=.3\linewidth]{figure/BC_average_fixalpha1.0varygammas.pdf}} \hspace{-0.1cm}
	\subfigure[$\alpha=2.0$]{\includegraphics[width=.3\linewidth]{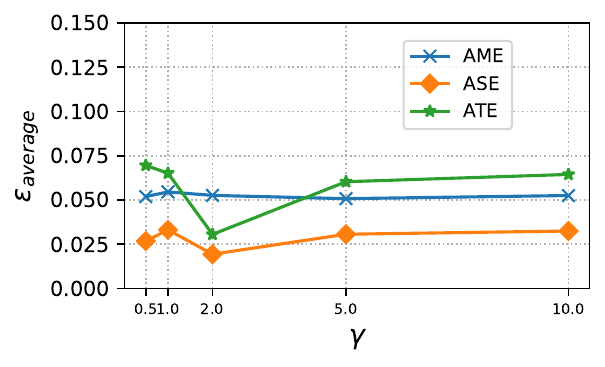}} 
    \\
 	\subfigure[$\alpha=0.5$]{\includegraphics[width=.3\linewidth]{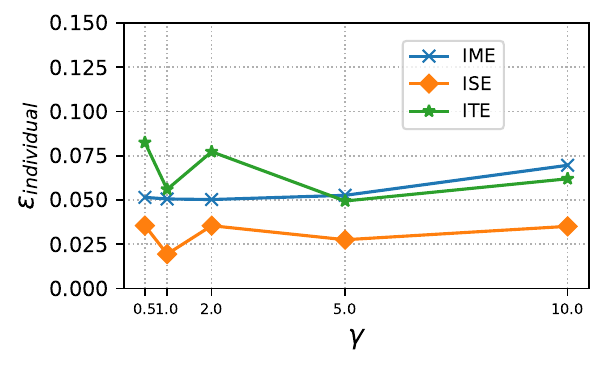}} \hspace{-0.1cm}
	\subfigure[$\alpha=1.0$]{\includegraphics[width=.3\linewidth]{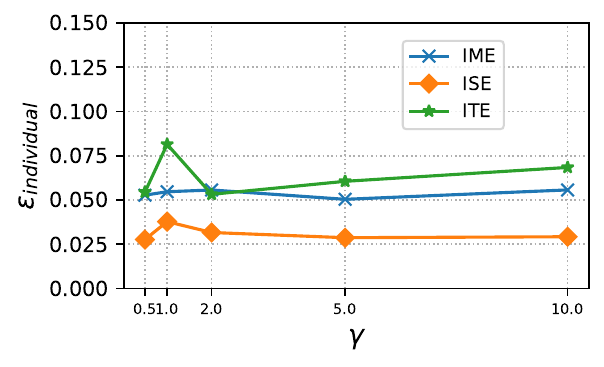}} \hspace{-0.1cm}
	\subfigure[$\alpha=2.0$]{\includegraphics[width=.3\linewidth]{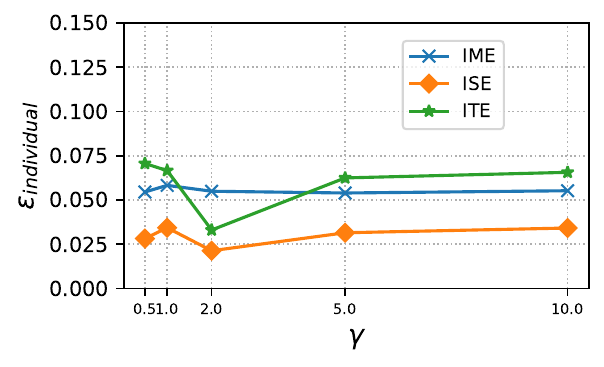}} 
    \\
    \subfigure[$\gamma=0.5$]{\includegraphics[width=.3\linewidth]{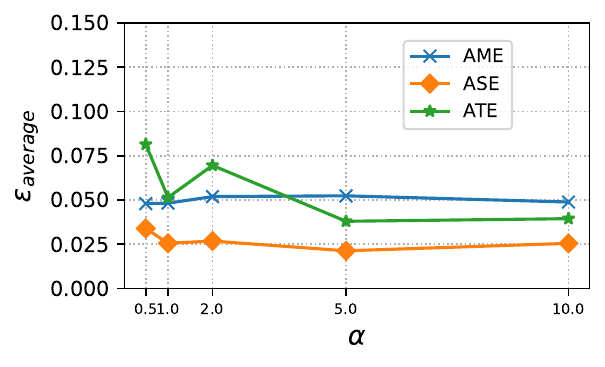}} \hspace{-0.1cm}
	\subfigure[$\gamma=1.0$]{\includegraphics[width=.3\linewidth]{figure/BC_average_fixgamma1.0varyalpha.pdf}} \hspace{-0.1cm}
	\subfigure[$\gamma=2.0$]{\includegraphics[width=.3\linewidth]{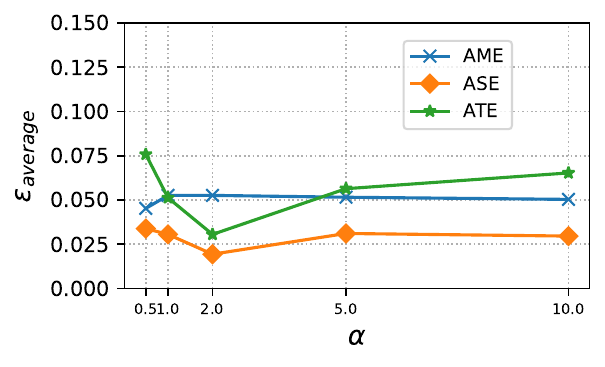}} 
    \\
    \subfigure[$\gamma=0.5$]{\includegraphics[width=.3\linewidth]{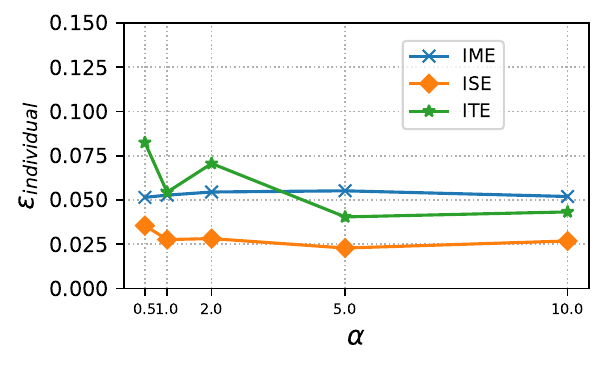}} \hspace{-0.1cm}
	\subfigure[$\gamma=1.0$]{\includegraphics[width=.3\linewidth]{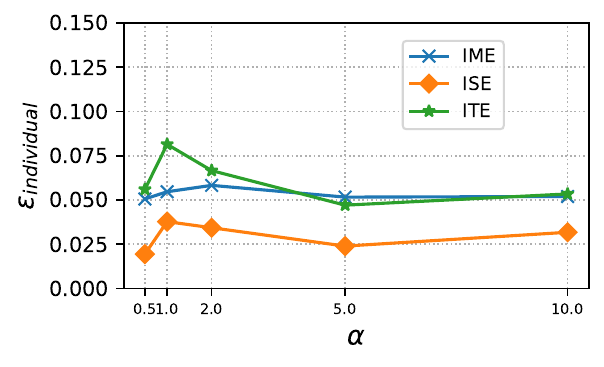}} \hspace{-0.1cm}
	\subfigure[$\gamma=2.0$]{\includegraphics[width=.3\linewidth]{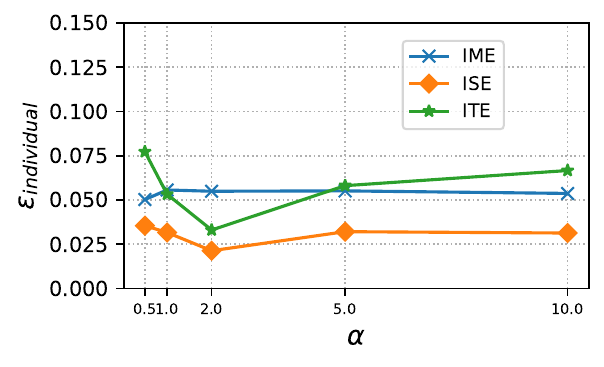}} 
    \\
	\caption{Additional hyperparameter sensitivity experimental results on BC.} \label{app: fig  sensitivity1}
\end{figure*}

\begin{figure*}[!h]
	\centering
	\subfigure[$\alpha=0.5$]{\includegraphics[width=.3\linewidth]{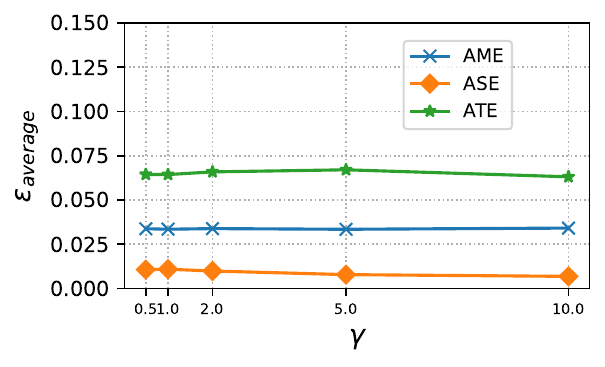}} \hspace{-0.1cm}
	\subfigure[$\alpha=1.0$]{\includegraphics[width=.3\linewidth]{figure/BC_hete_average_fixalpha1.0varygammas.pdf}} \hspace{-0.1cm}
	\subfigure[$\alpha=2.0$]{\includegraphics[width=.3\linewidth]{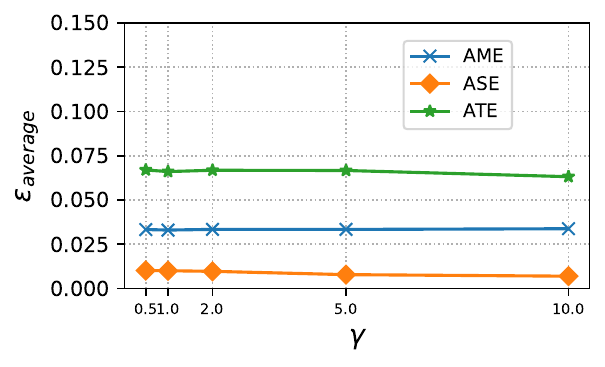}} 
    \\
 	\subfigure[$\alpha=0.5$]{\includegraphics[width=.3\linewidth]{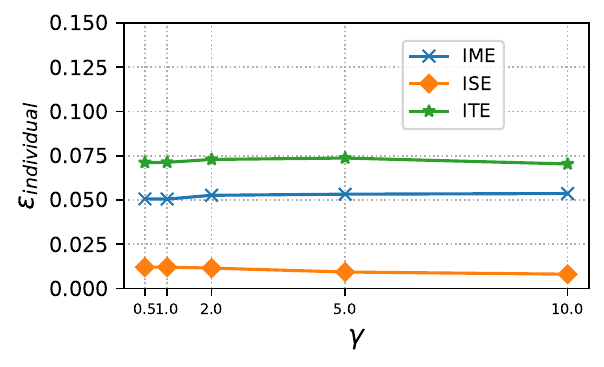}} \hspace{-0.1cm}
	\subfigure[$\alpha=1.0$]{\includegraphics[width=.3\linewidth]{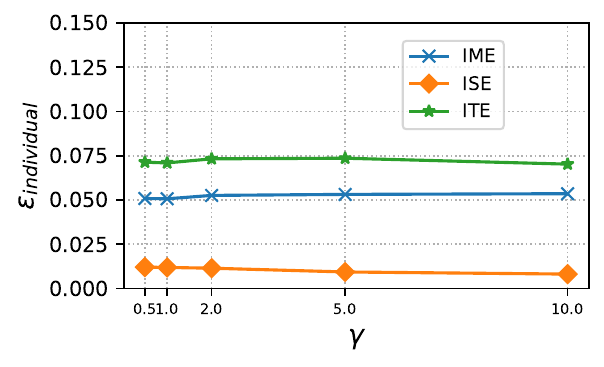}} \hspace{-0.1cm}
	\subfigure[$\alpha=2.0$]{\includegraphics[width=.3\linewidth]{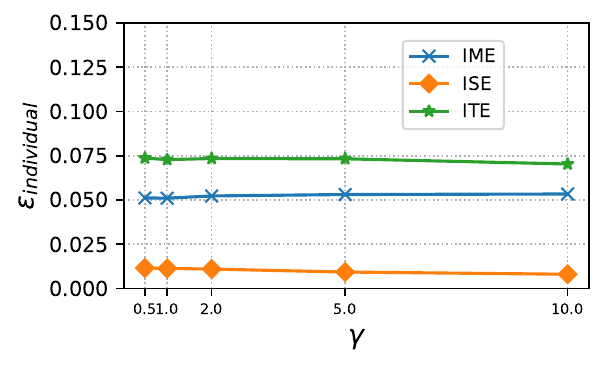}} 
    \\
    \subfigure[$\gamma=0.5$]{\includegraphics[width=.3\linewidth]{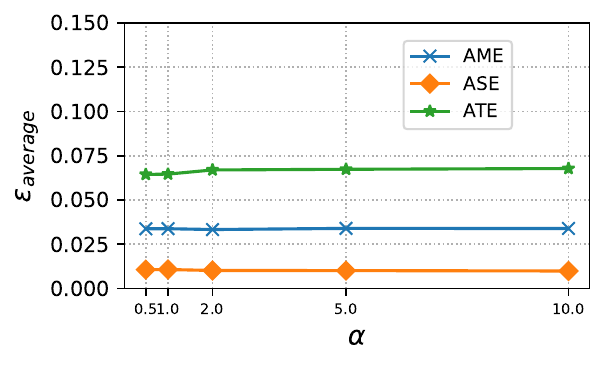}} \hspace{-0.1cm}
	\subfigure[$\gamma=1.0$]{\includegraphics[width=.3\linewidth]{figure/BC_hete_average_fixgamma1.0varyalpha.pdf}} \hspace{-0.1cm}
	\subfigure[$\gamma=2.0$]{\includegraphics[width=.3\linewidth]{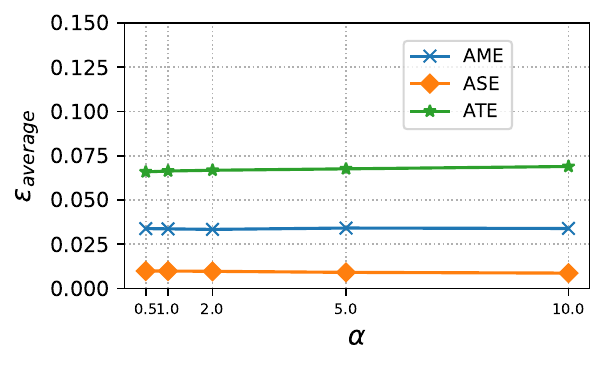}} 
    \\
    \subfigure[$\gamma=0.5$]{\includegraphics[width=.3\linewidth]{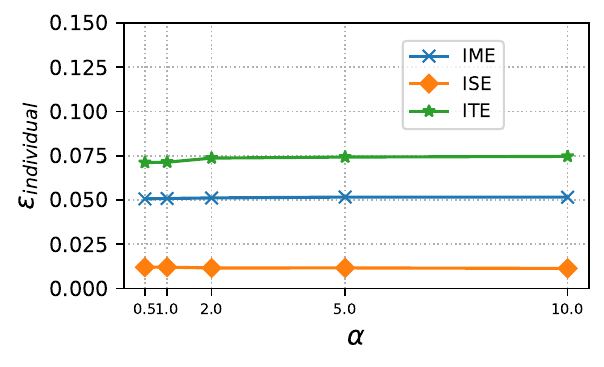}} \hspace{-0.1cm}
	\subfigure[$\gamma=1.0$]{\includegraphics[width=.3\linewidth]{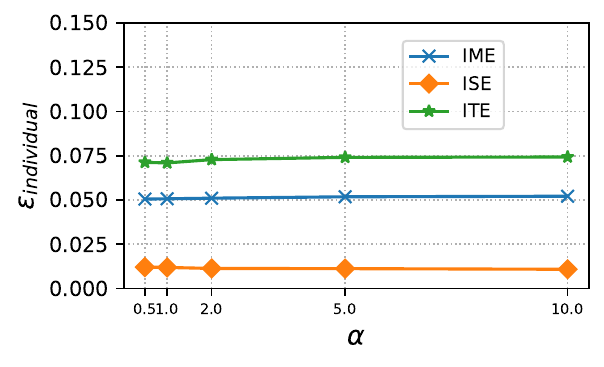}} \hspace{-0.1cm}
	\subfigure[$\gamma=2.0$]{\includegraphics[width=.3\linewidth]{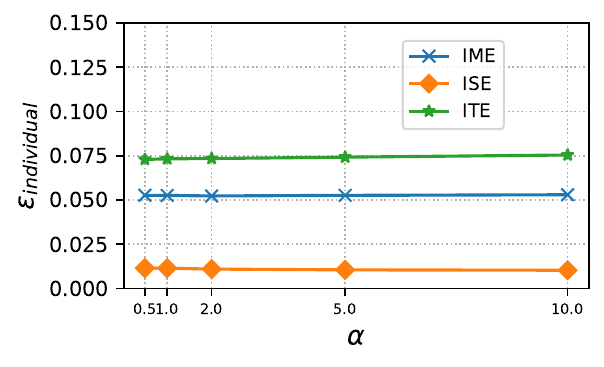}} 
    \\
	\caption{Additional hyperparameter sensitivity experimental results on BC(hete).} \label{app: fig  sensitivity2}
\end{figure*}

\begin{figure*}[!h]
	\centering
	\subfigure[$\alpha=0.5$]{\includegraphics[width=.3\linewidth]{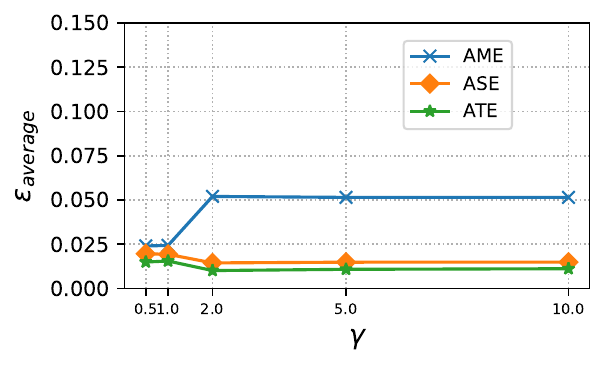}} \hspace{-0.1cm}
	\subfigure[$\alpha=1.0$]{\includegraphics[width=.3\linewidth]{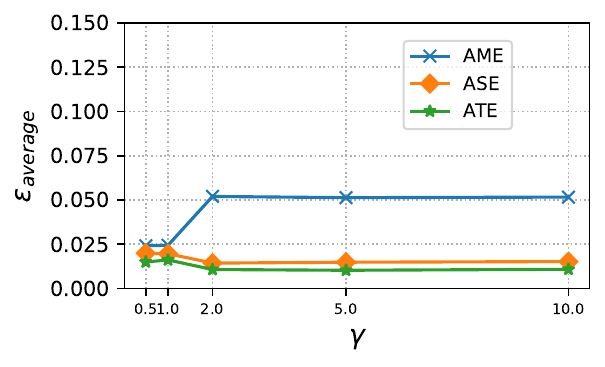}} \hspace{-0.1cm}
	\subfigure[$\alpha=2.0$]{\includegraphics[width=.3\linewidth]{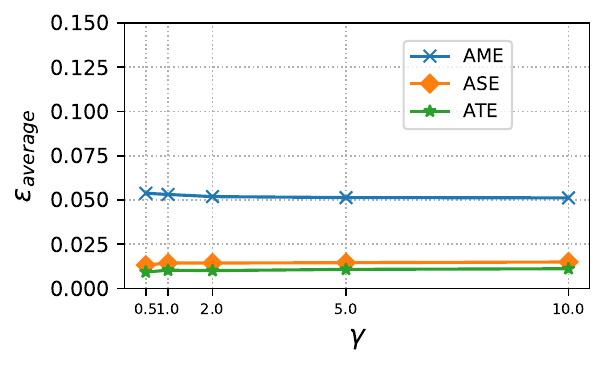}} 
    \\
 	\subfigure[$\alpha=0.5$]{\includegraphics[width=.3\linewidth]{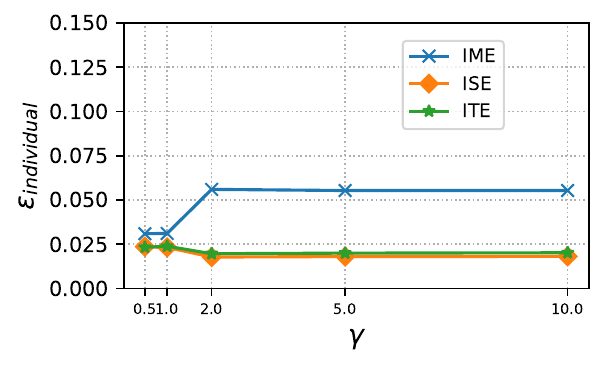}} \hspace{-0.1cm}
	\subfigure[$\alpha=1.0$]{\includegraphics[width=.3\linewidth]{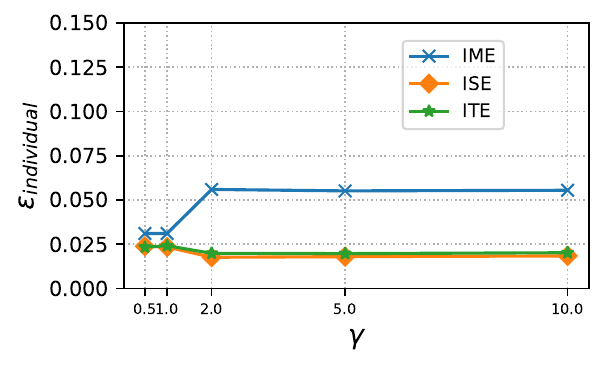}} \hspace{-0.1cm}
	\subfigure[$\alpha=2.0$]{\includegraphics[width=.3\linewidth]{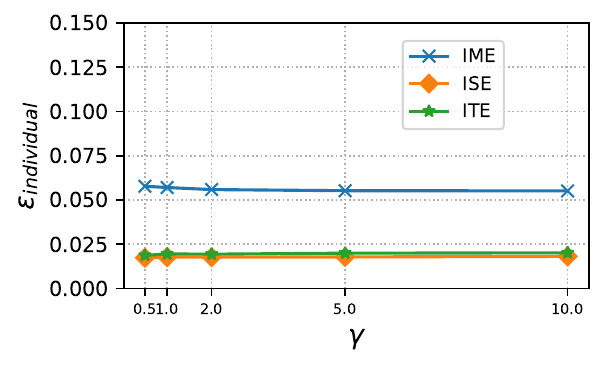}} 
    \\
    \subfigure[$\gamma=0.5$]{\includegraphics[width=.3\linewidth]{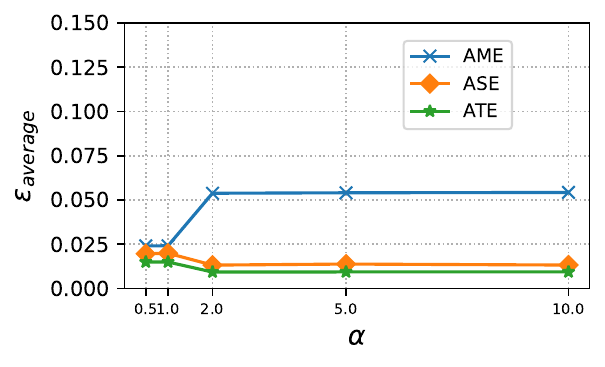}} \hspace{-0.1cm}
	\subfigure[$\gamma=1.0$]{\includegraphics[width=.3\linewidth]{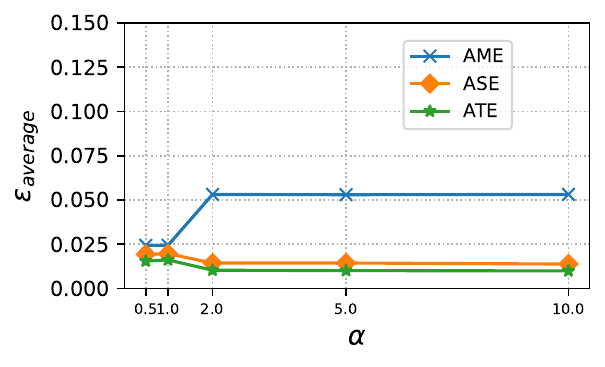}} \hspace{-0.1cm}
	\subfigure[$\gamma=2.0$]{\includegraphics[width=.3\linewidth]{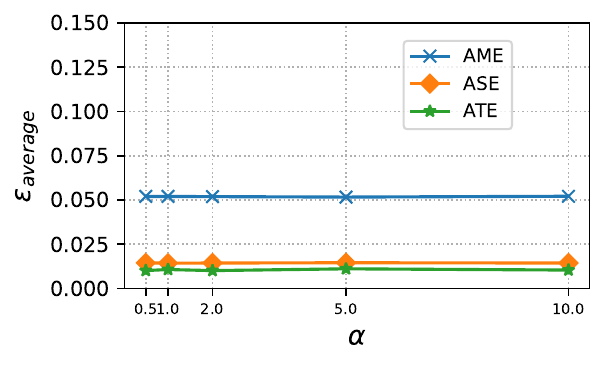}} 
    \\
    \subfigure[$\gamma=0.5$]{\includegraphics[width=.3\linewidth]{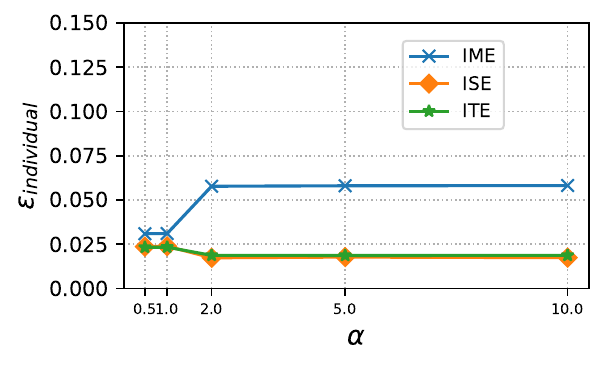}} \hspace{-0.1cm}
	\subfigure[$\gamma=1.0$]{\includegraphics[width=.3\linewidth]{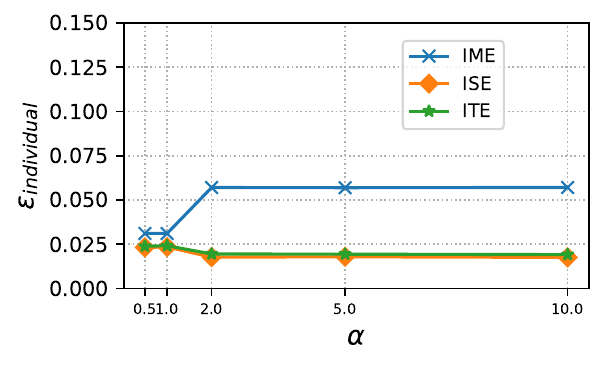}} \hspace{-0.1cm}
	\subfigure[$\gamma=2.0$]{\includegraphics[width=.3\linewidth]{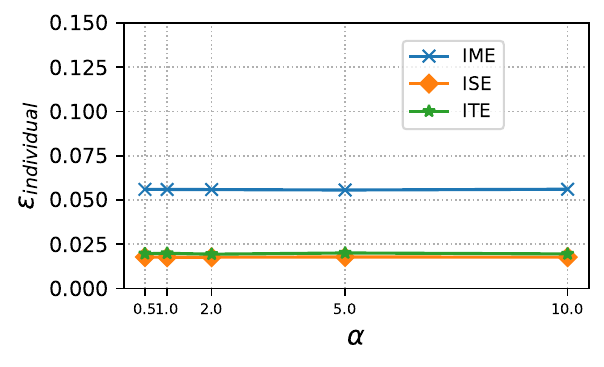}} 
    \\
	\caption{Additional hyperparameter sensitivity experimental results on Flickr.} \label{app: fig  sensitivity3}
\end{figure*}

\begin{figure*}[!h]
	\centering
	\subfigure[$\alpha=0.5$]{\includegraphics[width=.3\linewidth]{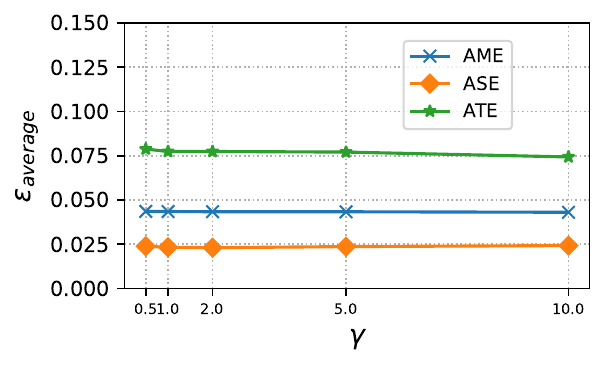}} \hspace{-0.1cm}
	\subfigure[$\alpha=1.0$]{\includegraphics[width=.3\linewidth]{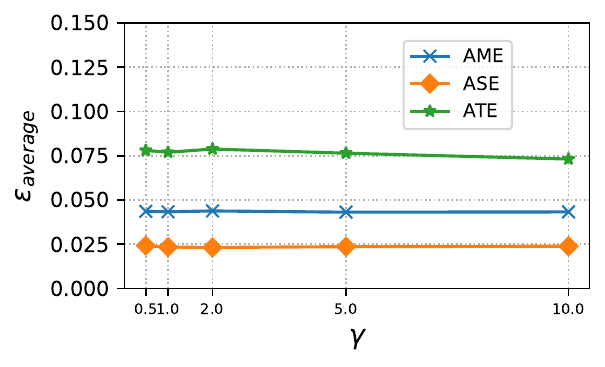}} \hspace{-0.1cm}
	\subfigure[$\alpha=2.0$]{\includegraphics[width=.3\linewidth]{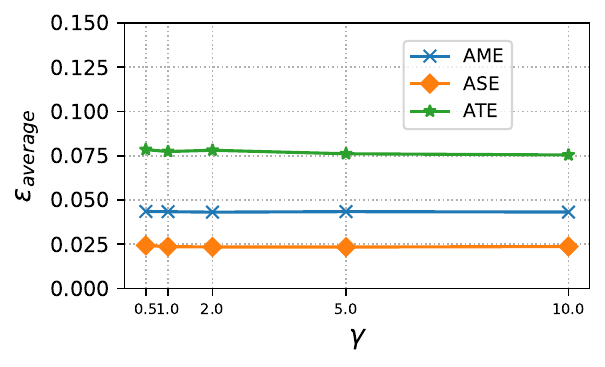}} 
    \\
 	\subfigure[$\alpha=0.5$]{\includegraphics[width=.3\linewidth]{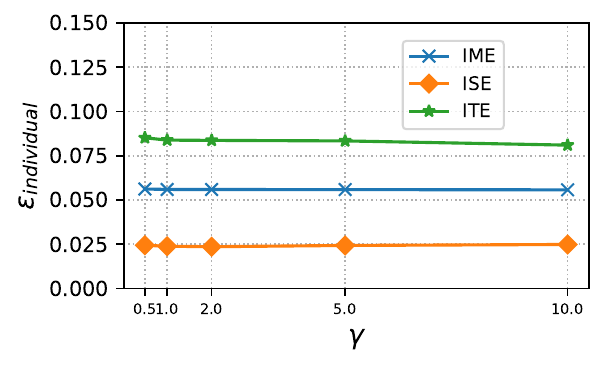}} \hspace{-0.1cm}
	\subfigure[$\alpha=1.0$]{\includegraphics[width=.3\linewidth]{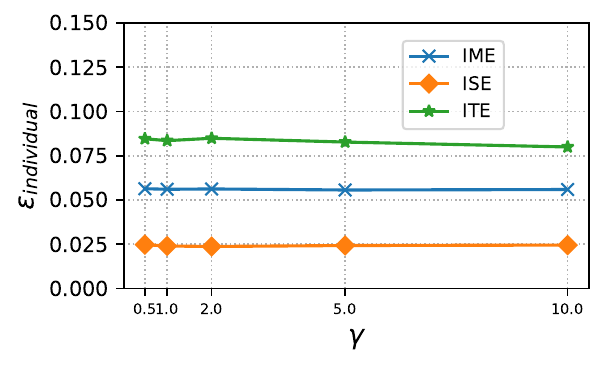}} \hspace{-0.1cm}
	\subfigure[$\alpha=2.0$]{\includegraphics[width=.3\linewidth]{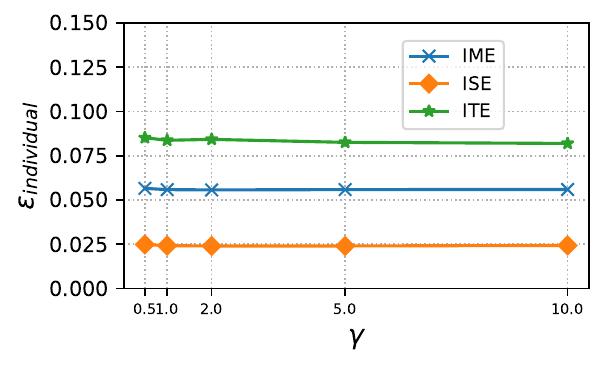}} 
    \\
    \subfigure[$\gamma=0.5$]{\includegraphics[width=.3\linewidth]{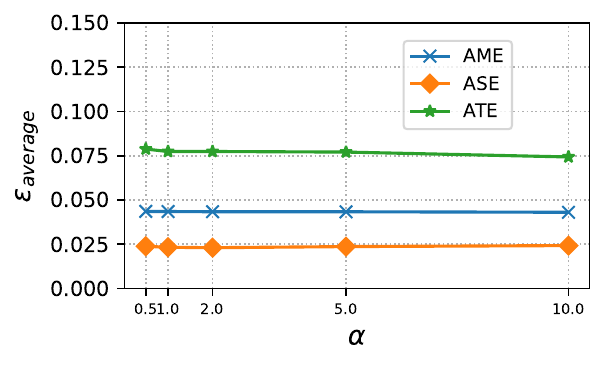}} \hspace{-0.1cm}
	\subfigure[$\gamma=1.0$]{\includegraphics[width=.3\linewidth]{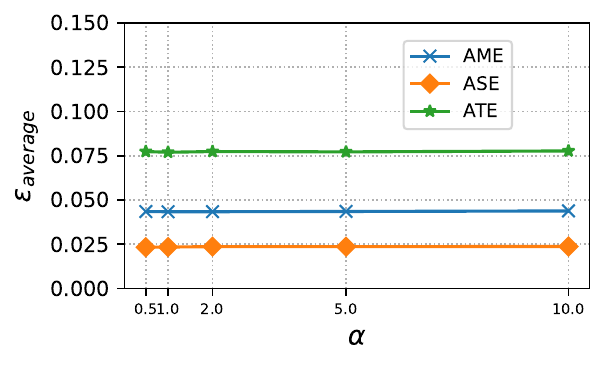}} \hspace{-0.1cm}
	\subfigure[$\gamma=2.0$]{\includegraphics[width=.3\linewidth]{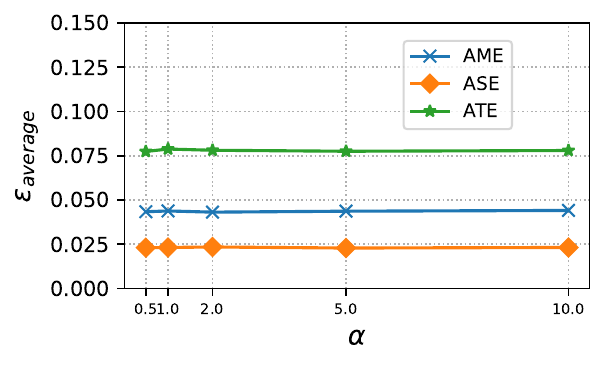}} 
    \\
    \subfigure[$\gamma=0.5$]{\includegraphics[width=.3\linewidth]{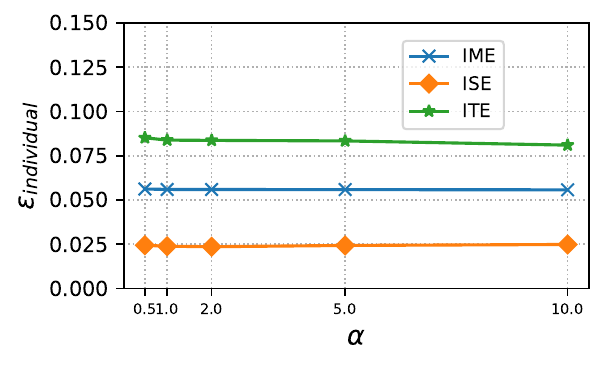}} \hspace{-0.1cm}
	\subfigure[$\gamma=1.0$]{\includegraphics[width=.3\linewidth]{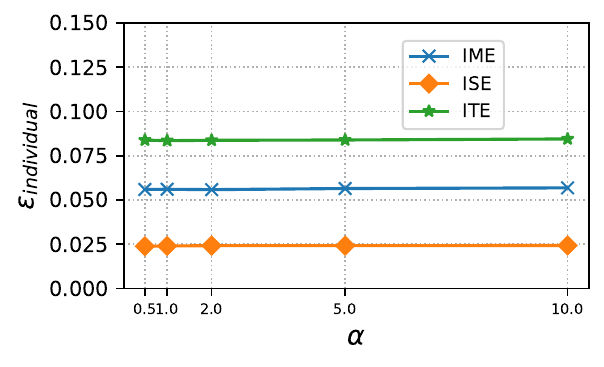}} \hspace{-0.1cm}
	\subfigure[$\gamma=2.0$]{\includegraphics[width=.3\linewidth]{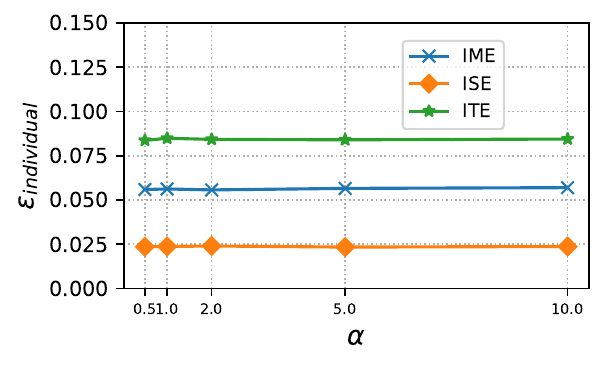}} 
    \\
	\caption{Additional hyperparameter sensitivity experimental results on Flickr(hete).} \label{app: fig  sensitivity4}
\end{figure*}

\begingroup
\setlength{\tabcolsep}{13pt} 
\renewcommand{\arraystretch}{1.5} 
\begin{table*}[!h]
\setlength{\abovecaptionskip}{0cm}
\caption{Additional experimental MAPE results on the BC dataset. The top result is highlighted in bold, and the runner-up is underlined.}
\label{app: tb atnt 1}
\resizebox{1.\textwidth}{!}{
\setlength{\tabcolsep}{4pt}
\begin{tabular}{cccccccccc}
\makecell[c]{Metric}                       & setting                        & effect     & CFR+z         & GEst     & ND+z          & NetEst        & RRNet & NDR  &  Ours    \\ \hline
\multicolumn{1}{c|}{\multirow{6}{*}{  $MAPE_{average}$}} & \multirow{3}{*}{Within Sample} & AME & $0.1009_{\pm 0.0678}$ & $0.1901_{\pm 0.1076}$ & $0.0868_{\pm 0.0794}$ & $0.1192 _{\pm 0.1284} $ & $\underline{0.0795}_{\pm 0.0617}$ & $0.5033 _{\pm 0.0080} $ &\pmb{ $0.0481 _{\pm 0.0365} $ }\\
\multicolumn{1}{c|}{}                     &                                & ASE & $0.3910_{\pm 0.1164}$ & $0.3366_{\pm 0.0269}$ & $0.4275_{\pm 0.0601}$ & $0.0576 _{\pm0.0318} $ & $\underline{0.0462}_{\pm 0.0229}$ & $0.4928_{\pm 0.0084} $ & \pmb{ $0.0360 _{\pm0.0365} $} \\ 
\multicolumn{1}{c|}{}                     &                                & ATE & $0.1401_{\pm 0.0907}$ & $0.0738_{\pm 0.0519}$ & $0.1870_{\pm 0.0536}$ & $0.0579 _{\pm0.0308} $ & $0.0413_{\pm 0.0400}$ & \pmb{ $0.0142 _{\pm 0.0074} $} & $ \underline{0.0267} _{\pm0.0203} $ \\ \cline{2-10} 

\multicolumn{1}{c|}{}                     & \multirow{3}{*}{Out-of Sample} & AME & $0.1009_{\pm 0.0681}$ & $0.1901_{\pm 0.1078}$ & $0.0897_{\pm 0.0790}$ & $0.1202 _{\pm 0.1288} $ & $\underline{0.0802}_{\pm 0.0623}$ & / & \pmb{$0.0481 _{\pm 0.0364} $ }\\
\multicolumn{1}{c|}{}                     &                                & ASE & $0.3936_{\pm 0.1164}$ & $0.3354_{\pm 0.0272}$ & $0.4245_{\pm 0.0590}$ & $0.0550 _{\pm0.0331} $ & $\underline{0.0466}_{\pm 0.0237}$ & / & \pmb{ $0.0357 _{\pm0.0366} $} \\ 
\multicolumn{1}{c|}{}                     &                                & ATE & $0.1395_{\pm 0.0912}$ & $0.0721_{\pm 0.0524}$ & $0.1840_{\pm 0.0530}$ & $0.0582 _{\pm0.0305} $ & $\underline{0.0418}_{\pm 0.0400}$ & / & \pmb{ $0.0266 _{\pm0.0203} $} \\ \hline

\multicolumn{1}{c|}{\multirow{6}{*}{  $MAPE_{individual}$}} & \multirow{3}{*}{Within Sample} & IME & $0.1099_{\pm 0.0586}$ & $0.2024_{\pm 0.0955}$ & $0.0975_{\pm 0.0735}$ & $0.1248 _{\pm 0.1244} $ & $\underline{0.0802}_{\pm 0.0611}$ & / & \pmb{ $0.0486 _{\pm 0.0360} $} \\ 
\multicolumn{1}{c|}{}                     &                                &  ISE & $0.3910_{\pm 0.1164}$ & $0.3369_{\pm 0.0270}$ & $0.4275_{\pm 0.0601}$ & $0.0743 _{\pm0.0278} $ & $\underline{0.0518}_{\pm 0.0199}$ & / & \pmb{ $0.0372 _{\pm0.0361} $} \\
\multicolumn{1}{c|}{}                     &                                & ITE & $0.1467_{\pm 0.0820}$ & $0.0912_{\pm 0.0337}$ & $0.1870_{\pm 0.0535}$ & $0.0591 _{\pm0.0298} $ & $\underline{0.0438}_{\pm 0.0378}$ & / & \pmb{ $0.0273 _{\pm0.0196} $} \\ \cline{2-10} 

\multicolumn{1}{c|}{}                     & \multirow{3}{*}{Out-of Sample} &  IME & $0.1113_{\pm 0.0575}$ & $0.2014_{\pm 0.0968}$ & $0.0999_{\pm 0.0739}$ & $0.1262 _{\pm 0.1245} $ & $\underline{0.0808}_{\pm 0.0618}$ & / & \pmb{ $0.0486 _{\pm 0.0359} $} \\
\multicolumn{1}{c|}{}                     &                                & ISE & $0.3936_{\pm 0.1164}$ & $0.3358_{\pm 0.0273}$ & $0.4245_{\pm 0.0590}$ & $0.0716 _{\pm0.0263} $ & $\underline{0.0523}_{\pm 0.0201}$ & / & \pmb{ $0.0370 _{\pm0.0360} $}  \\ 
\multicolumn{1}{c|}{}                     &                                & ITE & $0.1477_{\pm 0.0803}$ & $0.0894_{\pm 0.0345}$ & $0.1843_{\pm 0.0527}$ & $0.0594 _{\pm0.0296} $ & $\underline{0.0443}_{\pm 0.0378}$ & / & \pmb{ $0.0272 _{\pm0.0196} $} \\ \hline

\end{tabular}
}
\end{table*}

We add more experimental results, which are consistent with our analysis in the main body. Table \ref{table: flicker homo} shows the performances of all baselines running on Flickr(homo) dataset. Table \ref{app: tb: bchete} shows the performances of all baselines running on the BC(hete) dataset.  Table \ref{app: tb: flhete} shows the performances of all baselines running on the Flickr(hete) dataset. Figures \ref{app: fig  sensitivity1}, \ref{app: fig  sensitivity2}, \ref{app: fig  sensitivity3}, and \ref{app: fig  sensitivity4} show the additional sensitivity results on the BC, BC(hete), Flickr, and Flickr(hete) datasets respectively.

It could be also informative to report MAPE as well. Thus, we provide alternative MAPE results for Table \ref{table: bc} in Table \ref{app: tb atnt 1}. Regarding AME, ASE and ATE, we use $MAPE_{average}=|\frac{\tau - \hat{\tau}}{\tau}|$. Regarding IME, ASE and ITE, we use $MAPE_{individual}=\sum_i^n ||\frac{\tau_i - \hat{\tau_i}}{\tau_i}| |$ where $n$ is the sample size. 

\section{Additional Sensitivity Results on Hyperparameter $B$}

We provide additional results by varying hyperparameter $B$ with fixed $\alpha=\beta=0.5$ on different datasets in Figure \ref{app: fig  sensitivityB}.

\begin{figure*}[!h]
	\centering
	\subfigure[on BC]{\includegraphics[width=.2\linewidth]{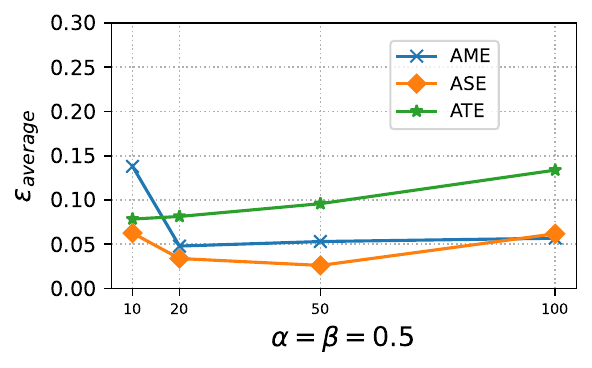}} \hspace{-0.1cm}
	\subfigure[on BC]{\includegraphics[width=.2\linewidth]{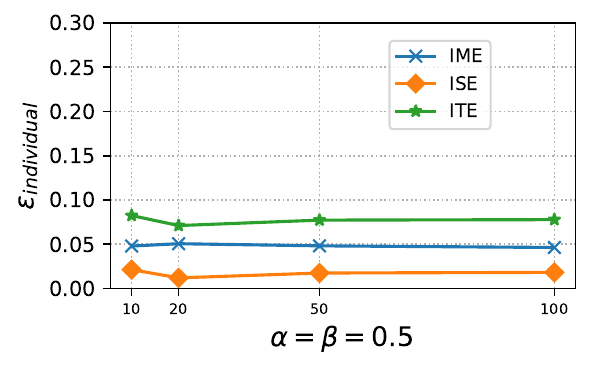}} \hspace{-0.1cm}
 	\subfigure[on BC(hete)]{\includegraphics[width=.2\linewidth]{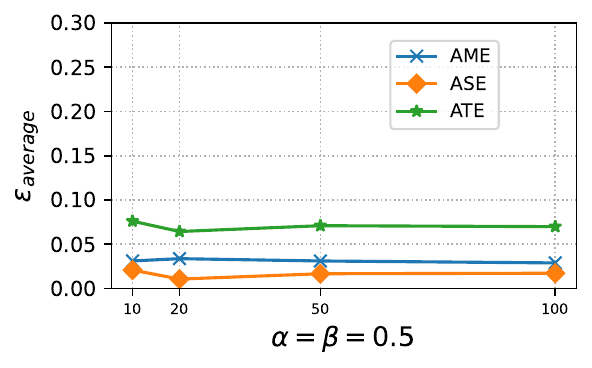}} \hspace{-0.1cm}
	\subfigure[on BC(hete)]{\includegraphics[width=.2\linewidth]{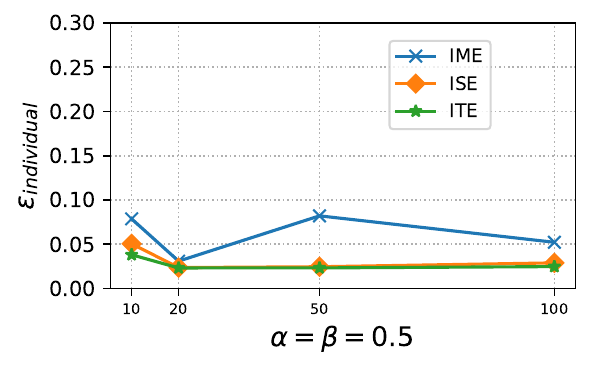}} \hspace{-0.1cm}
    \\
    \subfigure[on Flickr]{\includegraphics[width=.2\linewidth]{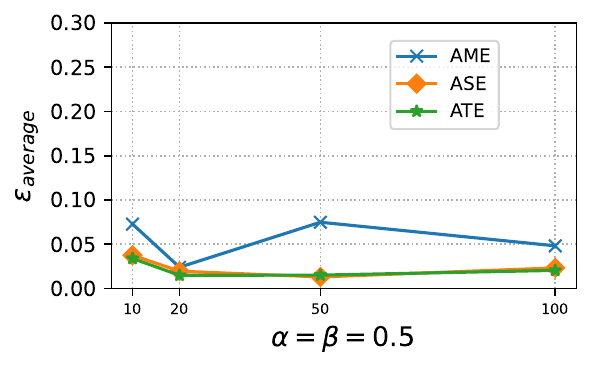}} \hspace{-0.1cm}
	\subfigure[on Flickr]{\includegraphics[width=.2\linewidth]{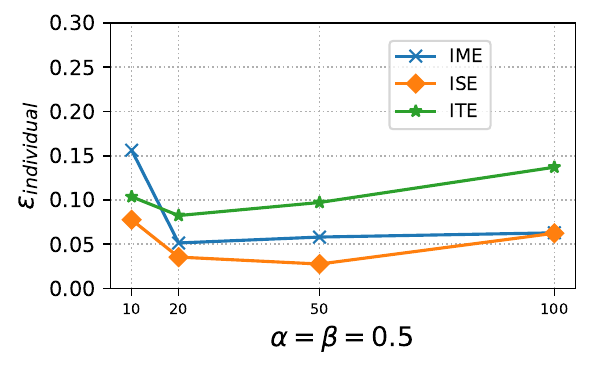}} \hspace{-0.1cm}
    \subfigure[on Flickr(hete)]{\includegraphics[width=.2\linewidth]{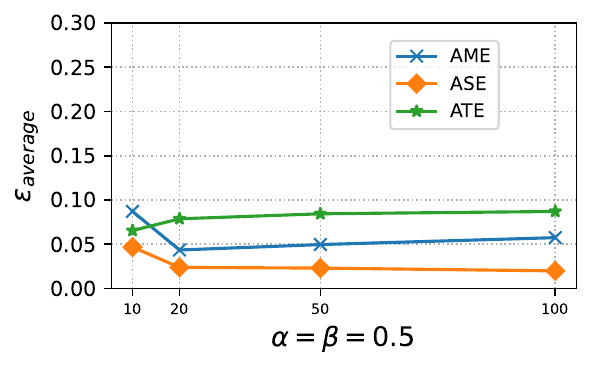}} \hspace{-0.1cm}
	\subfigure[on Flickr(hete)]{\includegraphics[width=.2\linewidth]{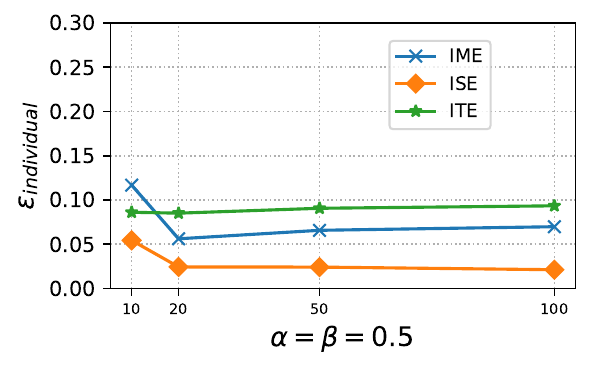}} \hspace{-0.1cm}
    \\
	\caption{Additional hyperparameter sensitivity experimental results on different datasets.} \label{app: fig  sensitivityB}
\end{figure*}

\section{Additional Results on Heterogenous Datasets}

We also conduct experiments on simulated datasets to verify the effectiveness of the heterogeneous spillover effect estimation. Specifically,  we reuse the datasets BC in our paper and modify their outcome model as follows:
\begin{equation}
    y_i(t_i,z_i) = t_i + z_i + po_i + 0.5 \times po_{\mathcal N_i}  + t_i \times( po_i + 0.5 \times po_{\mathcal N_i} ) + z_i \times( 0.5 \times po_i + po_{\mathcal N_i} ) + e_i,
\end{equation}
where $e_i$ is a Gaussian noise term. As same in our paper, $po_i = Sigmoid(w_2 \times x_i)$, and $ po_{\mathcal N_i}$ is the averages of $po_i$. The term $z_i \times( 0.5 \times po_i + po_{\mathcal N_i} )$ is introduced to allow heterogeneous spillover effects. We denote this dataset as BC(hete\_z). 

The experimental results on this dataset are shown in Figure \ref{app: tb: hetez}. Compared with the existing methods, TNet performs better, showing lower bias in terms of estimation error. In particular, TNet performs the best in terms of average/individual spillover effect (ASE/AIE) estimation error, which means that TNet is capable of estimating heterogeneous spillover effects.

\begingroup
\setlength{\tabcolsep}{13pt} 
\renewcommand{\arraystretch}{1.5} 
\begin{table*}[!h]
\setlength{\abovecaptionskip}{0cm}
\caption{Additional experimental results on the BC(hete\_z) dataset. The top result is highlighted in bold, and the runner-up is underlined.}
\label{app: tb: hetez}
\resizebox{1.\textwidth}{!}{
\setlength{\tabcolsep}{4pt}
\begin{tabular}{cccccccccc}
\makecell[c]{Metric}                       & setting                        & effect     & CFR+z         & GEst     & ND+z          & NetEst        & RRNet & NDR  &  Ours    \\ \hline
\multicolumn{1}{c|}{\multirow{6}{*}{ $\varepsilon_{average}$ }} & \multirow{3}{*}{Within Sample} & AME & $0.0824 _{\pm 0.0786 }$ & $0.2254 _{\pm 0.1597 }$ & $0.1383 _{\pm 0.1019 }$ & $0.0770 _{\pm 0.0425}$ & $\underline{0.0653} _{\pm 0.0432 }$ & $ 0.8819 _{\pm 0.0406} $ & $\pmb { 0.0334 _{\pm 0.0319} }$ \\ 
\multicolumn{1}{c|}{}                     &                                & ASE & $0.2295 _{\pm 0.0840 }$ & $0.0825 _{\pm 0.0612 }$ & $0.2851 _{\pm 0.0150 }$ & $0.0539 _{\pm 0.0260}$ & $\underline{0.0310} _{\pm 0.0241 }$ & $ 0.4366 _{\pm0.0182} $ & $\pmb { 0.0183 _{\pm0.0125} } $ \\ 
\multicolumn{1}{c|}{}                     &                              &ATE & $0.1925 _{\pm 0.1465 }$ & $0.1375 _{\pm 0.1017 }$ & $0.4325 _{\pm 0.1982 }$ & $0.1782 _{\pm 0.0800}$ & $0.1342 _{\pm 0.1177 }$ & $\pmb {0.0836 _{\pm0.0357} }$ & ${ \underline{0.0925} _{\pm0.0443} }$ \\ \cline{2-10} 

\multicolumn{1}{c|}{}                     & \multirow{3}{*}{Out-of Sample} & AME & $0.0834 _{\pm 0.0783 }$ & $0.2197 _{\pm 0.1547 }$ & $0.1303 _{\pm 0.0987 }$ & $0.0754 _{\pm 0.0463}$ & $\underline{0.0680} _{\pm 0.0464 }$ & / & $\pmb { 0.0341 _{\pm 0.0310} }$ \\ 
\multicolumn{1}{c|}{}                     &                                &ASE & $0.2279 _{\pm 0.0837 }$ & $0.0808 _{\pm 0.0624 }$ & $0.2831 _{\pm 0.0118 }$ & $0.0517 _{\pm 0.0274}$ & $\underline{0.0327} _{\pm 0.0232 }$ & / & $ \pmb { 0.0202 _{\pm0.0104} }$ \\ 
\multicolumn{1}{c|}{}                     &                                &ATE & $0.1888 _{\pm 0.1517 }$ & $0.1430 _{\pm 0.1022 }$ & $0.4230 _{\pm 0.2124 }$ & $0.1864 _{\pm 0.0773}$ & $\underline{0.1340} _{\pm 0.1225 }$ & / & $ \pmb { 0.0953 _{\pm0.0433} }$ \\ \hline

\multicolumn{1}{c|}{\multirow{6}{*}{$\varepsilon_{individual}$}} & \multirow{3}{*}{Within Sample} & IME & $0.1314 _{\pm 0.0645 }$ & $0.2638 _{\pm 0.1370 }$ & $0.1971 _{\pm 0.0754 }$ & $0.1284 _{\pm 0.0227}$ & $\underline{0.0864 }_{\pm 0.0410 }$ & / & $\pmb { 0.0503 _{\pm 0.0275} }$ \\ 
\multicolumn{1}{c|}{}                     &                                & ISE & $0.2340 _{\pm 0.0846 }$ & $0.1071 _{\pm 0.0407 }$ & $0.2888 _{\pm 0.0169 }$ & $0.0703 _{\pm 0.0215}$ & $\underline{0.0541} _{\pm 0.0184 }$ & / & $\pmb { 0.0359 _{\pm0.0057} }$ \\ 
\multicolumn{1}{c|}{}                     &                                & ITE & $0.2460 _{\pm 0.1184 }$ & $0.2077 _{\pm 0.0707 }$ & $0.4803 _{\pm 0.1674 }$ & $0.2362 _{\pm 0.0644}$ & $0.1542 _{\pm 0.1075 }$ & / & $ \pmb { 0.1065 _{\pm0.0396} }$ \\ \cline{2-10} 

\multicolumn{1}{c|}{}                     & \multirow{3}{*}{Out-of Sample} & IME & $0.1365 _{\pm 0.0624 }$ & $0.2589 _{\pm 0.1347 }$ & $0.2043 _{\pm 0.0679 }$ & $0.1315 _{\pm 0.0243}$ & $\underline{0.0833} _{\pm 0.0452 }$ & / & $\pmb { 0.0513 _{\pm 0.0266} } $ \\ 
\multicolumn{1}{c|}{}                     &                               &ISE & $0.2327 _{\pm 0.0844 }$ & $0.1074 _{\pm 0.0405 }$ & $0.2877 _{\pm 0.0141 }$ & $0.0693 _{\pm 0.0188}$ & $\underline{0.0537} _{\pm 0.0192 }$ & / & $ \pmb { 0.0381 _{\pm0.0055} }$ \\ 
\multicolumn{1}{c|}{}                     &                                & ITE& $0.2510 _{\pm 0.1190 }$ & $0.2228 _{\pm 0.0688 }$ & $0.4833 _{\pm 0.1668 }$ & $0.2451 _{\pm 0.0648}$ & $\underline{0.1547} _{\pm 0.1130 }$ & / & $\pmb { 0.1109 _{\pm0.0387} }$ \\ \hline
\end{tabular}
}
\end{table*}

\section{Real-world Application}

We apply our method to a real-world application, studying the impact of the installation of selective catalytic or selective non-catalytic reduction (SCR/SNCR) on the NO$_x$ emissions of 473 power plants in America. This dataset is homologous to that used in \cite{papadogeorgou2019adjusting, liu2023nonparametric}. Specifically, the installation of SCR/SNCR is the treatment variable, NO$_x$ emission is selected as the outcome, and the 18 environmental factors are the covariates, including $4th$ maximum temperature over the study period and so on. Following \cite{liu2023nonparametric}, we select the five closest power plants as a power plant's neighbors. The reason for selecting NO$_x$ emissions as our outcome is that the analysis of NO$_x$ is not expected to suffer from unmeasured spatial confounding as it is analyzed in \cite{papadogeorgou2019adjusting}. Moreover, we estimate the confidence interval (CI) by the bootstrap method. Specifically, we run our methods serval times (e.g., 100 in the real-world experiment), and obtain the approximated confidence interval ($95\%$) using the resulting multiple estimated value by the bootstrap method. We use the SciPy library to get the CI.

Analysis: First, we found that the average main effect, $\hat{\psi}(1,0)- \hat{\psi}(0,0)$ is  $-117.7$ ($95\%$ CI: $-127, -107$), indicating the installation of SCR/SNCR can reduce NO$_x$ emissions. This result is consistent with [1] which shows that the installation of SCR/SNCR would reduce on average $205.1$ tons of NO$_x$ emissions ($95\%$ CI $4,406$).
Second, we found that $\hat{\psi}(1,z) - \hat{\psi}(1,0)$ with $z$ from $0.1$ to $0.4$ become smaller, i.e., $-117.2, 160.4, 117.2, 47.1$ ($95\%$ CI $-121,-114$; $-165,-156$; $-123,-112$; $-53,-41$). One possible reason is that a plant power with SCR/SNCR installation would be a substitution for its neighbors without SCR/SNCR installation, making its larger NO$_x$ emission reduction with less installation of SCR/SNCR of neighbor.
Moreover, we observed that the total effect $\hat{\psi}(1,z) - \hat{\psi}(0,z)$ with $z$ from $0.1$ to $0.4$ become larger, i.e., $ -151.5, -205.8, -319.0, -440.0$ ($95\%$ CI:  $-169,-143$; $-213,-199$;$-326,-312$; $-447,-433$). This indicates that the more power plants install SCR/SNCR, the less NO$_x$ they emit. Overall, our study underscores the effectiveness of SCR/SNCR installation in reducing NO$_x$ emissions and indicates a potential substitution effect among neighboring plants. These findings contribute to a better understanding of the real-world implications of adopting emission reduction strategies in the energy industry.


\end{document}